\newcommand{\tran}{{t}}
\def\##1\#{\begin{align}#1\end{align}}
\def\$#1\${\begin{align*}#1\end{align*}}
\newcommand{\squishlist}{
\begin{list}{{{\small{$\bullet$}}}}
{\setlength{\itemsep}{3pt}      \setlength{\parsep}{1pt}
\setlength{\topsep}{1pt}       \setlength{\partopsep}{0pt}
\setlength{\leftmargin}{1em} \setlength{\labelwidth}{1em}
\setlength{\labelsep}{0.5em} } }
\newcommand{\squishend}{  \end{list}  }
\begin{document}

\runningauthor{Wei, Fu, Liu, Li, Yang, and  Wang}

\twocolumn[

\aistatstitle{Sample Elicitation}

\aistatsauthor{Jiaheng Wei$^*$ \And Zuyue Fu$^*$  \And  Yang Liu}
\aistatsaddress{UC Santa Cruz\\jiahengwei@ucsc.edu \And Northwestern University \\zuyue.fu@u.northwestern.edu  \And UC Santa Cruz\\yangliu@ucsc.edu}%

\aistatsauthor{Xingyu Li \And Zhuoran Yang \And Zhaoran Wang }
\aistatsaddress{UC Santa Cruz\\xli279@ucsc.edu \And Princeton University\\zy6@princeton.edu \And Northwestern University \\zhaoranwang@gmail.com}
]


\begin{abstract}
It is important to collect credible training samples $(x,y)$ for building data-intensive learning systems (e.g., a deep learning system). 
Asking people to report complex distribution $p(x)$, though theoretically viable, is challenging in practice. This is primarily due to the cognitive loads required for human agents to form the report of this highly complicated information. 
While classical elicitation mechanisms apply to eliciting a complex and generative (and continuous) distribution $p(x)$, we are interested in eliciting samples $x_i \sim p(x)$ from agents directly. We coin the above problem \emph{sample elicitation}. This paper introduces a deep learning aided method to incentivize credible sample contributions from self-interested and rational agents. 
We show that with an accurate estimation of a certain $f$-divergence function we can achieve approximate incentive compatibility in eliciting truthful samples. We then present an efficient estimator with theoretical guarantees via studying the variational forms of the $f$-divergence function. We also show a connection between this sample elicitation problem and $f$-GAN, and how this connection can help reconstruct an estimator of the distribution based on collected samples. Experiments on synthetic data, MNIST, and CIFAR-10 datasets demonstrate that our mechanism elicits truthful samples. Our implementation is available at \url{ https://github.com/weijiaheng/Credible-sample-elicitation.git}.

\end{abstract}

\section{Introduction}

The availability of a large number of credible samples is crucial for building high-fidelity machine learning models. This is particularly true for deep learning systems that are data-hungry. Arguably, the most scalable way to collect a large amount of training samples is to crowdsource from a decentralized population of agents who hold relevant data. The most popular example is the build of ImageNet \citep{deng2009imagenet}. 

The main challenge in eliciting private information is to properly score reported information such that the self-interested agent who holds private information will be incentivized to report truthfully.
Most of the existing works focused on eliciting simple categorical information, such as binary labels or multi-class categorical information. These solutions are not properly defined or scalable to more continuous or high-dimensional information elicitation tasks. For example, suppose that we are interested in collecting the calorie information of a set of food pictures, we can crowdsource to ask crowd workers to tell us how many calories are there in each particular image of food (how many calories in the hot dog shown in the picture). There exists no computable elicitation/scoring mechanism for reporting this more continuous spectrum of data (reported calorie).

\vskip5pt
In this work \footnote{Correspondence to: \{yangliu, jiahengwei\}@ucsc.edu.}, we aim to collect credible samples from self-interested agents via studying the problem of \emph{sample elicitation}. Instead of asking each agent to report the entire distribution $p$, we hope to elicit samples drawn from the distribution $\PP$ truthfully. We consider the samples $x_p \sim \PP$ and $x_q \sim \QQ$. In analogy to strictly proper scoring rules\footnote{Our specific formulation and goal will be different in details.}, we aim to design a score function $S$ s.t. $\E_{x \sim \PP}[S(x_p,x')] > \E_{x \sim \PP}[S(x_q,x')]$ for any $q \neq p$, where $x'$ is a reference answer that can be defined using elicited reports.

Our challenge lies in accurately evaluating reported samples. We first observe that the $f$-divergence function between two properly defined distributions of the samples can serve the purpose of incentivizing truthful reports of samples.
We proceed with using deep learning techniques to solve the score function design problem via a data-driven approach.
We then propose a variational approach that enables us to estimate the divergence function efficiently using reported samples, via a variational form of the $f$-divergence function, through a deep neural network. These estimation results help us establish approximate incentive compatibility in eliciting truthful samples. It is worth noting that our framework also generalizes to the setting where there is no access to ground truth samples and we can only rely on reported samples. There we show that our estimation results admit an approximate Bayesian Nash Equilibrium for agents to report truthfully. 
Furthermore, in our estimation framework, 
we use a generative adversarial approach to reconstruct the distribution from the elicited samples. In addition to the analytical results, we demonstrate the effectiveness of our mechanism in eliciting truthful samples empirically using MNIST and CIFAR-10 datasets.

We want to emphasize that the deep learning based estimators considered above can handle complex data. With our deep learning solution, we are further able to provide estimates for the divergence functions used for our scoring mechanisms with provable finite sample complexity. In this paper, we focus on developing theoretical guarantees - other parametric families either can not handle complex data, e.g., it is hard to handle images using kernel methods, or do not have provable guarantees on the sample complexity.

Our results complement the elicitation task in crowdsourcing by providing a method to elicit feature data $X \sim \PP(X|Y)$, as compared to previous works mainly focusing on eliciting labels $Y \sim \PP(Y|X)$. The difference is previous works focus on eliciting a label for a particular image, but our method enables elicitation of an image in response to a particular label (suppose that we are interested in collecting training images that contain “Cats”), which is inherently a more complex piece of information to evaluate and score with. This capability can help us build high-quality datasets for more comprehensive applications from scratch.

\noindent{\bf Related work.}
The most relevant literature to our paper is \emph{strictly proper scoring rules} and \emph{property elicitation}. Scoring rules were developed for eliciting truthful prediction (probability) \citep{Brier:50,Win:69,Savage:71,Matheson:76,Jose:06,Gneiting:07}. 
Characterization results for strictly proper scoring rules are given in \cite{McCarthy:56,Savage:71,Gneiting:07}. Property elicitation notices the challenge of eliciting complex distributions \citep{lambert2008eliciting,steinwart2014elicitation,frongillo2015vector}. For instance, \cite{abernethy2012characterization} characterize the score functions for eliciting linear properties, and \cite{frongillo2015elicitation} study the complexity of eliciting properties. Another line of relevant research is peer prediction, where solutions can help elicit private information when the ground truth verification might be missing \citep{de2016incentives,gao2016incentivizing,kong2016putting,kong2018water,kong2019information}. 
Our work complements the information elicitation literature via studying the question of sample elicitation using a variational approach to estimate $f$-divergences. A parallel work has also studied the variational approach for eliciting truthful information \cite{schoenebeck2020learning}. Our work focuses more on formalizing the sample elicitation problem. In addition, we provide sample complexity guarantees to our theorems by offering deep neural network-aided estimators and contribute to the community a practical solution.

Our work is also related to works on divergence estimation.  The simplest way to estimate divergence starts with the estimation of the density function \citep{wang2005divergence, lee2006estimation, wang2009divergence, zhang2014nonparametric, han2016minimax}.  Another method based on the variational form \citep{donsker1975asymptotic} of the divergence function comes into play \citep{broniatowski2004parametric, broniatowski2009parametric, nguyen2010estimating, kanamori2011f, ruderman2012tighter, sugiyama2012density}, where the estimation of divergence is modeled as the estimation of density ratio between two distributions.    
  The variational form of the divergence function also motivates the well-known Generative Adversarial Network (GAN) \citep{goodfellow2014generative}, which learns the distribution by minimizing the Kullback-Leibler divergence.  Follow-up works include \cite{nowozin2016f, arjovsky2017wasserstein, gulrajani2017improved, bellemare2017cramer}, with theoretical analysis in \cite{liu2017approximation, arora2017generalization, liang2018well, gao2019generative}.  See also \cite{gao2017density, bu2018estimation} for this line of work.

\noindent{\bf Notations.} 
For the distribution $\PP$, we denote by $\PP_n$ the empirical distribution given a set of samples $\{x_i\}_{i = 1}^n$ following $\PP$, i.e., $\PP_n = 1/n\cdot \sum_{i = 1}^n \delta_{x_i}$, where $\delta_{x_i}$ is the Dirac measure at $x_i$. 
We denote by $\|v\|_s = (\sum_{i = 1}^d |v^{(i)}|^s)^{1/s}$ the $\ell_s$ norm of the vector $v\in\RR^d$ where $1\leq s< \infty$ and $v^{(i)}$ is the $i$-th entry of $v$. We also denote by $\|v\|_\infty = \max_{1\leq i\leq d} |v^{(i)}|$ the $\ell_\infty$ norm of $v$. 
For any real-valued continuous function $f\colon \cX\to \RR$, we denote by $\|f\|_{L_s(\PP)}:=[\int_\cX |f(x)|^s \ud \PP]^{1/s}$ the $L_s(\PP)$ norm of $f$ and $\|f\|_s:=[\int_\cX |f(x)|^s \ud \mu]^{1/s}$ the $L_s(\mu)$ norm of $f(\cdot)$, where $\mu$ is the Lebesgue measure.  Also, we denote by $\|f\|_\infty = \sup_{x\in\cX} |f(x)|$ the $L_\infty$ norm of $f(\cdot)$. 
For any real-valued functions $g(\cdot)$ and $h(\cdot)$ defined on some unbounded subset of the real positive numbers, such that $h(\alpha)$ is strictly positive for all large enough values of $\alpha$, we write $g(\alpha)\lesssim h(\alpha)$ and $g(\alpha) = \cO(h(\alpha))$ if $|g(\alpha)| \leq c\cdot h(\alpha)$ for some positive absolute constant $c$ and any $\alpha > \alpha_0$, where $\alpha_0$ is a real number.  We denote by $[n]$ the set $\{1, 2, \ldots, n\}$.


\section{Preliminary}

\subsection{Sample Elicitation}
We consider two scenarios. We start with an easier case where we, as the mechanism designer, have access to a certain number of group truth samples. 
Then we move to the harder case where the inputs to our mechanism can only be elicited samples from agents. 

\paragraph{Multi-sample elicitation with ground truth samples.}
Suppose that the agent holds $n$ samples, with each of them independently drawn from $\PP$, i.e., $x_i \sim \PP$ \footnote{Though we use $x$ to denote the samples we are interested in, $x$ potentially includes both the feature and labels $(x,y)$ as in the context of supervised learning.} for $i\in[n]$. The agent can report each sample arbitrarily, which is denoted as $r_i(x_i): \Omega \rightarrow \Omega$. There are $n$ data $\{x^*_i\}_{i\in[n]}$ independently drawn from the ground truth distribution $\QQ$\footnote{The number of ground truth samples can be different from $n$, but we keep them the same for simplicity of presentation. It will mainly affect the terms $\delta$ and $\epsilon$ in our estimations.}. $x_i$s and $x_i^*$s are often correlating with each other. For example, $x^*_i$ corresponds to the true calorie level of the food contained in a picture. $x_i$ is the corresponding guess from the agent, possibly as a (randomized) function of $x^*_i$. Therefore the two distributions $\PP$ and $\QQ$ are not independent in general. 

We are interested in designing a score function $S(\cdot)$ that takes inputs of each $r_i(\cdot)$ and $\{r_j(x_j),x^*_j\}_{j\in[n]}$: $S(r_i(x_i), \{r_j(x_j),x^*_j\}_{j\in[n]})$ such that if the agent believes that $x^*$ is drawn from the same distribution $x^* \sim \PP$, then for any $\{r_j(\cdot)\}_{j\in[n]}$, it holds with probability at least $1-\delta$:
\$
& \sum_{i=1}^n \E_{x, x^* \sim \PP}\Bigl [S\bigl (x_i, \{x_j,x^*_j\}_{j\in[n]}\bigr) \Bigr]   \geq \\
 &\quad \sum_{i=1}^n \E_{x, x^* \sim \PP}\Bigl [S\bigl(r_i(x_i), \{r_j(x_j),x^*_j\}_{j\in[n]}\bigr)\Bigr]- n\cdot \epsilon.
\$
We name the above as \textbf{$(\delta, \epsilon)$-properness} (per sample) for sample elicitation. When $\delta = \epsilon=0$, it is reduced to the one that is similar to the properness definition in scoring rule literature \citep{Gneiting:07}. We also shorthand $r_i = r_i(x_i)$ when there is no confusion. Agent believes that her samples are generated from the same distribution as that of the ground truth samples, i.e., $\PP$ and $\QQ$ are the same distributions.

\paragraph{Sample elicitation with peer samples.}
Suppose there are $n$ agents each holding a sample $x_i \sim \PP_i$, where the distributions $\{\PP_i\}_{i\in[n]}$ are not necessarily the same - this models the fact that agents can have subjective biases or local observation biases. This is a more standard peer prediction setting. We denote by their joint distribution as $\PP = \PP_1 \times \PP_2 \times .... \times \PP_n$.

Similar to the previous setting, each agent can report her sample arbitrarily, which is denoted as $r_i(x_i): \Omega \rightarrow \Omega$ for any $i\in[n]$. We are interested in designing and characterizing a score function $S(\cdot)$ that takes inputs of each $r_i(\cdot)$ and $\{r_j(x_j)\}_{j \neq i}$: $S (r_i(x_i), \{r_j(x_j)\}_{j \neq i})$ such that for any $\{r_j(\cdot)\}_{j\in[n]}$, it holds with probability at least $1-\delta$ that
\$
& \E_{x \sim \PP}\Bigl [S\bigl(x_i, \{r_j(x_j) = x_j\}_{j \neq i}\bigr)\Bigr]   \geq \\
 &\quad \E_{x \sim \PP}\Bigl[S\bigl(r(x_i), \{r_j(x_j) = x_j\}_{j \neq i}\bigr)\Bigl ] - \epsilon.
\$
We name the above as \textbf{$(\delta, \epsilon)$-Bayesian Nash Equilibrium} (BNE) in truthful elicitation. We only require that agents are all aware of the above information structure as common knowledge, but they do not need to form beliefs about details of other agents' sample distributions. Each agent's sample is private to herself. 

\paragraph{Connection to the proper scoring rule} At a first look, this problem of eliciting quality data is readily solvable with the seminal solution for eliciting distributional information, called the strictly proper scoring rule \citep{Brier:50,Win:69,Savage:71,Matheson:76,Jose:06,Gneiting:07}: suppose we are interested in eliciting information about a random vector $X = (X_1,...,X_{d-1},Y) \in \Omega \subseteq \mathbb R^{d}$, whose probability density function is denoted by $p$ with distribution $\PP$. As the mechanism designer, if we have a sample $x$ drawn from the true distribution $\PP$, we can apply strictly proper scoring rules to elicit $p$: the agent who holds $p$ will be scored using $S(p, x)$. $S$ is called strictly proper if it holds for any $p$ and $q$ that $\E_{x \sim \PP}[S(p,x)] > \E_{x \sim \PP}[S(q,x)]$.  The above elicitation approach has two main caveats that limited its application: (1) When the outcome space $|\Omega|$ is large and is even possibly infinite, it is practically impossible for any human agents to report such a distribution with reasonable efforts. Consider the example where we are interested in building an image classifier via first collecting a certain category of  high-dimensional image data. While classical elicitation results apply to eliciting a complex, generative and continuous distribution $p(x)$ for this image data, we are interested in eliciting samples $x_i \sim p(x)$ from agents. 
(2) The mechanism designer may not possess any ground truth samples.

\subsection{$f$-divergence} \label{sec:f_div_intro}

It is well known that maximizing the expected proper scores is equivalent to minimizing a corresponding Bregman divergence \citep{Gneiting:07}. More generically, we take the perspective that divergence functions have great potentials to serve as score functions for eliciting samples. We define the $f$-divergence between two distributions $\PP$ and $\QQ$ with probability density function $p$ and $q$ as
\#\label{eq:def-div}
D_{f}(q\|p) = \int p(x) f\biggl(\frac{q(x)}{p(x)}\biggr)\ud\mu.
\#
Here $f(\cdot )$ is a function satisfying certain regularity conditions, which will be specified later. 
Solving our elicitation problem involves evaluating the $D_f(q\|p)$ successively based on the distributions $\PP$ and $\QQ$, without knowing the probability density functions $p$ and $q$. Therefore, we have to resolve to a form of $D_f(q\|p)$ which does not involve the analytic forms of $p$ and $q$, but instead sample forms. Following from Fenchel's convex duality, it holds that
\#\label{eq:hahadf}
D_{f}(q\|p) = \max_{\tran(\cdot)} \EE_{x\sim \QQ}[ t(x) ] - \EE_{x\sim\PP}[f^\dagger(t(x))],
\#
where $f^\dagger(\cdot)$ is the Fenchel duality of the function $f(\cdot)$, which is defined as $f^\dagger(u) = \sup_{v\in\RR} \{uv -f(v) \}$, and the max is taken over all functions $t(\cdot)\colon \Omega\subset \RR^d \to \RR$.

\section{Sample Elicitation: A Variational Approach}\label{eq:sec-se}

Recall from \eqref{eq:hahadf} that $D_f(q\|p)$ admits the following variational form:
\#\label{eq:v-form-exact}
D_f(q\|p) = \max_{\tran(\cdot)} \E_{x \sim \QQ} [\tran(x)] - \E_{x \sim \PP}[ f^{\dag} (\tran(x))]. 
\#
We highlight that via functional derivation, \eqref{eq:v-form-exact} is solved by $t^*(x;p,q) = f'(\theta^*(x;p,q))$, where $\theta^*(x;p,q) = q(x)/p(x)$ is the density ratio between $p$ and $q$.  Our elicitation builds upon such a variational form \eqref{eq:v-form-exact} and the following estimators,
\begin{align*}
& \hat{\tran}(\cdot;p,q) = \argmin_{\tran(\cdot)}\E_{x \sim \PP_n}[ f^{\dag} (\tran(x))] -  \E_{x \sim \QQ_n} [\tran(x)], \\
&\hat{D}_f(q\|p) =  \E_{x \sim \QQ_n} [\hat{\tran}(x)] - \E_{x \sim \PP_n}[ f^{\dag} (\hat{\tran}(x))]. 
\end{align*}

\subsection{Error Bound and Assumptions}
Suppose we have the following error bound for estimating $D_f(q\|p)$: for any probability density functions $p$ and $q$, it holds with probability at least $1-\delta(n)$ that
\begin{align}\label{eq:eps2}
\left|\hat{D}_f(q\|p)  - D_f(q\|p)\right|& \leq \epsilon(n),
\end{align}
where $\delta(n)$ and $\epsilon(n)$ will be specified later in Section \S \ref{section:subproblem_fdiv}. To obtain such an error bound, we need the following assumptions. 

\begin{assumption}[Bounded Density Ratio]\label{assum:ratio}
The density ratio $\theta ^*(x;p,q) = q(x)/p(x)$ is bounded such that  $0<\theta_0\leq \theta^* \leq \theta_1$ holds for positive absolute constants $\theta_0$ and $\theta_1$.
\end{assumption}

The above assumption is standard in related literature \citep{nguyen2010estimating, suzuki2008approximating}, which requires that the probability density functions $p$ and $q$ lie on the same support.  For simplicity of presentation, we assume that this support is $\Omega\subset\RR^d$. We define the $\beta$-H\"older function class on $\Omega$ as follows. 
\begin{definition}[$\beta$-H\"older Function Class]\label{definition:beta_holder}
The  $\beta$-H\"older function class with radius $M$ is defined as
\$
 \cC_d^\beta(\Omega,M) =& \biggl\{ \tran(\cdot)\colon\Omega\subset \RR^d\to \RR\colon  \sum_{\|\alpha\|_1 < \beta} \|\partial^\alpha t\|_\infty   \\&+ \sum_{\|\alpha\|_1 = \lfloor \beta \rfloor } \sup_{\substack{x,y\in \Omega, \\x\neq y}}  \frac{|\partial^\alpha t(x) - \partial ^\alpha t(y)|}{\|x-y\|_\infty^{\beta - \lfloor \beta \rfloor}} \leq M \biggr\},
\$
where $\partial^\alpha = \partial ^{\alpha_1}\cdots \partial^{\alpha_d}$ with $\alpha = (\alpha_1,\ldots, \alpha_d)\in\NN^d$. 
\end{definition}

We impose the following assumptions. 
\begin{assumption}[$\beta$-H\"older Condition]\label{assum:beta-holder}
The function $t^*(\cdot; p,q)\in\cC_d^\beta(\Omega,M)$ for some positive absolute constants $M$ and $\beta$, where $\cC_d^\beta(\Omega,M)$ is the $\beta$-H\"older function class in Definition \ref{definition:beta_holder}.
\end{assumption}

\begin{assumption}[Regularity of Divergence Function]\label{assumption:reg}
The function $f(\cdot)$ is smooth on $[\theta_0, \theta_1]$ and $f(1) = 0$. Also, $f$ is $\mu_0$-strongly convex, and has $L_0$-Lipschitz continuous gradient on $[\theta_0, \theta_1]$, where $\mu_0$ and $L_0$ are positive absolute constants, respectively. 
\end{assumption}
We highlight that we only require that the conditions in Assumption \ref{assumption:reg} hold on the interval $[\theta_0, \theta_1]$, where the absolute constants $\theta_0$ and $\theta_1$ are specified in Assumption \ref{assum:ratio}.  Thus, Assumption \ref{assumption:reg} is mild and it holds for many commonly used functions in the definition of $f$-divergence. For example, in Kullback-Leibler (KL) divergence, we take $f(u) = -\log u$, which satisfies Assumption \ref{assumption:reg}.

We will show that under Assumptions \ref{assum:ratio}, \ref{assum:beta-holder}, and \ref{assumption:reg}, the bound \eqref{eq:eps2} holds.  See Theorem \ref{thm:gen} in  Section \S \ref{section:subproblem_fdiv} for details.

\subsection{Multi-sample elicitation with ground truth samples}

In this section, we focus on multi-sample elicitation with ground truth samples. Under this setting, as a reminder, the agent will report multiple samples. After the agent reported her samples, the mechanism designer obtains a set of ground truth samples $\{x^*_i\}_{i\in[n]} \sim \QQ$ to serve the purpose of evaluation. This falls into the standard strictly proper scoring rule setting.

 Our mechanism is presented in Algorithm \ref{f1}. 

\begin{algorithm}[!h]
\caption{ $f$-scoring mechanism for multiple-sample elicitation with ground truth}\label{f1}
\begin{algorithmic}
\STATE{1. Compute
$
\hat{\tran}(\cdot;p, q) = \argmin_{\tran(\cdot)}\E_{x \sim \PP_n}[ f^{\dag} (\tran(x))] -  \E_{x^* \sim \QQ_n} [\tran(x^*)]. 
$\vskip-5pt}
\STATE{2. For $i\in[n]$, pay reported sample $r_i$ using
$
 S\bigl(r_i, \{r_j, x^*_j\}_{j=1}^n\bigr) :=a - b\bigl( \E_{x \sim \QQ_n} [\hat{\tran}(x; p, q)]- f^{\dag} (\hat{\tran}(r_i; p, q))\bigr)
$
for some constants $a,b>0$.} 
\end{algorithmic}
\end{algorithm}

Algorithm \ref{f1} consists of two steps: Step 1 is to compute the function $\hat{\tran}(\cdot; p, q)$, which enables us, in Step 2, to pay agent using a linear-transformed estimated divergence between the reported samples and the true samples.  We have the following result.

\begin{theorem}\label{thm:multi}
The $f$-scoring mechanism in Algorithm \ref{f1} achieves $(2\delta(n), 2b\epsilon(n))$-properness.
\end{theorem}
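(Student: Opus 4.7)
My plan is to reduce the claim to two applications of the estimation error bound \eqref{eq:eps2} together with the non-negativity of $D_f$. The key algebraic observation is that, summing the payments in Step 2 of Algorithm \ref{f1}, one obtains
\#
\sum_{i=1}^n S\bigl(r_i,\{r_j,x_j^*\}_{j=1}^n\bigr)
= na - bn\Bigl(\E_{x\sim\QQ_n}[\hat t(x;p,q)] - \E_{x\sim R_n}[f^{\dag}(\hat t(x;p,q))]\Bigr),
\#
where $R_n$ is the empirical distribution of the reports $\{r_i(x_i)\}_{i=1}^n$. The bracketed quantity is exactly $\hat D_f(q\|r)$ for the empirical $R_n$ used to train $\hat t$ in Step 1 of the algorithm. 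Hence the total score equals $na - bn\hat D_f(q\|r)$.

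Given this reduction, I would argue in two symmetric directions. First, under truthful reporting, the empirical distribution of the reports is $\PP_n$. Since the agent believes $\QQ=\PP$, the true divergence satisfies $D_f(q\|p)=D_f(p\|p)=0$, so the bound \eqref{eq:eps2} yields $\hat D_f(q\|p)\leq\epsilon(n)$ with probability at least $1-\delta(n)$, which gives
\$
\sum_{i=1}^n S(x_i,\{x_j,x_j^*\}_{j=1}^n) \geq na - bn\epsilon(n).
\$
Second, for any (possibly randomized) strategy $\{r_i(\cdot)\}_{i=1}^n$, letting $r$ denote the induced distribution, the same estimator bound together with the non-negativity $D_f(q\|r)\geq 0$ gives $\hat D_f(q\|r)\geq -\epsilon(n)$ with probability at least $1-\delta(n)$, so
\$
\sum_{i=1}^n S\bigl(r_i(x_i),\{r_j(x_j),x_j^*\}_{j=1}^n\bigr) \leq na + bn\epsilon(n).
\$
Subtracting the two displays and taking a union bound over the two high-probability events produces $\sum_i S(x_i,\cdot) \geq \sum_i S(r_i(x_i),\cdot) - 2bn\epsilon(n)$ with probability at least $1-2\delta(n)$, which is precisely $(2\delta(n),2b\epsilon(n))$-properness per sample after pushing through the expectations $\E_{x,x^*\sim\PP}$ in the definition.

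The step I expect to require the most care is the first one: verifying that the empirical average $\tfrac{1}{n}\sum_i f^{\dag}(\hat t(r_i;p,q))$ really coincides with $\E_{x\sim R_n}[f^{\dag}(\hat t(x;p,q))]$ when $\hat t$ is the \emph{same} estimator produced in Step 1 from the very reports being scored. This has to be tracked so that the total score collapses exactly to $na - bn\hat D_f(q\|r)$, rather than to some quantity evaluated under a different empirical measure. Everything else is a short combination of the $f$-divergence inequality $D_f\geq 0$, the estimator guarantee \eqref{eq:eps2}, and a union bound; no further technical machinery beyond Assumptions \ref{assum:ratio}--\ref{assumption:reg} (which already feed \eqref{eq:eps2}) is needed.
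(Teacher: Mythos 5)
Your proposal is correct and follows essentially the same route as the paper's proof: apply the estimation guarantee \eqref{eq:eps2} twice — once under truthful reporting, where the agent's belief $p=q$ makes the true divergence zero, and once under misreporting, where non-negativity of $D_f$ lower-bounds the estimate — and then combine via a union bound to get $(2\delta(n),2b\epsilon(n))$-properness. The only difference is presentational (you aggregate the per-sample payments into $na-bn\hat D_f(q\|r)$, while the paper argues per sample), which does not change the argument.
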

The proof is mainly based on the error bound in estimating $f$-divergence and its non-negativity. Not surprisingly, if the agent believes her samples are generated from the same distribution as the ground truth sample, and that our estimator can well characterize the difference between the two sets of samples, she will be incentivized to report truthfully to minimize the difference.  We defer the proof to Section \S \ref{proof:thm:multi}.

\subsection{Single-task elicitation without ground truth samples}

The above mechanism in Algorithm \ref{f1}, while intuitive, has two caveats: 
1. The agent needs to report multiple samples (multi-task/sample elicitation);  2. Multiple samples from the ground truth distribution are needed. 
To deal with such caveats, we consider the single point elicitation in an elicitation without a verification setting. Suppose there are $2n$ agents each holding a sample $x_i \sim \PP_i$ \footnote{This choice of $2n$ is for the simplicity of presentation.}.
We randomly partition the agents into two groups and denote the joint distributions for each group's samples as $\PP$ and $\QQ$ with probability density functions $p$ and $q$ for each of the two groups. Correspondingly, there are a set of $n$ agents for each group, respectively, who are required to report their \emph{single} data point according to two distributions $\PP$ and $\QQ$, i.e., each of them holds $\{x^p_i\}_{i\in[n]} \sim \PP$ and $\{x^q_i\}_{i\in[n]} \sim \QQ$.  As an interesting note, this is also similar to the setup of a Generative Adversarial Network (GAN), where one distribution corresponds to a generative distribution $x\given y=1$,\footnote{``$\given$'' denotes the conditional distribution.} and another $x\given y=0$. This is a connection that we will further explore in Section \S \ref{sec:gan} to recover distributions from elicited samples.

We denote by the joint distribution of $p$ and $q$ as $p\oplus q$ (distribution as $\PP\oplus \QQ$), and the product of the marginal distribution as $p \times q$ (distribution as $\PP \times \QQ$). We consider the divergence between the two distributions:
$
D_f(p\oplus q \|p \times q) =   \max_{\tran(\cdot)} \E_{\xb \sim \PP\oplus \QQ} [\tran(\xb)]   - \E_{\xb \sim \PP \times \QQ}[ f^{\dag} (\tran(\xb))]. 
$
Motivated by the connection between mutual information and KL divergence, we define generalized $f$-mutual information in the follows, which characterizes the generic connection between a generalized $f$-mutual information and $f$-divergence. 

\begin{definition}[\cite{kong2019information}]
The generalized $f$-mutual information between $p$ and $q$ is defined as 
$
I_f(p;q) = D_f(p\oplus q \|p\times  q) . 
$
\end{definition}
Further it is shown in \cite{kong2018water,kong2019information} that the data processing inequality for mutual information holds for $I_f(p;q)$ when $f$ is strictly convex. We define the following estimators, 
\begin{align}\label{eq:mi_opt}
\hat{\tran}(\cdot; p\oplus q, p\times  q) =& \argmin_{\tran(\cdot)}\E_{\xb \sim \PP_n \times \QQ_n} [ f^{\dag} (\tran(\xb))]    \notag\\ &\qquad \quad-  \E_{\xb \sim \PP_n \oplus \QQ_n} [\tran(\xb)],  \notag\\
\hat{D}_f(p\oplus q \|p\times  q) =&  \E_{\xb \sim \PP_n \oplus \QQ_n} [\hat{\tran}(\xb; p\oplus q, p\times  q)]  \notag\\ & - \E_{\xb \sim \PP_n \times \QQ_n}[ f^{\dag} (\hat{\tran}(\xb; p\oplus q, p\times  q))], 
\end{align}
where $\PP_n$ and $\QQ_n$ are empirical distributions of the reported samples. We denote $\xb \sim \PP_n \oplus \QQ_n\given r_i$ as the conditional distribution when the first variable is fixed with realization $r_i$. Our mechanism is presented in Algorithm \ref{f2}. 

\begin{algorithm}[!h]
\caption{ $f$-scoring mechanism for sample elicitation}\label{f2}
\begin{algorithmic}[1]
\STATE Compute $\hat{\tran}(\cdot; p\oplus q, p\times  q)$ as
\[
\hat{\tran}(\cdot) = \argmin_{\tran(\cdot)}\E_{\xb \sim \PP_n \times \QQ_n}[ f^{\dag} (\tran(\xb))] -  \E_{\xb \sim \PP_n \oplus \QQ_n} [\tran(\xb)]. 
\]
\STATE Pay each reported sample $r_i$ using: for some constants $a,b>0$,
\begin{small}
\begin{align*}
    S(r_i, \{r_j\}_{j \neq i}) := &a + b\biggl( \E_{\xb \sim \PP_n \oplus \QQ_n|r_i} \left[\hat{\tran}(\xb; p\oplus q, p\times  q)\right]   \\&- \E_{\xb \sim \PP_n \times \QQ_n|r_i}\left[ f^{\dag} (\hat{\tran}(\xb; p\oplus q, p\times  q))\right]
\biggr)
\end{align*}
\end{small}
\end{algorithmic}
\end{algorithm}

Similar to  Algorithm \ref{f1}, the main step in Algorithm \ref{f2} is to estimate the $f$-divergence between $\PP_n \times \QQ_n$ and $ \PP_n \oplus \QQ_n$ using reported samples. Then we pay agents using a linear-transformed form of it. 
We have the following result. 
\begin{theorem}\label{thm:single}
The $f$-scoring mechanism in Algorithm \ref{f2} achieves $(2\delta(n), 2b\epsilon(n))$-BNE.
\end{theorem}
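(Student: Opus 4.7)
The plan is to derive the BNE guarantee by comparing, for a fixed agent $i$, the expected payment under truthful reporting with the expected payment under any unilateral deviation, and to sandwich this difference between two quantities that can be controlled: the true $f$-mutual information gap (non-negative by data processing) and the estimation error of our variational estimator (controlled by \eqref{eq:eps2}).

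First, I would fix agent $i$, assume every other agent reports truthfully, and consider an arbitrary reporting strategy $r_i(\cdot)$ for agent $i$. Taking the expectation under the true joint distribution, the inner bracket of the payment $S(r_i, \{r_j = x_j\}_{j\neq i})$ converges, via the definition of $\hat{\tran}$ in \eqref{eq:mi_opt}, to the population-level variational objective evaluated at the optimal witness. When $r_i$ is the identity, this population-level object equals $D_f(p\oplus q \| p\times q) = I_f(p;q)$. When $r_i$ is a measurable deviation, the samples observed by the mechanism are effectively drawn from the pushforward law of $r_i$ paired with the remaining coordinate, so the same object equals $I_f(p_{r_i};q)$, where $p_{r_i}$ denotes the pushforward of $p$ under $r_i$.

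Next, I would apply the error bound \eqref{eq:eps2} twice: once with the pair of distributions $(p\oplus q, p\times q)$ corresponding to truthful reporting, and once with $(p_{r_i}\oplus q, p_{r_i}\times q)$ corresponding to the deviation. Each bound holds with probability at least $1-\delta(n)$, so by a union bound both hold simultaneously with probability at least $1-2\delta(n)$. On this event, the triangle inequality gives
\$
\bigl| \E[S(x_i, \cdot)] - \E[S(r_i(x_i), \cdot)] - b\,\bigl(I_f(p;q) - I_f(p_{r_i};q)\bigr) \bigr| \leq 2b\epsilon(n).
\$
Finally, I would invoke the data processing inequality for generalized $f$-mutual information recorded just before \eqref{eq:mi_opt}: since $f$ is strictly convex (Assumption \ref{assumption:reg} implies $\mu_0$-strong convexity) and $r_i$ is a measurable map, $I_f(p;q) \geq I_f(p_{r_i};q)$. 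Substituting this into the previous display yields $\E[S(x_i,\cdot)] \geq \E[S(r_i(x_i),\cdot)] - 2b\epsilon(n)$ with probability at least $1-2\delta(n)$, which is exactly $(2\delta(n),2b\epsilon(n))$-BNE.

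The main obstacle I anticipate is the second step: carefully showing that the empirical variational objective evaluated along the optimal $\hat{\tran}$ computed from the deviated sample stream really converges to $I_f(p_{r_i};q)$ rather than to $I_f(p;q)$. Because a single agent's misreport perturbs $\PP_n$ only in one coordinate, one must argue either that (i) the theorem is stated against a population-level alternative in which an oblivious deviator would resample all of $\PP$ under $p_{r_i}$, or (ii) the one-sample perturbation is absorbed into $\epsilon(n)$ via the estimator's stability. Under either reading the chain above closes, but the cleanest route is (i), which matches the BNE expectation on the left-hand side of the theorem statement and makes the invocation of \eqref{eq:eps2} immediate.
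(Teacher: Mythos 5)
Your proposal matches the paper's proof in structure and in the key tools: two applications of the $f$-divergence estimation error bound \eqref{eq:eps2} (one for the truthful report distribution, one for the deviated pushforward), combined via a union bound to get the $2\delta(n)$ probability and $2b\epsilon(n)$ slack, followed by the data processing inequality for generalized $f$-mutual information to make the MI gap nonnegative. You also correctly flag and resolve, via reading~(i), the subtlety about a single agent's deviation being read as a population-level change to $\PP$ --- which is exactly how the paper proceeds, where the ``tower property'' rewrite and the explicit ``max'' inequality bound the empirical objective evaluated at $\hat{\tran}$ by $\hat D_f(\tilde p\oplus q\,\|\,\tilde p\times q)$ before invoking \eqref{eq:eps2}.
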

The theorem is proved by error bound in estimating $f$-divergence, a max argument, and the data processing inequality for $f$-mutual information. We defer the proof in Section \S \ref{proof:thm:single}. 

The job left for us is to establish the error bound in estimating the $f$-divergence to obtain $\epsilon(n)$ and $\delta(n)$.  Roughly speaking, if we solve the optimization problem \eqref{eq:mi_opt} via deep neural networks with proper structure,  it holds that 
$
 \delta(n) = 1- \exp\{-n^{d/(2\beta+d)}\log^5 n\}$ and $   \epsilon(n) = c \cdot n^{-{\beta}/{(2\beta + d)}} \log^{7/2} n,
$
where $c$ is a positive absolute constant.  We state and prove this result formally in Section \S \ref{section:subproblem_fdiv}.

\begin{remark}
(1) When the number of samples grows, it holds that $\delta(n)$ and $\epsilon(n)$ decrease to 0 at least polynomially fast, and our guaranteed approximate incentive-compatibility approaches a strict one. (2) Our method or framework handles arbitrary complex information, where the data can be sampled from high dimensional continuous space. (3) The score function requires no prior knowledge. Instead, we design estimation methods purely based on reported sample data. (4)  Our framework also covers the case where the mechanism designer has no access to the ground truth, which adds contribution to the peer prediction literature. So far peer prediction results focused on eliciting simple categorical information. Besides handling complex information structures, our approach can also be viewed as a data-driven mechanism for peer prediction problems.
\end{remark}

\section{Estimation of $f$-divergence}\label{section:subproblem_fdiv}
In this section, we introduce an estimator of $f$-divergence and establish the statistical rate of convergence, which characterizes $\epsilon(n)$ and $\delta(n)$. 
 For the simplicity of presentation, in the sequel, we estimate the $f$-divergence $D_f(q\|p)$ between  distributions $\PP$ and $\QQ$ with probability density functions $p$ and $q$, respectively.  The rate of convergence of the estimated $f$-divergence can be easily extended to that of the estimated mutual information.

Following from the analysis in Section \S \ref{eq:sec-se}, by Fenchel duality,  estimating $f$-divergence between $\PP$ and $\QQ$ is equivalent to solving the following optimization problem, 
\#\label{eq:subproblem_popu}
&t^*(\cdot;p,q) = \argmin_{t(\cdot)} \EE_{x\sim \PP}[ f^\dagger(t(x))] - \EE_{x\sim \QQ}[t(x)] ,\notag\\
&D_f(q\|p) =  \EE_{x\sim\QQ}[  t^*(x;p,q) ] -\EE_{x\sim\PP}[ f^\dagger( t^*(x;p,q) )].
\#
A natural way to estimate the divergence $D_f(q\| p)$ is to solve the empirical counterpart of \eqref{eq:subproblem_popu}: 
\#\label{eq:gen-form}
&t^\natural(\cdot;p,q) = \argmin_{t\in\Phi} \EE_{x\sim\PP_n}[ f^\dagger(t(x))] - \EE_{x\sim\QQ_n}[t(x)] ,\notag\\
& D_f^\natural (q\|p) =  \EE_{x\sim\QQ_n}[ t^\natural(x;p,q) ] -\EE_{x\sim\PP_n}[ f^\dagger( t^\natural(x;p,q) )], 
\#
where $\Phi$ is a function space with functions whose infinity norm is bounded by a constant $M$.   We establish the statistical rate of convergence with general function space $\Phi$ as follows, and defer the case where $\Phi$ is a family of deep neural networks in Section \S \ref{sec:nn-app} of the appendix.  We introduce the following definition of the covering number. 

\begin{definition}[Covering Number]
Let $(V, \|\cdot \|_{L_2})$ be a normed space, and $\Phi \subset V$. We say that $\{v_1, \ldots, v_N\}$ is a $\delta$-covering over $\Phi$ of size $N$ if $\Phi\subset\cup_{i = 1}^N B(v_i, \delta)$, where $B(v_i, \delta)$ is the $\delta$-ball centered at $v_i$.  The covering number is defined as $N_2(\delta, \Phi) = \min\{ N\colon  \exists~ \delta\textrm{-covering over $\Phi$ of size $N$} \}$. 
\end{definition}

We impose the following assumption on the covering number of the space $\Phi$, which characterizes the representation power of  $\Phi$. 

\begin{assumption}\label{assum:phi-cn}
$N_2(\delta, \Phi) = \cO( \exp\{ \delta^{-\gamma_\Phi} \})$, where $0 < \gamma_\Phi < 2$. 
\end{assumption}

In the following theorem, we establish the statistical convergence rate of the estimator proposed in \eqref{eq:gen-form}. For the simplicity of discussion, we assume that $t^* \in \Phi$. 

\begin{theorem}\label{thm:gen}
Suppose that Assumptions \ref{assum:ratio}, \ref{assumption:reg}, and \ref{assum:phi-cn} hold, and $t^*(\cdot; p, q)\in \Phi$. With probability at least $1- \exp(-n^{\gamma_\Phi / (2 + \gamma_\Phi)} )$, we have
$
| D_f^\natural(q\| p) - D_f(q\| p)| \lesssim n^{-1/(\gamma_\Phi + 2)}. 
$
\end{theorem}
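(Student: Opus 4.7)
The plan is to reduce the claim to a uniform concentration bound for the empirical process $L_n - L$ over $\Phi$, where
\[
L(t) := \E_{x\sim\PP}[f^\dagger(t(x))] - \E_{x\sim\QQ}[t(x)], \qquad L_n(t) := \E_{x\sim\PP_n}[f^\dagger(t(x))] - \E_{x\sim\QQ_n}[t(x)],
\]
so that $D_f(q\|p) = -L(t^*)$ and $D_f^\natural(q\|p) = -L_n(t^\natural)$. Using $t^*\in\Phi$ together with the two optimality properties (namely $t^*$ minimizes $L$ globally and $t^\natural$ minimizes $L_n$ over $\Phi$), one checks that
\[
-\sigma_n \;\leq\; D_f^\natural(q\|p) - D_f(q\|p) \;\leq\; \sigma_n, \qquad \text{where } \sigma_n := \sup_{t\in\Phi}|L_n(t) - L(t)|.
\]
Indeed, writing $D_f^\natural - D_f = [L(t^*) - L_n(t^*)] + [L_n(t^*) - L_n(t^\natural)]$ gives the lower bound (first bracket $\geq -\sigma_n$, second $\geq 0$ by optimality of $t^\natural$), while writing $D_f^\natural - D_f = [L(t^*) - L(t^\natural)] + [L(t^\natural) - L_n(t^\natural)]$ and using $L(t^*) \leq L(t^\natural)$ gives the upper bound.

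The core of the proof is then a covering argument for $\sigma_n$. Assumption \ref{assum:ratio} combined with the smoothness part of Assumption \ref{assumption:reg} implies that $f^\dagger$ is Lipschitz on the bounded interval containing the image of $\Phi$ (using $\|t\|_\infty \leq M$ for $t\in\Phi$). Consequently both $L$ and $L_n$ are Lipschitz in $t$ with respect to the $L_2$ norm used in Assumption \ref{assum:phi-cn}. Fix $\delta>0$ and a minimal $\delta$-cover $\{t_1,\ldots,t_N\}$ of $\Phi$ with $N = N_2(\delta,\Phi)$. For any $t\in\Phi$ and its nearest cover element $t_i$,
\[
|L_n(t) - L(t)| \;\leq\; |L_n(t_i) - L(t_i)| + C_1\,\delta,
\]
so $\sigma_n \leq \max_{i\in[N]} |L_n(t_i) - L(t_i)| + C_1\,\delta$.

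For each fixed $t_i$, the quantity $L_n(t_i) - L(t_i)$ is an average of $n$ independent bounded variables, and Hoeffding's inequality yields $\P(|L_n(t_i) - L(t_i)| > \epsilon) \leq 2\exp(-c n \epsilon^2)$. A union bound over the cover, combined with Assumption \ref{assum:phi-cn}, gives
\[
\P(\sigma_n > \epsilon + C_1\delta) \;\leq\; 2\,N_2(\delta,\Phi)\exp(-c n \epsilon^2) \;\leq\; 2\exp\bigl(C_2\,\delta^{-\gamma_\Phi} - c n \epsilon^2\bigr).
\]
The final step is to balance the two error sources by setting $\delta = \epsilon = c_0\, n^{-1/(2+\gamma_\Phi)}$, so that $n\epsilon^2 = c_0^{2}\, n^{\gamma_\Phi/(2+\gamma_\Phi)}$ and $\delta^{-\gamma_\Phi} = c_0^{-\gamma_\Phi}\, n^{\gamma_\Phi/(2+\gamma_\Phi)}$. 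Taking $c_0$ large enough (so $c c_0^2 > 2C_2 c_0^{-\gamma_\Phi}$) makes the exponent $-c_3\, n^{\gamma_\Phi/(2+\gamma_\Phi)}$, which together with the reduction in the first paragraph delivers the claimed tail $1 - \exp(-n^{\gamma_\Phi/(2+\gamma_\Phi)})$ and the target rate $|D_f^\natural - D_f|\lesssim n^{-1/(2+\gamma_\Phi)}$.

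The main obstacle I expect is the careful reconciliation of which $L_2$ norm enters the covering-number assumption with the norm in which $L$ and, especially, $L_n$ are truly Lipschitz. Because $L_n$ integrates against empirical measures, the cleanest route is either to upgrade the cover to $L_\infty$ (permissible up to constants when $\Phi$ is uniformly bounded and well-behaved) or to relate empirical and population $L_2$ norms uniformly over $\Phi$ by an additional concentration step. Assumption \ref{assum:ratio} together with $\|t\|_\infty\leq M$ keeps densities and losses bounded, so these norm equivalences hold up to multiplicative constants; writing the equivalences out cleanly is the step requiring the most bookkeeping in a full proof.
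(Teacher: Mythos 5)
Your reduction is correct and genuinely different from the paper's route: you bound $|D_f^\natural(q\|p)-D_f(q\|p)|$ by the global quantity $\sigma_n=\sup_{t\in\Phi}|L_n(t)-L(t)|$ using only $t^*\in\Phi$, the optimality of $t^\natural$ for $L_n$, and the optimality of $t^*$ for $L$. The paper instead first proves a localized basic inequality (Lemma \ref{lemma:ineq1}, via strong convexity of $f^\dagger$), controls the resulting empirical-process terms by a peeling argument and a bracketing maximal inequality (Lemma \ref{lemma:fake_vdg2}, Theorem \ref{thm:vandegeer}) to get $\|t^\natural-t^*\|_{L_2(\PP)}\lesssim n^{-1/(\gamma_\Phi+2)}$, and then splits $|D_f^\natural-D_f|$ into five terms $B_1,\dots,B_5$. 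Since the theorem only claims the slow rate $n^{-1/(\gamma_\Phi+2)}$, your global argument, if completed, would give the same conclusion with less machinery (though it does not produce the $L_2$ error bound on $t^\natural$, which the paper's analysis yields as a by-product), and your balancing $\delta=\epsilon\asymp n^{-1/(2+\gamma_\Phi)}$ reproduces the claimed exponent in the probability.

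The gap is in the concentration step, and it is more than bookkeeping. Assumption \ref{assum:phi-cn} provides an $L_2$ covering number, but your discretization inequality $|L_n(t)-L_n(t_i)|\leq C_1\delta$ needs closeness of $t$ and $t_i$ under the \emph{empirical} measures $\PP_n,\QQ_n$ (or in $L_\infty$), not under a population $L_2$ norm, so Hoeffding plus a union bound over a population-$L_2$ cover does not by itself control $\sigma_n$. Your first proposed fix---upgrading the cover to $L_\infty$ ``up to constants'' because $\Phi$ is uniformly bounded---is false in general: indicators of finite sets are uniformly bounded and all at $L_2(\PP)$ distance $0$ from the zero function when $\PP$ is atomless, yet $\sup_t\bigl(\EE_{x\sim\PP_n}[t(x)]-\EE_{x\sim\PP}[t(x)]\bigr)=1$. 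Your second fix (uniformly relating empirical and population $L_2$ norms over $\Phi$) is itself an empirical-process theorem requiring its own entropy argument, i.e.\ exactly the missing ingredient. The standard ways to close the gap are bracketing (an $L_2(\PP)$-bracket transfers one-sidedly to the empirical process, using the Lipschitz continuity and monotonicity of $f^\dagger$ on the range of $\Phi$) or symmetrization plus chaining under a uniform or empirical entropy condition; this is precisely what the paper supplies through the generalized entropy with bracketing and Theorem \ref{thm:vandegeer} (indeed the paper's proof of Lemma \ref{lemma:fake_vdg2} silently reads Assumption \ref{assum:phi-cn} as a bracketing-entropy bound; under that reading your argument can be completed, but as literally written the covering-plus-Hoeffding step fails). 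A minor point: you only use Assumption \ref{assumption:reg} for the Lipschitz continuity of $f^\dagger$, which is legitimate, whereas the paper also exploits the strong convexity it implies.
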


We defer the proof of Theorem \ref{thm:gen} to Section \S\ref{proof:thm:gen}. By Theorem \ref{thm:gen}, the estimator in \eqref{eq:gen-form} achieves the optimal non-parametric rate of convergence \citep{stone1982optimal}.

\section{Connection to $f$-GAN and Reconstruction of Distribution}\label{sec:gan}
After sample elicitation, a natural question to ask is how to learn a representative probability density function from the samples. Denote the probability density function from elicited samples as $p$. Then, learning the probability density function $p$ is to solve for
\#\label{eq:form_fgan}
q^* = \argmin_{q\in\cQ} D_{f}(q\|p),
\#
where $\cQ$ is the probability density function space.   By the non-negativity of $f$-divergence, $q^* = p$ solves  \eqref{eq:form_fgan}, which implies that by solving \eqref{eq:form_fgan}, we reconstruct the representative probability from the samples. 

To see the connection between \eqref{eq:form_fgan} and the formulation of $f$-GAN \citep{nowozin2016f}, by   \eqref{eq:hahadf} and \eqref{eq:form_fgan}, we have
$ 
q^* = \argmin_{q\in \cQ}\max_{t} \EE_{x\sim\QQ}[ t(x) ] - \EE_{x\sim\PP}[f^\dagger(t(x))],
$ 
which is the formulation of $f$-GAN. 
We now propose the following estimator, 
\#
& q^\natural = \argmin_{q\in \cQ} D_f^\natural (q\|p), \label{eq:q1-est}
\# 
where $ D_f^\natural(q\|p)$ is defined in \eqref{eq:gen-form}.   We defer the case where deep neural networks are used to construct the estimators  in Section \S \ref{sec:nn-app} of the appendix.  
We impose the following assumption.

\begin{assumption}\label{assum:cov_num1}
$N_2(\delta, \cQ) = \cO( \exp\{ \delta^{-\gamma_\Phi} \})$.
\end{assumption}

The following theorem characterizes the error bound of estimating $q^*$ by $q^\natural$.
\begin{theorem}\label{theorem:outer_prob_sparsity1} 
Under the same assumptions in Theorem \ref{thm:gen}, further if Assumption \ref{assum:cov_num1} holds, for sufficiently large sample size $n$, with probability at least $1-1/n$, we have
$$ 
D_f( q^\natural \|p)  \lesssim n^{-{1}/{(\gamma_\Phi+2)}} \cdot \log n    + \min_{\tilde q\in\cQ}D_f(\tilde q\|p).
$$ 
\end{theorem}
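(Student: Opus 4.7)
The plan is to use a classical oracle inequality for $M$-estimators. Let $\tilde q^\star \in \arg\min_{\tilde q \in \cQ} D_f(\tilde q \| p)$ be the best approximation to $p$ within $\cQ$. By the definition of $q^\natural$ in \eqref{eq:q1-est},
$$\hat D_f(q^\natural \| p) \leq \hat D_f(\tilde q^\star \| p).$$
Adding $D_f(q^\natural\|p) - \hat D_f(q^\natural\|p)$ to both sides and subtracting $D_f(\tilde q^\star\|p)$ gives
$$D_f(q^\natural \| p) - D_f(\tilde q^\star \| p) \leq 2 \sup_{q \in \cQ} \bigl|\hat D_f(q\|p) - D_f(q\|p)\bigr|.$$
Thus the statement reduces to establishing a uniform concentration bound for the estimator of Section \S\ref{section:subproblem_fdiv} across the model class $\cQ$.

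The next step is to upgrade the pointwise bound of Theorem \ref{thm:gen} to a uniform one over $\cQ$. This is where Assumption \ref{assum:cov_num1} enters: choose a resolution $\eta \asymp n^{-1/(\gamma_\Phi+2)}$ and form an $\eta$-net $\{q_1,\ldots,q_N\}$ of $\cQ$ with $N \leq \exp(C\eta^{-\gamma_\Phi})$. For each net point $q_j$, Theorem \ref{thm:gen} bounds $|\hat D_f(q_j\|p) - D_f(q_j\|p)|$ by $\cO(n^{-1/(\gamma_\Phi+2)})$ except on a set of probability $\exp(-n^{\gamma_\Phi/(\gamma_\Phi+2)})$. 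With our choice of $\eta$, the log of the net size is of the same order as the exponent in this tail, so a union bound preserves the rate up to a $\log n$ factor, with overall failure probability at most $1/n$ for sufficiently large $n$.

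To pass from the net to all of $\cQ$ one must argue that both $q \mapsto D_f(q\|p)$ and $q \mapsto \hat D_f(q\|p)$ are Lipschitz in the $L_2$ norm used in the covering, with constants independent of the samples. For the population divergence this follows from Assumption \ref{assum:ratio} combined with the smoothness of $f$ in Assumption \ref{assumption:reg}, since the integrand $p \cdot f(q/p)$ varies smoothly in $q$ on the bounded-ratio region $[\theta_0,\theta_1]$. For the empirical version, one must additionally control how the witness $\hat t(\cdot;p,q)$ shifts when $q$ is perturbed; here the strong convexity of $f$ on $[\theta_0,\theta_1]$ (again Assumption \ref{assumption:reg}) delivers a quantitative stability estimate on the argmin. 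Combining the discretization error of order $\eta$ with the bound at the net then yields
$$\sup_{q\in\cQ} \bigl|\hat D_f(q\|p) - D_f(q\|p)\bigr| \lesssim n^{-1/(\gamma_\Phi+2)} \cdot \log n,$$
which plugged into the oracle inequality above completes the proof.

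The main obstacle is the stability of the inner maximization underlying $\hat D_f(q\|p)$: because $\hat t(\cdot;p,q)$ is itself defined by an empirical optimization whose objective depends on $q$, perturbing $q$ perturbs the optimizer in a way that is a priori difficult to control uniformly. The right tool is the strong convexity from Assumption \ref{assumption:reg}, which guarantees that small changes in the objective translate into small changes in the argmin, so that the Lipschitz constant of $\hat D_f(\cdot\|p)$ depends only on $\theta_0$, $\theta_1$, $\mu_0$, and $L_0$, not on the sample realization. Once this stability step is secured, the remaining work — covering, union bound, and assembling the oracle inequality — is standard.
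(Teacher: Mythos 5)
Your proposal follows essentially the same route as the paper's proof. The paper begins from the same oracle inequality, in the chained form $D_f(q^\natural\|p) \le \sup_{q\in\cQ}|D_f(q\|p)-D_f^\natural(q\|p)| + |D_f^\natural(\tilde q'\|p)-D_f(\tilde q'\|p)| + D_f(\tilde q'\|p)$ with $\tilde q'$ the population minimizer, and then establishes uniform concentration over $\cQ$ (Lemma \ref{lemma:for_all_hold_sparsity}) by taking a net at resolution $\delta_n \asymp n^{-1/(\gamma_\Phi+2)}$, applying Theorem \ref{thm:gen} at each net point, and a union bound — exactly the strategy you outline, with the same scaling. The one respect in which you are more careful than the paper: you explicitly flag that passing from the finite net $\{q_k\}$ to all of $\cQ$ requires a Lipschitz (or stability) argument for both $q\mapsto D_f(q\|p)$ and $q\mapsto D_f^\natural(q\|p)$, with the empirical side requiring control of how the inner argmin $t^\natural(\cdot;p,q)$ moves under perturbation of $q$, for which strong convexity of $f$ is the right tool. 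The paper's proof of Lemma \ref{lemma:for_all_hold_sparsity} silently bounds $\sup_{q\in\cQ}$ by the union over the net points with no discretization-error argument at all — precisely the gap you point out. So your proposal is in fact a more complete version of the paper's argument; to fully close it you would still need to execute the stability estimate you sketch, being mindful that $D_f^\natural(q\|p)$ depends on $q$ both through the inner optimization objective and through the empirical measure $\QQ_n$ of samples drawn from $q$.
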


We defer the proof of Theorem \ref{theorem:outer_prob_sparsity1} to Section \S \ref{subsection:proof_outer_prob_sparsity1} in Appendix.  In the upper bound of Theorem \ref{theorem:outer_prob_sparsity1}, the first term characterizes the generalization error of the estimator in \eqref{eq:q1-est}, while the second term is the approximation error.

\section{Experiment results}
We use the synthetic dataset, MNIST \citep{mnist} and CIFAR-10 \citep{cifar} test dataset to validate the incentive property of our mechanism. 

\subsection{Experiments on synthetic data}
In this section, the scores are estimated based on the variational approach we documented earlier and the method used in \citep{nguyen2010estimating} for estimating the estimator. The experiments are based on synthetic data drawn from 2-dimensional Gaussian distributions \footnote{We choose simpler distribution so we can compute the scores analytically for verification purpose}. We randomly generate 2 pairs of the means ($\mu$) and covariance matrices ($\Sigma$) (see Table~\ref{Tab:exp_syn} for details.) Experiment results show that truthful reports lead to higher scores that are close to analytical MI.

\begin{table}[h]
\caption{Score comparison among truthful reports, random shift and random reports.} \label{Tab:exp_syn}
\scriptsize 
\begin{center}
\begin{tabular}{|c|c|c|}
\hline
& Exp1  &Exp2 \\
\hline 
\hline
 $\mu$  & $
\begin{pmatrix}
    -2.97\\ 8.98
\end{pmatrix}
$ &$
\begin{pmatrix}
    6.98\\ 8.39
\end{pmatrix}
$ \\  \hline 
 $\Sigma$  & $
\begin{pmatrix}
    1.28 & 4.39\\ 
    4.39 & 16.19 
\end{pmatrix}
$ & $
\begin{pmatrix}
    10.56 & 16.18\\
    16.18 & 26.43 
\end{pmatrix}
$  \\  \hline
 Analytical  & 1.30 & 1.40\\  \hline
Truthful  & \textbf{1.36}  $\pm$ 0.06 & \textbf{1.32} $\pm$ 0.05  \\  \hline
 Random shift  &1.08 $\pm$ 0.05 & 1.11  $\pm$  0.04 \\  \hline
 Random report  & 0.13 $\pm$ 0.02& 0.20  $\pm$ 0.02 \\
 \hline
\end{tabular}
\end{center}
\end{table}

Two numerical experiments are shown in Table~\ref{Tab:exp_syn}. In each of the above experiments, we draw $1000$ pairs of samples $(x_i,y_i)$ from the corresponding Gaussian distribution. This set of pairs reflect the joint distribution $\mathbb{P}\oplus\mathbb{Q}$, while the sets $\{x_i\}$ and $\{y_i\}$ together correspond to the marginal distribution $\mathbb{P}\times\mathbb{Q}$. For each experiment, we repeat ten times and calculated the mean estimated score and the corresponding standard deviation.

For simplicity, we adopted $a=0$, $b=1$, and $f(\cdot)=-\log(\cdot)$ in estimating the scores, which make the expected score nothing but the Mutual Information (MI) between distributions $\mathbb{P}$ and $\mathbb{Q}$. The analytical values and the estimated scores of the MI for the two experiments are listed in the $4$th and $5$th row in Table~\ref{Tab:exp_syn}, respectively. To demonstrate the effects of untruthful reports (misreports) on our score, we consider two types of untruthful reporting:
\squishlist
\item \textbf{Random Shift:} The agent draws random noise from the uniform distribution $U(0,3)$ and add to $\{x_i\}$.
\item \textbf{Random Report;} Agent simply reports random signals drawn from the uniform distribution $U(0,2\sigma)$, where $\sigma$ is the standard deviation of the marginal distribution $\mathbb{P}$. This models the case when agents contribute uninformative information.
\squishend

As expected, the scores of untruthful reports are generally lower than the scores of truthful ones.

\begin{figure*}[htb]
    \centering
    \subfigure[\scriptsize MNIST]{\includegraphics[width=.48\textwidth,height=4.2cm]{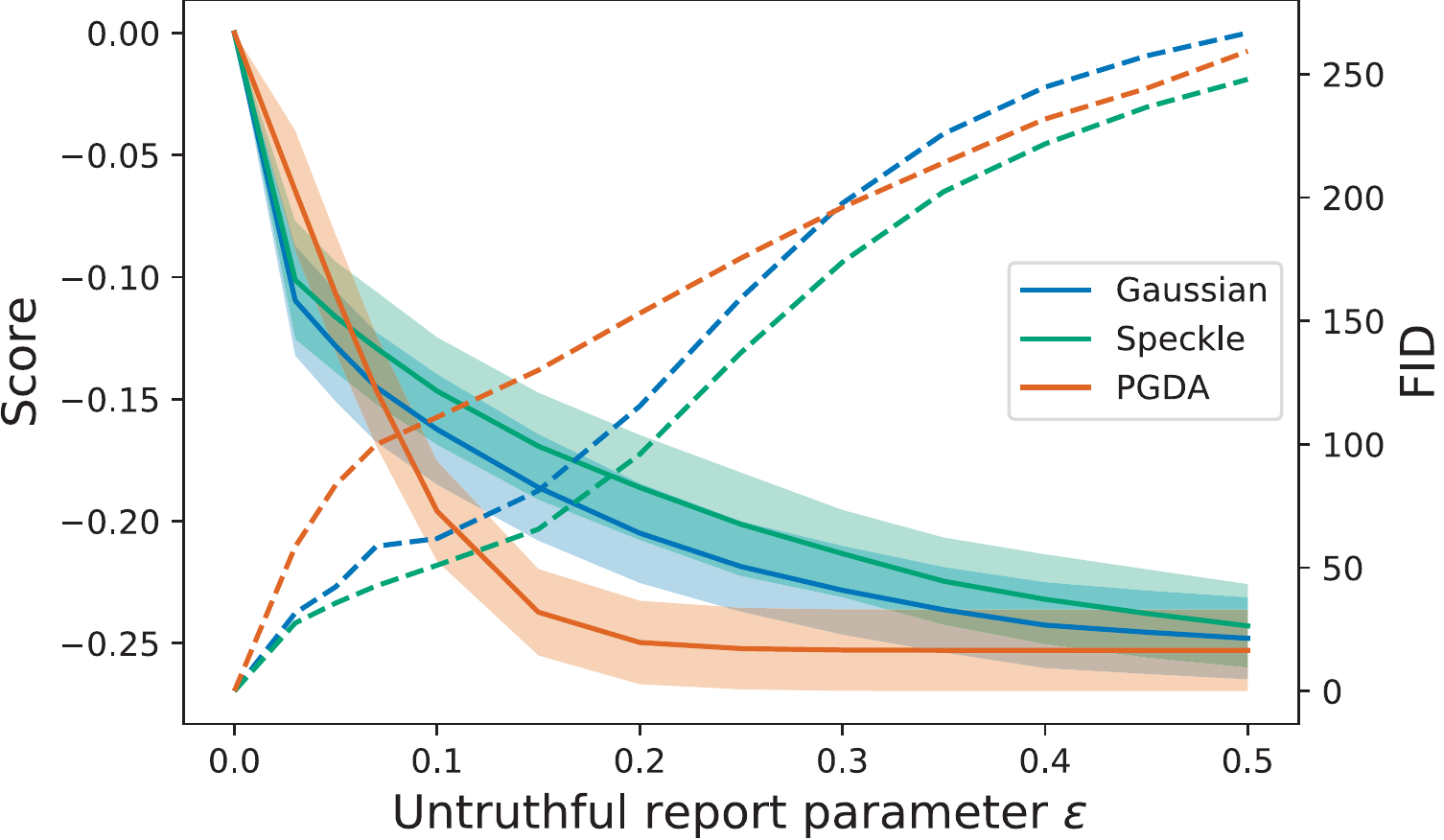}
    \label{Fig: 1a}}
    \subfigure[\scriptsize CIFAR-10]
    {\includegraphics[width=.48\textwidth,height=4.2cm]{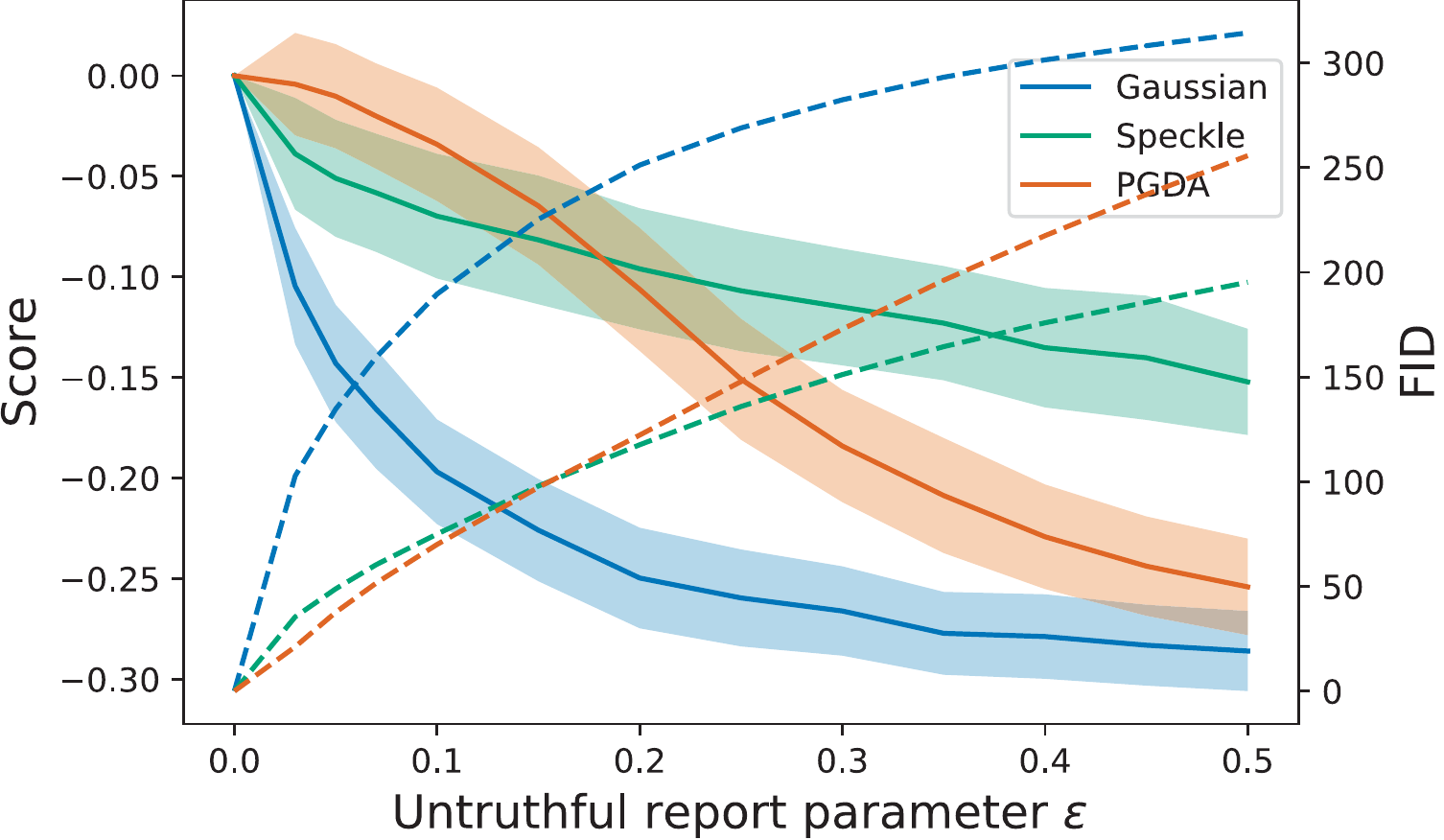}
    \label{Fig: 1b}}
        \vspace{-7pt}
        \caption{Scores and FID value w.r.t. $\epsilon$ with ground truth verification. \textbf{Dashed lines} represent FID.  
        \vspace{-5pt}
    }
    \label{Fig:fig1}
\end{figure*}

\begin{figure*}[htb]
    \centering
    \subfigure[\scriptsize MNIST]{\includegraphics[width=.47\textwidth,height=4.1cm]{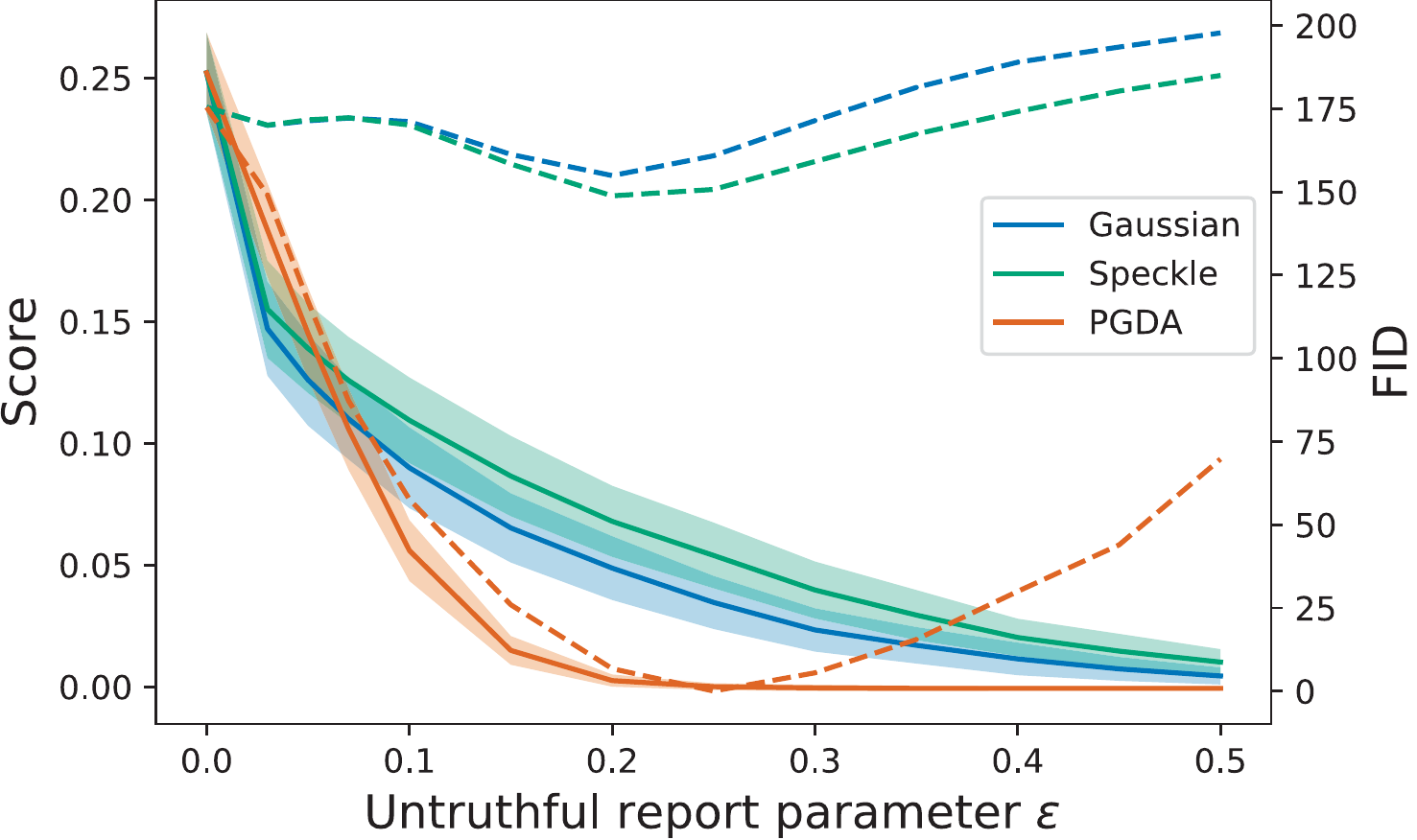}
    \label{Fig: 2a}}
    \subfigure[\scriptsize CIFAR-10]
    {\includegraphics[width=.48\textwidth,height=4.2cm]{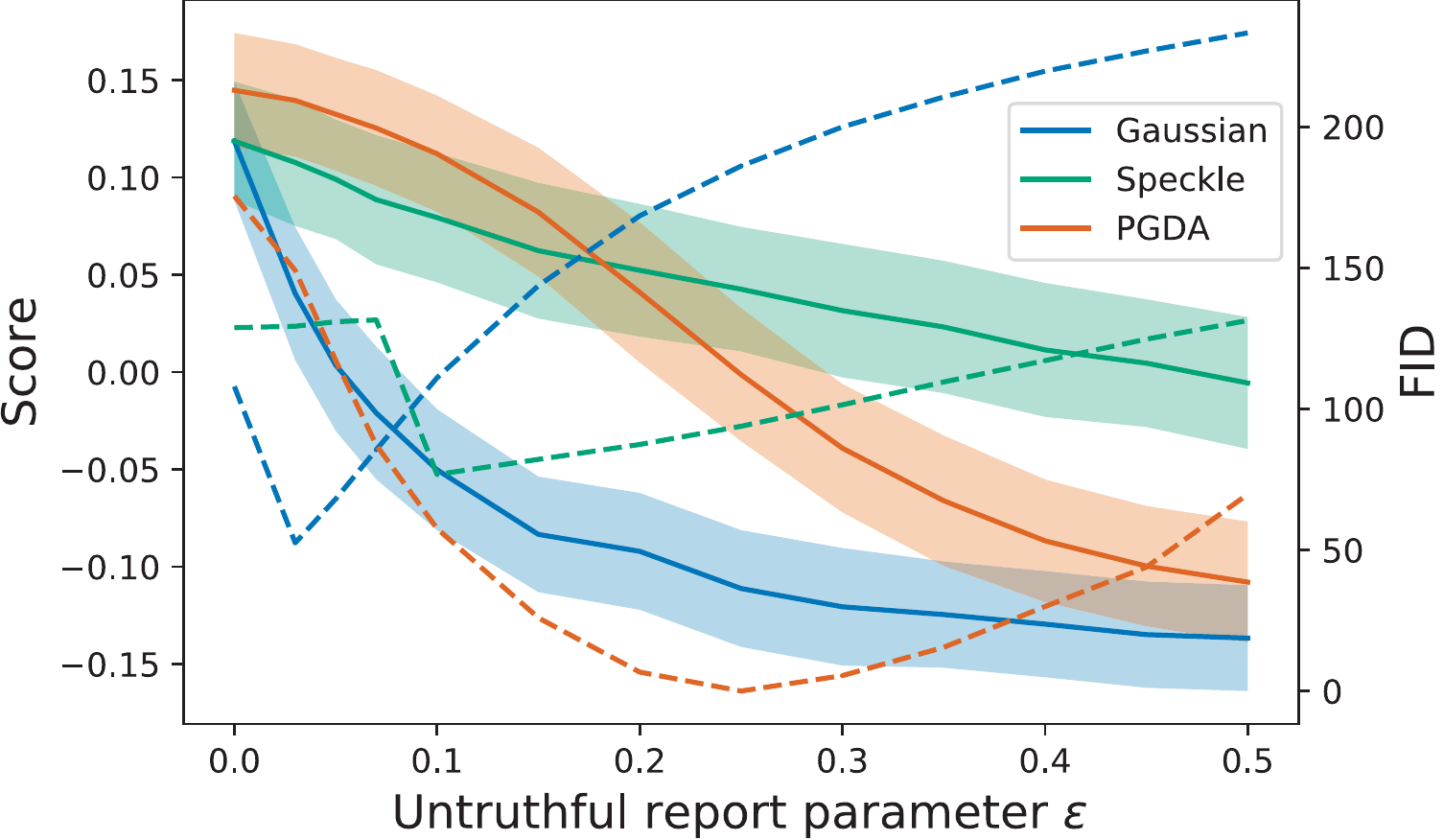}
    \label{Fig: 2b}}
        \vspace{-7pt}
        \caption{Scores and FID value w.r.t. $\epsilon$ with no ground truth verification but only peer samples. \textbf{Dashed lines} represent FID.
        \vspace{-5pt}
    }
    \label{Fig:fig2}
\end{figure*}

\subsection{Experiments on Image data}
We use the test dataset of MNIST and CIFAR-10 to further validate the robustness of our mechanism. Since the images are high-dimensional data, we choose to skip Step 1 in Algorithm \ref{f1} and \ref{f2} and instead adopt $\hat{\tran}$ and $f^{\dag}$ as suggested by \citep{nowozin2016f} (please refer to Table 6 therein). 

We take Total-Variation as an example in our experiments, and use \citep{nowozin2016f} $
\hat{\tran}(x)=f^{\dag}(\hat{\tran}(x))=\dfrac{1}{2}\tanh{(x)}.$ We adopt $a=0, b=1$ for simplicity.  For other divergences, please refer to Table \ref{Tab:f_div} in the Appendix. 

Untruthful reports are simulated by inducing the following three types of noise, with $\epsilon$ being a hyper-parameter controlling the degree of misreporting:
\squishlist
\item \textbf{Gaussian Noise:} add Gaussian-distributed (mean=0, variance=$\epsilon$) additive noise in an image.
\item \textbf{Speckle Noise:} add Speckle noise (mean=0, variance=$\epsilon$) in an image. Speckle noise \citep{scikit-image} is categorized into multiplicative noise of the clean image and Gaussian noised image.
\item \textbf{Adversarial Attack:} use a pre-trained model to apply $L_{\infty}$ PGDAttack (PGDA) in an image. We adopt the default setting of ~\citep{ding2019advertorch} while replacing the hyper-parameter epsilon with our $\epsilon$.

\squishend
Each test dataset consists of 10K images. As mentioned before, we use $\epsilon$ to denote the degree of untruthful reports by referring to truthful reports (clean images). For both MNIST and CIFAR-10, we split the dataset evenly into 200 groups (200 agents each holding a group of images). For $\epsilon\in \{0.03, 0.05, 0.07, 0.1, 0.15, 0.2, 0.25, 0.3, 0.35, 0.4, 0.45$,  $ 0.5\}$, we assume each agent submits 50 images with $\epsilon-$noise to the central designer at one time. The mean and standard deviation of agents' scores are calculated concerning these 200 submissions. 

Fréchet Inception Distance (FID score) \citep{Fid} is a widely accepted measure of similarity between two datasets of images. In our experiments, the FID score is considered as a measure of the truthfulness of agents' reports by referring to images for verification.

\paragraph{Interpretation of the visualization} In Figure 1 and 2, the $x-$axis indicates the level of untruthful report---a large $\epsilon$ represents high-level untruthful (noisy) reports. The left $y-$axis is the score (given by our proposed method) of submitted $\epsilon-$level untruthful reports, while the right $y-$axis is the FID score. The curves visualize the relationship between the untruthful reports and the corresponding score/payment given by these two scoring methods. To validate the incentive property of our proposed score functions, the score of submitted reports is supposed to be monotonically decreasing w.r.t. the increasing $\epsilon-$level of untruthfulness. FID score has the incentive property if it is monotonically increasing w.r.t. the increasing $\epsilon-$level of untruthfulness.

\paragraph{With ground truth verification} In this case, we consider the test images of MNIST and CIFAR-10 as ground truth images for verification. We report the average scores/payments of all 200 agents with their standard deviation. As shown in Figure \ref{Fig:fig1}, for untruthful reports using Gaussian and Speckle noise, a larger $\epsilon$ will lead to consistently lower score/payment, establishing the incentive-compatibility of our scoring mechanism. In this case, the untruthful report also leads to a higher FID score (less similarity) - we think this is an interesting observation implying that FID can also serve as a heuristic metric for evaluating image samples when we have ground truth verification. For PGDA untruthful reports, our mechanism is robust especially when $\epsilon$ is not too large.

\paragraph{Without ground-truth verification} When we do not have access to the ground truth, we use only %
peer-reported images for verification.
Again we report the average payment with the standard deviation. As shown in Figure \ref{Fig:fig2}, FID fails to continue to be a valid measure of truthfulness when we are using peer samples (reports) for verification. However, it is clear that our mechanism is robust to peer reports for verification: truthful reports result in a higher score.

\section{Concluding Remarks}

In this work, we introduce the problem of sample elicitation as an alternative to elicit complicated distribution. Our elicitation mechanism leverages the variational form of $f$-divergence functions to achieve accurate estimation of the divergences using samples. We provide the theoretical guarantee for both our estimators and the achieved incentive compatibility. Experiments on a synthetic dataset, MNIST, and CIFAR-10 test dataset further validate incentive properties of our mechanism. It remains an interesting problem to find out more ``organic" mechanisms for sample elicitation that requires (i) less elicited samples; and (ii) induced strict truthfulness instead of approximated ones.

\section*{Acknowledgement}
This work is partially supported by the National Science Foundation (NSF) under grant IIS-2007951.

\bibliography{graphbib,myref,library,exp}

\begin{thebibliography}{56}
\providecommand{\natexlab}[1]{#1}
\providecommand{\url}[1]{\texttt{#1}}
\expandafter\ifx\csname urlstyle\endcsname\relax
  \providecommand{\doi}[1]{doi: #1}\else
  \providecommand{\doi}{doi: \begingroup \urlstyle{rm}\Url}\fi

\bibitem[Abernethy and Frongillo(2012)]{abernethy2012characterization}
Jacob~D Abernethy and Rafael~M Frongillo.
\newblock A characterization of scoring rules for linear properties.
\newblock In \emph{Conference on Learning Theory}, pages 27--1, 2012.

\bibitem[Arjovsky et~al.(2017)Arjovsky, Chintala, and
  Bottou]{arjovsky2017wasserstein}
Martin Arjovsky, Soumith Chintala, and L{\'e}on Bottou.
\newblock Wasserstein {GAN}.
\newblock \emph{arXiv preprint arXiv:1701.07875}, 2017.

\bibitem[Arora et~al.(2017)Arora, Ge, Liang, Ma, and
  Zhang]{arora2017generalization}
Sanjeev Arora, Rong Ge, Yingyu Liang, Tengyu Ma, and Yi~Zhang.
\newblock Generalization and equilibrium in generative adversarial nets
  ({GAN}s).
\newblock In \emph{International Conference on Machine Learning}, pages
  224--232, 2017.

\bibitem[Bellemare et~al.(2017)Bellemare, Danihelka, Dabney, Mohamed,
  Lakshminarayanan, Hoyer, and Munos]{bellemare2017cramer}
Marc~G Bellemare, Ivo Danihelka, Will Dabney, Shakir Mohamed, Balaji
  Lakshminarayanan, Stephan Hoyer, and R{\'e}mi Munos.
\newblock The {C}ramer distance as a solution to biased {W}asserstein
  gradients.
\newblock \emph{arXiv preprint arXiv:1705.10743}, 2017.

\bibitem[Brier(1950)]{Brier:50}
Glenn~W. Brier.
\newblock Verification of forecasts expressed in terms of probability.
\newblock \emph{Monthly Weather Review}, 78\penalty0 (1):\penalty0 1--3, 1950.

\bibitem[Broniatowski and Keziou(2004)]{broniatowski2004parametric}
Michel Broniatowski and Amor Keziou.
\newblock Parametric estimation and tests through divergences.
\newblock Technical report, Citeseer, 2004.

\bibitem[Broniatowski and Keziou(2009)]{broniatowski2009parametric}
Michel Broniatowski and Amor Keziou.
\newblock Parametric estimation and tests through divergences and the duality
  technique.
\newblock \emph{Journal of Multivariate Analysis}, 100\penalty0 (1):\penalty0
  16--36, 2009.

\bibitem[Bu et~al.(2018)Bu, Zou, Liang, and Veeravalli]{bu2018estimation}
Yuheng Bu, Shaofeng Zou, Yingbin Liang, and Venugopal~V Veeravalli.
\newblock Estimation of {KL} divergence: Optimal minimax rate.
\newblock \emph{IEEE Transactions on Information Theory}, 64\penalty0
  (4):\penalty0 2648--2674, 2018.

\bibitem[De~Alfaro et~al.(2016)De~Alfaro, Shavlovsky, and
  Polychronopoulos]{de2016incentives}
Luca De~Alfaro, Michael Shavlovsky, and Vassilis Polychronopoulos.
\newblock Incentives for truthful peer grading.
\newblock \emph{arXiv preprint arXiv:1604.03178}, 2016.

\bibitem[Deng et~al.(2009)Deng, Dong, Socher, Li, Li, and
  Fei-Fei]{deng2009imagenet}
Jia Deng, Wei Dong, Richard Socher, Li-Jia Li, Kai Li, and Li~Fei-Fei.
\newblock Image{N}et: A large-scale hierarchical image database.
\newblock In \emph{Conference on Computer Vision and Pattern Recognition},
  pages 248--255, 2009.

\bibitem[Ding et~al.(2019)Ding, Wang, and Jin]{ding2019advertorch}
Gavin~Weiguang Ding, Luyu Wang, and Xiaomeng Jin.
\newblock Advertorch v0.1: An adversarial robustness toolbox based on pytorch.
\newblock \emph{arXiv preprint arXiv:1902.07623}, 2019.

\bibitem[Donsker and Varadhan(1975)]{donsker1975asymptotic}
Monroe~D Donsker and SR~Srinivasa Varadhan.
\newblock Asymptotic evaluation of certain {M}arkov process expectations for
  large time. {I}.
\newblock \emph{Communications on Pure and Applied Mathematics}, 28\penalty0
  (1):\penalty0 1--47, 1975.

\bibitem[Frongillo and Kash(2015{\natexlab{a}})]{frongillo2015elicitation}
Rafael Frongillo and Ian Kash.
\newblock On elicitation complexity.
\newblock In \emph{Advances in Neural Information Processing Systems}, pages
  3258--3266, 2015{\natexlab{a}}.

\bibitem[Frongillo and Kash(2015{\natexlab{b}})]{frongillo2015vector}
Rafael Frongillo and Ian~A Kash.
\newblock Vector-valued property elicitation.
\newblock In \emph{Conference on Learning Theory}, pages 710--727,
  2015{\natexlab{b}}.

\bibitem[Gao et~al.(2016)Gao, Wright, and Leyton-Brown]{gao2016incentivizing}
Alice Gao, James~R Wright, and Kevin Leyton-Brown.
\newblock Incentivizing evaluation via limited access to ground truth:
  Peer-prediction makes things worse.
\newblock \emph{arXiv preprint arXiv:1606.07042}, 2016.

\bibitem[Gao et~al.(2019)Gao, Yao, and Zhu]{gao2019generative}
Chao Gao, Yuan Yao, and Weizhi Zhu.
\newblock Generative adversarial nets for robust scatter estimation: A proper
  scoring rule perspective.
\newblock \emph{arXiv preprint arXiv:1903.01944}, 2019.

\bibitem[Gao et~al.(2017)Gao, Oh, and Viswanath]{gao2017density}
Weihao Gao, Sewoong Oh, and Pramod Viswanath.
\newblock Density functional estimators with k-nearest neighbor bandwidths.
\newblock In \emph{International Symposium on Information Theory}, pages
  1351--1355, 2017.

\bibitem[Gneiting and Raftery(2007)]{Gneiting:07}
Tilmann Gneiting and Adrian~E. Raftery.
\newblock Strictly proper scoring rules, prediction, and estimation.
\newblock \emph{Journal of the American Statistical Association}, 102\penalty0
  (477):\penalty0 359--378, 2007.

\bibitem[Goodfellow et~al.(2014)Goodfellow, Pouget-Abadie, Mirza, Xu,
  Warde-Farley, Ozair, Courville, and Bengio]{goodfellow2014generative}
Ian Goodfellow, Jean Pouget-Abadie, Mehdi Mirza, Bing Xu, David Warde-Farley,
  Sherjil Ozair, Aaron Courville, and Yoshua Bengio.
\newblock Generative adversarial nets.
\newblock In \emph{Advances in neural information processing systems}, pages
  2672--2680, 2014.

\bibitem[Gulrajani et~al.(2017)Gulrajani, Ahmed, Arjovsky, Dumoulin, and
  Courville]{gulrajani2017improved}
Ishaan Gulrajani, Faruk Ahmed, Martin Arjovsky, Vincent Dumoulin, and Aaron~C
  Courville.
\newblock Improved training of {W}asserstein {GAN}s.
\newblock In \emph{Advances in Neural Information Processing Systems}, pages
  5767--5777, 2017.

\bibitem[Han et~al.(2016)Han, Jiao, and Weissman]{han2016minimax}
Yanjun Han, Jiantao Jiao, and Tsachy Weissman.
\newblock Minimax rate-optimal estimation of divergences between discrete
  distributions.
\newblock \emph{arXiv preprint arXiv:1605.09124}, 2016.

\bibitem[Heusel et~al.(2017)Heusel, Ramsauer, Unterthiner, Nessler, and
  Hochreiter]{Fid}
Martin Heusel, Hubert Ramsauer, Thomas Unterthiner, Bernhard Nessler, and Sepp
  Hochreiter.
\newblock Gans trained by a two time-scale update rule converge to a local nash
  equilibrium.
\newblock \emph{arXiv preprint arXiv:1706.08500}, 2017.

\bibitem[Jose et~al.(2006)Jose, Nau, and Winkler]{Jose:06}
Victor~Richmond Jose, Robert~F. Nau, and Robert~L. Winkler.
\newblock Scoring rules, generalized entropy and utility maximization.
\newblock Working Paper, Fuqua School of Business, Duke University, 2006.

\bibitem[Kanamori et~al.(2011)Kanamori, Suzuki, and Sugiyama]{kanamori2011f}
Takafumi Kanamori, Taiji Suzuki, and Masashi Sugiyama.
\newblock $ f $-divergence estimation and two-sample homogeneity test under
  semiparametric density-ratio models.
\newblock \emph{IEEE Transactions on Information Theory}, 58\penalty0
  (2):\penalty0 708--720, 2011.

\bibitem[Kong and Schoenebeck(2018)]{kong2018water}
Yuqing Kong and Grant Schoenebeck.
\newblock Water from two rocks: Maximizing the mutual information.
\newblock In \emph{Conference on Economics and Computation}, pages 177--194,
  2018.

\bibitem[Kong and Schoenebeck(2019)]{kong2019information}
Yuqing Kong and Grant Schoenebeck.
\newblock An information theoretic framework for designing information
  elicitation mechanisms that reward truth-telling.
\newblock \emph{Transactions on Economics and Computation}, 7\penalty0
  (1):\penalty0 2, 2019.

\bibitem[Kong et~al.(2016)Kong, Ligett, and Schoenebeck]{kong2016putting}
Yuqing Kong, Katrina Ligett, and Grant Schoenebeck.
\newblock Putting peer prediction under the micro (economic) scope and making
  truth-telling focal.
\newblock In \emph{International Conference on Web and Internet Economics},
  pages 251--264. Springer, 2016.

\bibitem[Krizhevsky(2009)]{cifar}
A.~Krizhevsky.
\newblock Learning multiple layers of features from tiny images.
\newblock Master’s thesis, Department of Computer Science, University of
  Toronto, 2009.

\bibitem[Krizhevsky et~al.(2012)Krizhevsky, Sutskever, and
  Hinton]{krizhevsky2012imagenet}
Alex Krizhevsky, Ilya Sutskever, and Geoffrey~E Hinton.
\newblock Imagenet classification with deep convolutional neural networks.
\newblock In \emph{Advances in neural information processing systems}, pages
  1097--1105, 2012.

\bibitem[Lambert et~al.(2008)Lambert, Pennock, and
  Shoham]{lambert2008eliciting}
N.S. Lambert, D.M. Pennock, and Y.~Shoham.
\newblock Eliciting properties of probability distributions.
\newblock In \emph{Conference on Electronic Commerce}, pages 129--138, 2008.

\bibitem[LeCun et~al.(1998)LeCun, Bottou, Bengio, and Haffner]{mnist}
Yann LeCun, L{\'e}on Bottou, Yoshua Bengio, and Patrick Haffner.
\newblock Gradient-based learning applied to document recognition.
\newblock \emph{Proceedings of the IEEE}, 86\penalty0 (11):\penalty0
  2278--2324, 1998.

\bibitem[Lee and Park(2006)]{lee2006estimation}
Young~Kyung Lee and Byeong~U Park.
\newblock Estimation of {K}ullback--{L}eibler divergence by local likelihood.
\newblock \emph{Annals of the Institute of Statistical Mathematics},
  58\penalty0 (2):\penalty0 327--340, 2006.

\bibitem[Li et~al.(2018)Li, Lu, Wang, Haupt, and Zhao]{li2018tighter}
Xingguo Li, Junwei Lu, Zhaoran Wang, Jarvis Haupt, and Tuo Zhao.
\newblock On tighter generalization bound for deep neural networks: {CNN}s,
  {R}es{N}ets, and beyond.
\newblock \emph{arXiv preprint arXiv:1806.05159}, 2018.

\bibitem[Liang(2018)]{liang2018well}
Tengyuan Liang.
\newblock On how well generative adversarial networks learn densities:
  Nonparametric and parametric results.
\newblock \emph{arXiv preprint arXiv:1811.03179}, 2018.

\bibitem[Liu et~al.(2017)Liu, Bousquet, and Chaudhuri]{liu2017approximation}
Shuang Liu, Olivier Bousquet, and Kamalika Chaudhuri.
\newblock Approximation and convergence properties of generative adversarial
  learning.
\newblock In \emph{Advances in Neural Information Processing Systems}, pages
  5545--5553, 2017.

\bibitem[Matheson and Winkler(1976)]{Matheson:76}
James~E. Matheson and Robert~L. Winkler.
\newblock Scoring rules for continuous probability distributions.
\newblock \emph{Management Science}, 22\penalty0 (10):\penalty0 1087--1096,
  1976.

\bibitem[McCarthy(1956)]{McCarthy:56}
John McCarthy.
\newblock Measures of the value of information.
\newblock \emph{Proceedings of the National Academy of Sciences of the United
  States of America}, 42\penalty0 (9):\penalty0 654--655, 1956.

\bibitem[Mohri et~al.(2018)Mohri, Rostamizadeh, and
  Talwalkar]{mohri2018foundations}
Mehryar Mohri, Afshin Rostamizadeh, and Ameet Talwalkar.
\newblock \emph{Foundations of {M}achine {L}earning}.
\newblock MIT press, 2018.

\bibitem[Nguyen et~al.(2010)Nguyen, Wainwright, and
  Jordan]{nguyen2010estimating}
XuanLong Nguyen, Martin~J Wainwright, and Michael~I Jordan.
\newblock Estimating divergence functionals and the likelihood ratio by convex
  risk minimization.
\newblock \emph{IEEE Transactions on Information Theory}, 56\penalty0
  (11):\penalty0 5847--5861, 2010.

\bibitem[Nowozin et~al.(2016)Nowozin, Cseke, and Tomioka]{nowozin2016f}
Sebastian Nowozin, Botond Cseke, and Ryota Tomioka.
\newblock f-gan: Training generative neural samplers using variational
  divergence minimization.
\newblock In \emph{Advances in neural information processing systems}, pages
  271--279, 2016.

\bibitem[Ruderman et~al.(2012)Ruderman, Reid, Garc{\'\i}a-Garc{\'\i}a, and
  Petterson]{ruderman2012tighter}
Avraham Ruderman, Mark Reid, Dar{\'\i}o Garc{\'\i}a-Garc{\'\i}a, and James
  Petterson.
\newblock Tighter variational representations of $f$-divergences via
  restriction to probability measures.
\newblock \emph{arXiv preprint arXiv:1206.4664}, 2012.

\bibitem[Savage(1971)]{Savage:71}
Leonard~J. Savage.
\newblock Elicitation of personal probabilities and expectations.
\newblock \emph{Journal of the American Statistical Association}, 66\penalty0
  (336):\penalty0 783--801, 1971.

\bibitem[Schmidt-Hieber(2017)]{schmidt2017nonparametric}
Johannes Schmidt-Hieber.
\newblock Nonparametric regression using deep neural networks with relu
  activation function.
\newblock \emph{arXiv preprint arXiv:1708.06633}, 2017.

\bibitem[Schoenebeck and Yu(2020)]{schoenebeck2020learning}
Grant Schoenebeck and Fang-Yi Yu.
\newblock Learning and strongly truthful multi-task peer prediction: A
  variational approach.
\newblock \emph{arXiv preprint arXiv:2009.14730}, 2020.

\bibitem[Srivastava et~al.(2014)Srivastava, Hinton, Krizhevsky, Sutskever, and
  Salakhutdinov]{srivastava2014dropout}
Nitish Srivastava, Geoffrey Hinton, Alex Krizhevsky, Ilya Sutskever, and Ruslan
  Salakhutdinov.
\newblock Dropout: a simple way to prevent neural networks from overfitting.
\newblock \emph{The Journal of Machine Learning Research}, 15\penalty0
  (1):\penalty0 1929--1958, 2014.

\bibitem[Steinwart et~al.(2014)Steinwart, Pasin, Williamson, and
  Zhang]{steinwart2014elicitation}
Ingo Steinwart, Chlo{\'e} Pasin, Robert Williamson, and Siyu Zhang.
\newblock Elicitation and identification of properties.
\newblock In \emph{Conference on Learning Theory}, pages 482--526, 2014.

\bibitem[Stone(1982)]{stone1982optimal}
Charles~J Stone.
\newblock Optimal global rates of convergence for nonparametric regression.
\newblock \emph{The annals of statistics}, pages 1040--1053, 1982.

\bibitem[Sugiyama et~al.(2012)Sugiyama, Suzuki, and
  Kanamori]{sugiyama2012density}
Masashi Sugiyama, Taiji Suzuki, and Takafumi Kanamori.
\newblock \emph{Density {R}atio {E}stimation in {M}achine {L}earning}.
\newblock Cambridge University Press, 2012.

\bibitem[Suzuki et~al.(2008)Suzuki, Sugiyama, Sese, and
  Kanamori]{suzuki2008approximating}
Taiji Suzuki, Masashi Sugiyama, Jun Sese, and Takafumi Kanamori.
\newblock Approximating mutual information by maximum likelihood density ratio
  estimation.
\newblock In \emph{New challenges for feature selection in data mining and
  knowledge discovery}, pages 5--20, 2008.

\bibitem[van~de Geer and van~de Geer(2000)]{van2000empirical}
Sara~A van~de Geer and Sara van~de Geer.
\newblock \emph{Empirical Processes in M-estimation}, volume~6.
\newblock Cambridge university press, 2000.

\bibitem[Van~der Walt et~al.(2014)Van~der Walt, Sch{\"o}nberger,
  Nunez-Iglesias, Boulogne, Warner, Yager, Gouillart, and Yu]{scikit-image}
Stefan Van~der Walt, Johannes~L Sch{\"o}nberger, Juan Nunez-Iglesias,
  Fran{\c{c}}ois Boulogne, Joshua~D Warner, Neil Yager, Emmanuelle Gouillart,
  and Tony Yu.
\newblock scikit-image: image processing in python.
\newblock \emph{PeerJ}, 2:\penalty0 e453, 2014.

\bibitem[Wang et~al.(2005)Wang, Kulkarni, and Verd{\'u}]{wang2005divergence}
Qing Wang, Sanjeev~R Kulkarni, and Sergio Verd{\'u}.
\newblock Divergence estimation of continuous distributions based on
  data-dependent partitions.
\newblock \emph{IEEE Transactions on Information Theory}, 51\penalty0
  (9):\penalty0 3064--3074, 2005.

\bibitem[Wang et~al.(2009)Wang, Kulkarni, and Verd{\'u}]{wang2009divergence}
Qing Wang, Sanjeev~R Kulkarni, and Sergio Verd{\'u}.
\newblock Divergence estimation for multidimensional densities via $ k
  $-nearest-neighbor distances.
\newblock \emph{IEEE Transactions on Information Theory}, 55\penalty0
  (5):\penalty0 2392--2405, 2009.

\bibitem[Winkler(1969)]{Win:69}
Robert~L. Winkler.
\newblock Scoring rules and the evaluation of probability assessors.
\newblock \emph{Journal of the American Statistical Association}, 64\penalty0
  (327):\penalty0 1073--1078, 1969.

\bibitem[Zhang and Grabchak(2014)]{zhang2014nonparametric}
Zhiyi Zhang and Michael Grabchak.
\newblock Nonparametric estimation of {K}ullback--{L}eibler divergence.
\newblock \emph{Neural computation}, 26\penalty0 (11):\penalty0 2570--2593,
  2014.

\bibitem[Zhou(2018)]{zhou2018fenchel}
Xingyu Zhou.
\newblock On the {F}enchel duality between strong convexity and {L}ipschitz
  continuous gradient.
\newblock \emph{arXiv preprint arXiv:1803.06573}, 2018.

\end{thebibliography}

\newpage
\onecolumn
\appendix
\section*{Appendix}

\section{Auxiliary Results via Deep Neural Networks} \label{sec:nn-app}

\subsection{Estimation via Deep Neural Networks}\label{sec:est-dnn}

Since the most general estimator $D_f^\natural (q\|p)$ proposed in \eqref{eq:gen-form} requires solving an optimization problem over a function space, which is usually intractable, we introduce an estimator of the $f$-divergence $D_f(q\| p)$ using the family of deep neural networks in this section.    We now define the family of deep neural networks as follows. 

\begin{definition}\label{def:nn_set}
Given a vector $k = (k_0, \ldots, k_{L+1})\in\NN^{L+2}$, where $k_0 = d$ and $k_{L+1} = 1$, the family of deep neural networks is defined as
\$
\Phi(L, k) = \{ & \varphi(x; W, v) = W_{L+1}\sigma_{v_L}\cdots W_{2}\sigma_{ v_1}W_1 x\colon \notag\\
&\qquad   W_{j}\in\RR^{k_j\times k_{j-1}}, v_{j}\in\RR^{k_j} \}. 
\$
where $\sigma_{v}(x) = \max\{0, x-v\}$ is the ReLU activation function. 
\end{definition}

To avoid overfitting, the sparsity of the deep neural networks is a typical assumption  in  deep learning literature.   In practice, such a sparsity property is achieved through certain techniques, e.g., dropout \citep{srivastava2014dropout},  or certain network architecture, e.g., convolutional neural network \citep{krizhevsky2012imagenet}.  We now define the family of sparse neural networks as follows, 
\#\label{def:nn_model}
\Phi_M(L,k, s) = \Bigl\{ & \varphi(x; W, v) \in \Phi(L, d)\colon \|\varphi\|_\infty\leq M,  \|W_j\|_\infty\leq 1~\text{for~}j\in[L+1],\notag\\
& \|v_j\|_\infty  \leq 1~\text{for~}j\in[L],  ~ \sum_{j = 1}^{L+1}\|W_j\|_0 + \sum_{j = 1}^L \|v_j\|_0\leq s \Bigr\},
\#
where $s$ is the sparsity.  In contrast, another approach to avoid overfitting in deep learning literature is to control the norm of parameters \citep{li2018tighter}.  See Section \S\ref{subsection:norm_control} for details.    

Consider the following estimators via deep neural networks, 
\#\label{problem_formulation}
&\hat t(\cdot;p,q) = \argmin_{t\in\Phi_M(L,k,s)} \EE_{x\sim\PP_n}[ f^\dagger(t(x))] - \EE_{x\sim\QQ_n}[t(x)] ,\notag\\
&\hat  D_f(q\|p) =  \EE_{x\sim\QQ_n}[ \hat t(x;p,q) ] -\EE_{x\sim\PP_n}[ f^\dagger(\hat t(x;p,q) )].
\#

The following theorem characterizes the statistical rate of convergence of the estimator proposed in \eqref{problem_formulation}. 

\begin{theorem}\label{thm:main}
Let  $L = \cO(\log n)$, $s = \cO(N \log n)$, and $k = (d, d, \cO(dN), \cO(dN), \ldots, \cO(dN), 1)$ in \eqref{def:nn_model}, where $N = n^{d/(2\beta + d)}$. Under Assumptions \ref{assum:ratio}, \ref{assum:beta-holder}, and \ref{assumption:reg}, if $d < 2\beta$, then with probability at least $1- \exp\{-n^{d/(2\beta+d)}\log^5 n\}$, we have
\$
| \hat D_f(q\| p) - D_f(q\| p)| \lesssim n^{-{\beta}/{(2\beta + d)}} \log^{7/2} n. 
\$ 
\end{theorem}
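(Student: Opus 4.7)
The plan is to adapt the general error bound from Theorem \ref{thm:gen} to the deep-network class $\Phi_M(L,k,s)$, which, unlike the generic $\Phi$ assumed there, does not contain the optimizer $t^*(\cdot; p,q)$ exactly. I would therefore decompose the error through a best-approximant $\tilde t \in \Phi_M(L,k,s)$ of $t^*$, writing
$|\hat D_f(q\|p) - D_f(q\|p)| \le |\hat D_f(q\|p) - \tilde D_f(q\|p)| + |\tilde D_f(q\|p) - D_f(q\|p)|$,
where $\tilde D_f$ is the variational divergence value obtained when $\tilde t$ replaces $\hat t$. The first term is a stochastic/estimation error and the second is an approximation error; the choice $N = n^{d/(2\beta+d)}$ is meant to balance them.

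For the approximation piece I would invoke the Yarotsky / Schmidt-Hieber approximation theorem: any function in $\cC_d^\beta(\Omega, M)$ can be approximated in $L_\infty$ to accuracy $N^{-\beta/d}$ by a ReLU network with depth $\cO(\log N)$, width $\cO(dN)$, and $s = \cO(N \log N)$ active parameters. Under Assumption \ref{assum:beta-holder}, this applies to $t^*$, so the stated scalings of $L$, $k$, and $s$ give $\|\tilde t - t^*\|_\infty \lesssim n^{-\beta/(2\beta+d)}$. Combined with Assumption \ref{assum:ratio} (which confines the relevant domain of $f^\dagger$ to $[\theta_0,\theta_1]$) and the Lipschitz-gradient part of Assumption \ref{assumption:reg}, a first-order expansion of the functional $t \mapsto \EE_{\PP}[f^\dagger(t)] - \EE_{\QQ}[t]$ around $t^*$ transfers this sup-norm bound to $|\tilde D_f - D_f| \lesssim n^{-\beta/(2\beta+d)}$ up to constants.

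For the stochastic piece I would use the standard covering-number bound $\log N_2(\delta, \Phi_M(L,k,s)) \lesssim sL\log(sL/\delta)$ for sparse ReLU networks, which under the prescribed scalings is of order $n^{d/(2\beta+d)}\log^{3} n$. A uniform Bernstein-plus-chaining argument applied to the two empirical processes $t \mapsto \EE_{\PP_n}[f^\dagger(t)] - \EE_{\PP}[f^\dagger(t)]$ and $t \mapsto \EE_{\QQ_n}[t] - \EE_{\QQ}[t]$ over the class $\Phi_M(L,k,s)$ then controls $|\hat D_f - \tilde D_f|$. Because $\hat t$ is the empirical minimizer, the $\mu_0$-strong convexity of $f$ in Assumption \ref{assumption:reg} allows a localization / peeling step that converts the empirical basic inequality into an $L_2(\PP)$ bound on $\hat t - \tilde t$, which squared produces the divergence error at the fast rate rather than the slower $\sqrt{\log N_2 / n}$ rate. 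Tracking the chaining integral, the boundedness constants, and the high-probability Bernstein tails is what produces the $\log^{7/2} n$ factor and the exponential confidence level $1 - \exp\{-n^{d/(2\beta+d)}\log^5 n\}$ quoted in the theorem.

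The main obstacle I expect is precisely this bookkeeping: merging the approximation and estimation bounds while retaining the strong-convexity-driven localization, and tracking all of the logarithmic contributions arising from (i) the depth-$\cO(\log n)$ network, (ii) the covering number, (iii) the chaining integral, and (iv) the tail inversion. The hypothesis $d < 2\beta$ enters as the regime in which the approximation rate $n^{-\beta/(2\beta+d)}$ is fast enough that the localization radius shrinks with $n$ and the polynomial gain is not washed out by the residual logarithmic perturbations; outside this regime one would need a different trade-off between $N$ and the other network hyperparameters, and the clean rate stated in Theorem \ref{thm:main} would no longer follow from this argument.
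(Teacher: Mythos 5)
Your plan and the paper's proof use the same core machinery---the Schmidt--Hieber ReLU approximation theorem (paper's Theorem 16) to produce $\tilde t$ with $\|\tilde t - t^*\|_\infty \lesssim N^{-\beta/d}$, a covering-number bound for sparse networks (Lemma 11), and a basic-inequality-plus-strong-convexity localization (Lemma 10 together with the van de Geer chaining bound, Theorem 17). The paper's two-step route is to first prove the $L_2(\PP)$ oracle bound $\|\hat t - t^*\|_{L_2(\PP)} \lesssim n^{-\beta/(2\beta+d)}\log^{7/2}n$ (its Theorem 8) and then expand $\hat D_f - D_f$ into five terms $B_1,\dots,B_5$, where $B_1,B_2$ are localized empirical processes, $B_3 = |\EE_\QQ[\hat t - t^*] - \EE_\PP[f^\dagger(\hat t) - f^\dagger(t^*)]|$ is bounded \emph{linearly} by $\|\hat t - t^*\|_{L_2(\PP)}$, and $B_4,B_5$ are Hoeffding terms on the fixed $t^*$. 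Your approximation-plus-estimation split through the population-level $\tilde D_f^{\mathrm{pop}}$ reorganizes the same pieces and reaches the same rate, so the two arguments are essentially interchangeable.

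One caveat on your bookkeeping: you write that the localization ``converts the empirical basic inequality into an $L_2(\PP)$ bound on $\hat t - \tilde t$, which squared produces the divergence error at the fast rate.'' That is not quite how the rate balances out. After unwinding $|\hat D_f - \tilde D_f^{\mathrm{pop}}|$ you are left with, among other things, the concentration of $\EE_{\QQ_n}[\tilde t] - \EE_\QQ[\tilde t]$ and $\EE_{\PP_n}[f^\dagger(\tilde t)] - \EE_\PP[f^\dagger(\tilde t)]$ on the \emph{fixed} approximant; these are controlled only by Hoeffding, and at the confidence level $1 - \exp\{-n^{d/(2\beta+d)}\log^5 n\}$ they already sit at the linear rate $n^{-\beta/(2\beta+d)}$. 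So the estimation piece is \emph{not} quadratic; the final rate is linear, just as in the paper's $B_3,B_4,B_5$. This does not break your argument, since your approximation piece is also at the linear rate, but the intuition you state for why the estimation term is small is misdirected. A smaller slip: the half of Assumption 3 that drives the localization lower bound is the $L_0$-Lipschitz gradient of $f$ (equivalently, $1/L_0$-strong convexity of $f^\dagger$; the paper's Lemma 18), not the $\mu_0$-strong convexity of $f$, whose role is to make $f^\dagger$ have Lipschitz gradient.
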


We defer the proof of the theorem in Section \S \ref{proof:thm:main}. 
By Theorem \ref{thm:main}, the estimators in \eqref{problem_formulation} achieve the optimal nonparametric rate of convergence \citep{stone1982optimal} up to a logarithmic term.  We can see that by setting $\gamma_\Phi = d/\beta$ in Theorem \ref{thm:gen}, we recover the result in Theorem \ref{thm:main}. 
By \eqref{eq:eps2} and Theorem \ref{thm:main}, we have 
\$
&  \delta(n) = 1- \exp\{-n^{d/(2\beta+d)}\log^5 n\}, \qquad \epsilon(n) = c \cdot n^{-{\beta}/{(2\beta + d)}} \log^{7/2} n,
\$
where $c$ is a positive absolute constant.

\subsection{Reconstruction via Deep Neural Networks}

To utilize the estimator $\hat D_f(q\|p)$ proposed via deep neural networks in Section \S \ref{sec:est-dnn}, we propose the following estimator, 
\#
& \hat q = \argmin_{q\in \cQ}\hat D_f(q\|p), \label{eq:q_outer_sample}
\# 
where $ \hat D_f(q\|p)$ is given in \eqref{problem_formulation}. 

We impose the following assumption on the covering number of the probability density function space $\cQ$.

\begin{assumption}\label{assum:cov_num}
We have $N_2(\delta, \cQ) = \cO( \exp\{ \delta^{-d/\beta} \})$.
\end{assumption}

The following theorem characterizes the error bound of estimating $q^*$ by $\hat q$.
\begin{theorem}\label{theorem:outer_prob_sparsity} 
Under the same assumptions in Theorem \ref{thm:main}, further if Assumption \ref{assum:cov_num} holds, for sufficiently large sample size $n$, with probability at least $1-1/n$, we have
\$
D_f(\hat q\|p) & \lesssim n^{-\frac{\beta}{2\beta+d}}\cdot \log^7 n + \min_{\tilde q\in\cQ} D_f(\tilde q\|p). 
\$
\end{theorem}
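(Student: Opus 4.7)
The plan is to upgrade the pointwise statistical rate in Theorem \ref{thm:main} to a uniform bound over the density class $\cQ$ via a covering argument, and then to combine that uniform bound with the defining optimality of $\hat q$. Concretely, let $\epsilon_n = n^{-\beta/(2\beta+d)}\log^{7/2} n$ denote the rate from Theorem \ref{thm:main}. For each fixed $q\in\cQ$ we already know that
$$
|\hat D_f(q\|p) - D_f(q\|p)| \lesssim \epsilon_n
$$
on an event of probability at least $1 - \exp\{-n^{d/(2\beta+d)}\log^5 n\}$. The first step is to promote this to
$$
\sup_{q\in\cQ}|\hat D_f(q\|p) - D_f(q\|p)| \lesssim \epsilon_n\cdot \log n
$$
on an event of probability at least $1-1/n$.

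To do that I would choose a $\delta_n$-net $\cQ_{\delta_n}\subset\cQ$ under the $L_2$ norm with $\delta_n \asymp n^{-\beta/(2\beta+d)}$. By Assumption \ref{assum:cov_num} its cardinality is at most $\exp\{\delta_n^{-d/\beta}\} = \exp\{n^{d/(2\beta+d)}\}$, which is dominated by $\exp\{n^{d/(2\beta+d)}\log^5 n\}$. A union bound over $\cQ_{\delta_n}$ of the event in Theorem \ref{thm:main} therefore still leaves failure probability at most $1/n$ for large $n$. To pass from the net back to all of $\cQ$, I would use the regularity of $f^\dagger$ (Assumption \ref{assumption:reg}) and the boundedness of $\hat t$ (since $\hat t\in \Phi_M(L,k,s)$, so $\|\hat t\|_\infty\le M$) to argue that both $q\mapsto D_f(q\|p)$ and $q\mapsto \hat D_f(q\|p)$ are Lipschitz in $q$ with respect to the $L_2$ norm, with a constant absorbed by a polylog factor. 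The discretization error is then of order $\delta_n$, which matches $\epsilon_n$ up to logarithms.

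Given the uniform bound, the conclusion is a short two-line argument. For any $\tilde q\in\cQ$, on the good event,
$$
D_f(\hat q\|p) \le \hat D_f(\hat q\|p) + \epsilon_n\log n \le \hat D_f(\tilde q\|p) + \epsilon_n\log n \le D_f(\tilde q\|p) + 2\epsilon_n\log n,
$$
where the middle inequality uses the definition of $\hat q$ in \eqref{eq:q_outer_sample} as minimizer of $\hat D_f(\cdot\|p)$ over $\cQ$. Taking the infimum over $\tilde q\in\cQ$ and noting that $\epsilon_n\log n \lesssim n^{-\beta/(2\beta+d)}\log^{9/2} n \le n^{-\beta/(2\beta+d)}\log^7 n$ yields the stated bound.

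The main obstacle I expect is the covering step. Theorem \ref{thm:main} applies for each fixed pair $(p,q)$ via properties of the optimal critic $t^*(\cdot;p,q)$ (bounded density ratio and $\beta$-Hölder smoothness), so when taking the union over $q\in \cQ_{\delta_n}$ one must be sure these qualitative assumptions hold uniformly in $q\in\cQ$, not just for the single target density. The clean way to handle this is to assume (or verify from Assumption \ref{assum:cov_num}) that every $q\in\cQ$ has density ratio $q/p\in[\theta_0,\theta_1]$ and $f'(q/p)\in\cC_d^\beta(\Omega,M)$; then the Theorem \ref{thm:main} bound applies with the same constants throughout $\cQ$ and the union bound goes through cleanly. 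The Lipschitz step for passing from the net to $\cQ$ is routine once $\hat t$ and $t^*$ are known to be uniformly bounded, since $f^\dagger$ is smooth on the range of these critics.
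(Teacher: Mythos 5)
Your argument is correct and essentially reproduces the paper's proof: the paper proves Lemma~\ref{lemma:for_all_hold_sparsity2} (a uniform-in-$q$ version of Theorem~\ref{thm:main} via an $L_2$ cover of $\cQ$ and a union bound) and then concludes with exactly the same three-step chain you write, using the optimality of $\hat q$ to pass from $\hat D_f(\hat q\|p)$ to $\hat D_f(\tilde q'\|p)$ and back to $D_f(\tilde q'\|p)$. Two details you flag are actually handled less carefully in the paper than in your sketch: the paper's union bound silently replaces $\sup_{q\in\cQ}$ by the max over net points without recording the discretization (Lipschitz) error term, and it tacitly assumes the hypotheses of Theorem~\ref{thm:main} (bounded density ratio $q/p$ and $\beta$-H\"older regularity of $t^*(\cdot;p,q)$) hold uniformly over $q\in\cQ$ rather than just for one fixed $q$; both points you raise are the right ones to close the gap, and your $\log^{9/2}n$ bookkeeping is if anything tighter than the stated $\log^7 n$.
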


The proof of Theorem \ref{theorem:outer_prob_sparsity} is deferred in Section \S\ref{subsection:proof_outer_prob_sparsity2}. We can see that by setting $\gamma_\Phi = d/\beta$ in Theorem \ref{theorem:outer_prob_sparsity}, we recover the result in Theorem \ref{theorem:outer_prob_sparsity1}.

\subsection{Auxiliary Results on Sparsity Control} 

In this section, we provide some auxiliary results on \eqref{problem_formulation}.  We first state an oracle inequality showing the rate of convergence of $\hat t(x; p,q)$.

\begin{theorem}\label{theorem:oracle_ineq}
Given $0<\varepsilon<1$, for any sample size $n$ satisfies that $n\gtrsim [\gamma + \gamma^{-1}\log(1/\varepsilon)]^2$, under Assumptions \ref{assum:ratio}, \ref{assum:beta-holder}, and \ref{assumption:reg}, it holds that
\$
\| \hat t - t^*\|_{L_2(\PP)} \lesssim  \min_{\tilde t\in\Phi_M(L,k,s)}  \|\tilde t-t^*\|_{L_2(\PP)} +   \gamma n^{-1/2}\log n  +  n^{-1/2} [\sqrt{\log(1/\varepsilon)} + \gamma^{-1}\log(1/\varepsilon) ]
\$
with probability at least $1-\varepsilon\cdot\exp(-\gamma^2)$. Here $\gamma = s^{1/2}\log(V^2 L)$ and $V=\prod_{j = 0}^{L+1}(k_j+1)$. 
\end{theorem}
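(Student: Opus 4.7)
The plan is to establish the oracle inequality for $\hat t$ via standard empirical process techniques adapted to the sparse deep ReLU network class $\Phi_M(L,k,s)$. The proof will have three ingredients: (i) a quadratic lower bound on the excess risk in terms of $\|\cdot\|_{L_2(\PP)}$; (ii) an approximation/estimation decomposition against an arbitrary $\tilde t\in\Phi_M(L,k,s)$; and (iii) a uniform empirical-process bound over $\Phi_M(L,k,s)$ expressed in terms of its metric entropy, which is where the quantity $\gamma=s^{1/2}\log(V^2L)$ enters.

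First I would introduce the population and empirical loss functionals $\cL(t)=\E_{x\sim\PP}[f^\dagger(t(x))]-\E_{x\sim\QQ}[t(x)]$ and $\hat\cL(t)=\E_{x\sim\PP_n}[f^\dagger(t(x))]-\E_{x\sim\QQ_n}[t(x)]$, for which $t^*$ minimizes $\cL$ and $\hat t$ minimizes $\hat\cL$ over $\Phi_M(L,k,s)$. Assumption \ref{assumption:reg} implies that $f^\dagger$ is $(1/\mu_0)$-smooth and $(1/L_0)$-strongly convex on the image of the bounded density ratio (Assumption \ref{assum:ratio}), and since all candidates $t$ as well as $t^*$ are $L_\infty$-bounded by $M$, a second-order Taylor expansion of $\cL$ around $t^*$ yields
$$
\mu\cdot \|t-t^*\|_{L_2(\PP)}^2 \;\le\; \cL(t)-\cL(t^*) \;\le\; L\cdot \|t-t^*\|_{L_2(\PP)}^2
$$
for positive absolute constants $\mu,L$ depending only on $\mu_0,L_0,\theta_0,\theta_1$. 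This is the key convexity fact that turns excess risk into an $L_2(\PP)$ error and back.

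Next, for any fixed $\tilde t\in\Phi_M(L,k,s)$, I would split
$$
\cL(\hat t)-\cL(t^*) = [\cL(\hat t)-\hat\cL(\hat t)]+[\hat\cL(\hat t)-\hat\cL(\tilde t)]+[\hat\cL(\tilde t)-\cL(\tilde t)]+[\cL(\tilde t)-\cL(t^*)].
$$
The middle term is non-positive by the optimality of $\hat t$, and the last term is bounded by $L\|\tilde t-t^*\|_{L_2(\PP)}^2$. The remaining two differences are controlled uniformly by the centered empirical process $G_n(t)=(\cL-\hat\cL)(t)-(\cL-\hat\cL)(t^*)$ indexed by $t\in\Phi_M(L,k,s)$. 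Since $f^\dagger$ is Lipschitz on $[-M,M]$, the envelope of the class $\{f^\dagger\circ t-t:t\in\Phi_M(L,k,s)\}$ is uniformly bounded and its $L_2$ increments are controlled by those of $\{t-t^*\}$. Standard covering-number bounds for sparse ReLU networks (e.g., Bartlett et al., Schmidt-Hieber) give $\log N_\infty(\delta,\Phi_M(L,k,s))\lesssim s\log(V L/\delta)$, so that the entropy integral of the localized class $\{t\in\Phi_M(L,k,s):\|t-t^*\|_{L_2(\PP)}\le r\}$ scales like $r\cdot \gamma\log n$ after integrating up to resolution $1/n$.

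The core technical step is a localized (peeled) Bernstein/Talagrand inequality: on shells $\{t:2^{j-1}r_0\le \|t-t^*\|_{L_2(\PP)}\le 2^j r_0\}$ with $r_0\asymp \gamma n^{-1/2}\log n$, one obtains with probability at least $1-\varepsilon\exp(-\gamma^2)$ the uniform bound
$$
|G_n(t)|\;\lesssim\; \|t-t^*\|_{L_2(\PP)}\cdot\bigl[\gamma n^{-1/2}\log n + n^{-1/2}\sqrt{\log(1/\varepsilon)}\bigr] + n^{-1}\gamma^{-1}\log(1/\varepsilon)\cdot\|t-t^*\|_{L_2(\PP)},
$$
valid on the event where the sample-size condition $n\gtrsim[\gamma+\gamma^{-1}\log(1/\varepsilon)]^2$ allows the linear regime of Bernstein's inequality to dominate the variance term over the quadratic tail. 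Combining this bound with $\mu\|\hat t-t^*\|_{L_2(\PP)}^2\le\cL(\hat t)-\cL(t^*)\le G_n(\hat t)+L\|\tilde t-t^*\|_{L_2(\PP)}^2$ and absorbing the linear-in-$\|\hat t-t^*\|_{L_2(\PP)}$ contribution via $ab\le a^2/2+b^2/2$ yields the stated inequality after taking the infimum over $\tilde t\in\Phi_M(L,k,s)$.

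The main obstacle will be the uniform empirical-process bound with the correct $\gamma$-dependence. A naive global Hoeffding/Dudley argument gives the rate $\gamma^2/n$ rather than the localized rate $\gamma n^{-1/2}\log n$ required by the theorem. Getting the sharper rate forces the use of a localized chaining or peeling argument in the $L_2(\PP)$ metric, together with a careful Bernstein-type concentration that uses the variance $\mathrm{Var}(f^\dagger(t(X))-t(X))\lesssim \|t-t^*\|_{L_2(\PP)}^2$ (itself a consequence of the smoothness of $f^\dagger$). Verifying that the covering numbers of sparse ReLU networks plug into the generic chaining bound with the constant $\log(V^2L)$ exactly as claimed — and that the peeling closes precisely under the stated lower bound on $n$ — is the technical crux of the argument.
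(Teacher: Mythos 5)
Your proposal is correct in spirit and reaches the same conclusion, but it organizes the argument differently from the paper. The paper's route (its Lemma~\ref{lemma:ineq2}) starts from the empirical optimality $\hat{\cL}(\hat t)\le \hat{\cL}(\tilde t)$ and, using convexity of $\hat{\cL}$, passes to the \emph{midpoint} $(\hat t + \tilde t)/2$, i.e.\ $\hat{\cL}\bigl((\hat t+\tilde t)/2\bigr)\le\hat{\cL}(\tilde t)$; rearranging produces a population Bregman-type quantity $B_f\bigl(\tilde t,(\hat t+\tilde t)/2\bigr)$ on one side and centered empirical-process increments on the other. Lower-bounding $B_f(\tilde t,\cdot)$ by strong convexity of $f^\dagger$, plus a Cauchy--Schwarz correction for the off-center term (since $B_f$ is centered at $\tilde t$ rather than at $t^*$), yields $\tfrac{1}{4L_0}\|\hat t-\tilde t\|_{L_2(\PP)}^2\le \tfrac{1}{\mu_0}\|\hat t-\tilde t\|_{L_2(\PP)}\|\tilde t-t^*\|_{L_2(\PP)} + (\text{emp.\ proc.})$, after which the paper does explicit casework on which of the three RHS terms dominates and finishes with the triangle inequality $\|\hat t-t^*\|\le\|\hat t-\tilde t\|+\|\tilde t-t^*\|$. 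You instead use the textbook oracle decomposition of the population excess risk around $t^*$, $\cL(\hat t)-\cL(t^*)=G_n(\hat t)-G_n(\tilde t)+[\hat\cL(\hat t)-\hat\cL(\tilde t)]+[\cL(\tilde t)-\cL(t^*)]$, kill the middle term by optimality, bound the excess risk above and below by $\|\cdot\|_{L_2(\PP)}^2$ via strong convexity and smoothness of $f^\dagger$ (exactly what Lemma~\ref{lemma:strong_convex} supplies), control $G_n$ by a localized Bernstein/peeling bound, and close by absorbing the linear term via $ab\le a^2/2+b^2/2$. Both routes rest on the same ingredients --- convexity of $f^\dagger$, the sparse-ReLU covering number $\log N_\infty(\delta,\Phi_M)\lesssim s\log(VL/\delta)$ that produces $\gamma=s^{1/2}\log(V^2L)$, and a peeling argument (the paper invokes van de Geer's Theorem~\ref{thm:vandegeer} through Lemma~\ref{lemma:fake_vdg}). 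Your route is the more conventional M-estimation argument; the paper's midpoint trick is a device that keeps the localization centered at a function $\tilde t$ that actually lies in $\Phi_M$, whereas your localization is centered at $t^*\notin\Phi_M$, which is benign but worth remarking on when setting up the peeling shells.

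Two small corrections. First, your stated localized bound $|G_n(t)|\lesssim\|t-t^*\|_{L_2(\PP)}[\gamma n^{-1/2}\log n+n^{-1/2}\sqrt{\log(1/\varepsilon)}]+n^{-1}\gamma^{-1}\log(1/\varepsilon)\|t-t^*\|_{L_2(\PP)}$ has the wrong shape: the Bernstein-type bound should be of the form $\eta(n,\gamma,\varepsilon)\cdot\|t-t^*\|_{L_2(\PP)}\vee\lambda(n,\gamma,\varepsilon)$ with a \emph{flat} second regime $\lambda=n^{-1}[\gamma^2+\log(1/\varepsilon)]$ that does not scale with $\|t-t^*\|$ (compare Lemma~\ref{lemma:fake_vdg}); otherwise the absorption step would not produce the additive $n^{-1/2}[\sqrt{\log(1/\varepsilon)}+\gamma^{-1}\log(1/\varepsilon)]$ term in the theorem. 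Second, your excess-risk display drops the $-G_n(\tilde t)$ contribution; it is harmless since $\tilde t$ is also covered by the uniform bound, but the omission should be fixed for the algebra to close. With those repaired, your argument yields the theorem.
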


We defer the proof of to Section \S \ref{section:proof_oracle_ineq}.

As a by-product, note that $t^*(x; p, q) = f'(\theta^*(x; p, q)) = f'(q(x)/p(x))$, based on the error bound established in Theorem \ref{theorem:oracle_ineq}, we obtain the following result.

\begin{corollary}
Given $0<\varepsilon<1$, for the sample size $n\gtrsim [\gamma + \gamma^{-1}\log(1/\varepsilon)]^2$, under Assumptions \ref{assum:ratio}, \ref{assum:beta-holder}, and \ref{assumption:reg}, it holds with probability at least $1-\varepsilon\cdot\exp(-\gamma^2)$ that 
\$
\| \hat \theta - \theta^*\|_{L_2(\PP)}  \lesssim \min_{\tilde t\in\Phi_M(L, k, s)}  \|\tilde t-t^*\|_{L_2(\PP)}  +   \gamma n^{-1/2}\log n  +  n^{-1/2} [\sqrt{\log(1/\varepsilon)} + \gamma^{-1}\log(1/\varepsilon) ].
\$
Here $\gamma = s^{1/2}\log(V^2 L)$ and $V=\prod_{j = 0}^{L+1}(k_j+1)$. 
\end{corollary}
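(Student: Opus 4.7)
The plan is to derive this corollary as a direct consequence of Theorem \ref{theorem:oracle_ineq} by exploiting the relationship between $t^*$ and $\theta^*$ that is noted just before the statement: $t^*(x;p,q) = f'(\theta^*(x;p,q))$, and similarly the estimator $\hat\theta$ is defined through $\hat t = f'(\hat\theta)$, so that $\hat\theta = (f')^{-1}(\hat t)$ and $\theta^* = (f')^{-1}(t^*)$. The entire argument therefore reduces to transferring an $L_2(\PP)$ bound on $\hat t - t^*$ into one on $\hat\theta - \theta^*$ through Lipschitz continuity of $(f')^{-1}$.

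First I would invoke Assumption \ref{assumption:reg}: $f$ is $\mu_0$-strongly convex on $[\theta_0,\theta_1]$, which is equivalent to the monotonicity estimate $|f'(a) - f'(b)| \geq \mu_0 |a-b|$ for $a,b \in [\theta_0,\theta_1]$. By Assumption \ref{assum:ratio}, the true ratio $\theta^*$ takes values in this interval, and the estimator $\hat t$ is constructed from the truncated network class $\Phi_M(L,k,s)$ so that the corresponding $\hat\theta = (f')^{-1}(\hat t)$ is also well-defined and bounded, pushing the range back inside $[\theta_0,\theta_1]$ (or an enlargement thereof on which the same strong-convexity constant still applies up to an absolute factor). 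On this range, $(f')^{-1}$ is $(1/\mu_0)$-Lipschitz, giving the pointwise bound
\[
|\hat\theta(x) - \theta^*(x)| \leq \mu_0^{-1}\, |\hat t(x) - t^*(x)|
\]
for every $x \in \Omega$. Squaring and integrating against $\PP$ yields $\|\hat\theta - \theta^*\|_{L_2(\PP)} \leq \mu_0^{-1}\, \|\hat t - t^*\|_{L_2(\PP)}$.

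Next I would simply plug in the oracle inequality from Theorem \ref{theorem:oracle_ineq}, which, under the same assumptions and the same condition $n \gtrsim [\gamma + \gamma^{-1}\log(1/\varepsilon)]^2$, guarantees with probability at least $1 - \varepsilon \cdot \exp(-\gamma^2)$ that $\|\hat t - t^*\|_{L_2(\PP)}$ is bounded by the three claimed terms: the approximation error $\min_{\tilde t\in\Phi_M(L,k,s)}\|\tilde t - t^*\|_{L_2(\PP)}$, the estimation term $\gamma\, n^{-1/2}\log n$, and the confidence correction $n^{-1/2}[\sqrt{\log(1/\varepsilon)} + \gamma^{-1}\log(1/\varepsilon)]$. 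The factor $1/\mu_0$ is absorbed into the suppressed constant in $\lesssim$, and the stated bound follows immediately.

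The argument is largely routine; the only mildly delicate point is ensuring that the inputs to $(f')^{-1}$ lie in a range where Assumption \ref{assumption:reg}'s strong convexity is valid. This is handled by the truncation baked into $\Phi_M(L,k,s)$ together with the boundedness of $\theta^*$, so no essential new obstacle arises beyond what Theorem \ref{theorem:oracle_ineq} has already overcome.
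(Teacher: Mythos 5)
Your proposal is correct and follows essentially the same route as the paper: the paper also reduces the corollary to Theorem \ref{theorem:oracle_ineq} by noting that $(f')^{-1} = (f^\dagger)'$ is $1/\mu_0$-Lipschitz (via Assumption \ref{assumption:reg} and Lemma \ref{lemma:strong_convex}), which is exactly the Lipschitz inversion of strong convexity you invoke directly, with the factor $1/\mu_0$ absorbed into the implicit constant.
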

\begin{proof}
Note that $(f')^{-1} = (f^\dagger)'$ and $f^\dagger$ has Lipschitz continuous gradient with parameter $1/\mu_0$ from Assumption \ref{assumption:reg} and Lemma \ref{lemma:strong_convex}, we obtain the result from Theorem \ref{theorem:oracle_ineq}. 
\end{proof}

\subsection{Error Bound using Norm Control}\label{subsection:norm_control}
In this section, we consider using norm of the parameters (specifically speaking, the norm of $W_j$ and $v_j$ in \eqref{def:nn_set}) to control the error bound, which is an alternative of the network model shown in \eqref{def:nn_model}. We consider the family of $L$-layer neural networks with bounded spectral norm for weight matrices $W = \{W_j\in \RR^{k_j\times k_{j-1}}\}_{j = 1}^{L+1}$, where $k_0 = d$ and $k_{L+1} = 1$, and vector $v = \{v_j\in\RR^{k_j}\}_{j = 1}^L$, which is denoted as
\#\label{eq:nn_norm}
&\Phi_{\text{norm}} = \Phi_{\text{norm}}(L,k,A,B) =  \{\varphi(x; W, v) \in\Phi(L,k): \|v_j\|_2\leq A_j ~\text{for~all~} j\in[L], \\
&\qquad\qquad\qquad\qquad\qquad\qquad\qquad\qquad\qquad\qquad\qquad\qquad\qquad \|W_j\|_2\leq B_j ~\text{for~all~} j\in[L+1] \},\notag
\#
where $\sigma_{v_j}(x) = \max\{0, x - v_j\}$ for any $j\in[L]$. We write the following optimization problem,
\#\label{eq:problem_norm_control}
&\hat t(x;p,q) = \argmin_{t\in\Phi_{\text{norm}}} \EE_{x\sim\PP_n} [ f^\dagger (t(x) ) ] - \EE_{x\sim\QQ_n} [t(x) ] ,\notag\\
&\hat  D_f(q\|p) =  \EE_{x\sim \QQ_n} [ \hat t(x;p,q)  ] -\EE_{x\sim \PP_n} [ f^\dagger ( \hat t(x;p,q)  ) ].
\#
Based on this formulation, we derive the error bound on the estimated $f$-divergence in the following theorem. We only consider the generalization error in this setting. Therefore, we assume that the ground truth $t^*(x; p,q) = f'(q(x)/p(x))\in \Phi_{\text{norm}}$.  Before we state the theorem, we first define two parameters for the family of neural networks $\Phi_{\text{norm}}(L,k,A,B)$ as follows, 
\#\label{eq:gamma_def}
\gamma_1 = B \prod_{j = 1}^{L+1} B_j \cdot \sqrt{\sum_{j = 0}^{L+1}k_j^2},\qquad \gamma_2 = \frac{L\cdot  (\sqrt{\sum_{j = 1}^{L+1}k_j^2B_j^2} + \sum_{j = 1}^L A_j )}{\sum_{j = 0}^{L+1}k_j^2\cdot \min_{j} B_{j}^2}\cdot \sum_{j = 1}^L A_j.
\#
Now, we  state the theorem. 

\begin{theorem}\label{theorem:norm_control_theorem}
We assume that $t^*(x; p, q)\in\Phi_{\text{norm}}$. Then for any $0<\varepsilon<1$, with probability at least $1-\varepsilon$, it holds that
\$
 |\hat D_f(q\|p) - D_f(q\|p) |\lesssim \gamma_1\cdot n^{-1/2}\log(\gamma_2 n) + \prod_{j = 1}^{L+1}B_j\cdot n^{-1/2}\sqrt{{\log(1/\varepsilon)}}, 
\$
where $\gamma_1$ and $\gamma_2$ are defined in \eqref{eq:gamma_def}. 
\end{theorem}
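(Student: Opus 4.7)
}
The plan is to reduce the divergence error to a uniform deviation over the class $\Phi_{\text{norm}}$, and then control that deviation by a Rademacher complexity argument tailored to spectrally-normed deep networks. Define the population and empirical objectives
\[
\cL(t) = \EE_{x\sim\PP}[f^\dagger(t(x))] - \EE_{x\sim\QQ}[t(x)], \qquad \hat\cL(t) = \EE_{x\sim\PP_n}[f^\dagger(t(x))] - \EE_{x\sim\QQ_n}[t(x)],
\]
so that $D_f(q\|p) = -\cL(t^*)$ and $\hat D_f(q\|p) = -\hat\cL(\hat t)$. Since by assumption $t^*\in\Phi_{\text{norm}}$, the optimality of $\hat t$ gives $\hat\cL(\hat t)\leq \hat\cL(t^*)$ and trivially $\cL(t^*)\leq \cL(\hat t)$; combining these I get $|\hat D_f(q\|p) - D_f(q\|p)| \leq 2\sup_{t\in\Phi_{\text{norm}}} |\hat\cL(t) - \cL(t)|$, so the entire task is to bound this supremum with high probability.

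Next, I would bound the supremum by symmetrization and Rademacher complexity. Under Assumptions \ref{assum:ratio} and \ref{assumption:reg}, $t^*$ takes values in a bounded interval, and since $f^\dagger$ has $1/\mu_0$-Lipschitz gradient on this interval (Legendre-Fenchel duality), $f^\dagger$ is Lipschitz there. Combined with the uniform boundedness of $t\in\Phi_{\text{norm}}$ (controllable from the spectral and bias norms $B_j, A_j$), the standard symmetrization inequality followed by Ledoux--Talagrand contraction reduces everything to bounding $\mathfrak{R}_n(\Phi_{\text{norm}})$, the Rademacher complexity of $\Phi_{\text{norm}}$ with respect to both $\PP_n$ and $\QQ_n$.

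To bound $\mathfrak{R}_n(\Phi_{\text{norm}})$, I would use the Dudley integral together with a spectrally-based covering number estimate for deep ReLU networks in the spirit of Bartlett--Foster--Telgarsky. Peeling off the layers one at a time and paying a cost proportional to $\|W_j\|_2$ at each layer (and $\|v_j\|_2$ for the shift units) yields a covering number bound of the form $\log N_2(\delta,\Phi_{\text{norm}}) \lesssim \gamma_1^2\log(\gamma_2/\delta)/\delta^2$, where $\gamma_1$ collects the product $\prod_{j} B_j$ with the width contribution $\sqrt{\sum_j k_j^2}$, and $\gamma_2$ aggregates the per-layer scales $A_j, B_j$ as written in \eqref{eq:gamma_def}. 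Integrating through Dudley's entropy integral produces $\mathfrak{R}_n(\Phi_{\text{norm}}) \lesssim \gamma_1 n^{-1/2}\log(\gamma_2 n)$, which is precisely the first term in the claimed bound.

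Finally, I would turn the expectation bound into a high-probability statement by a bounded-differences (McDiarmid) concentration argument: changing a single sample changes $\sup_{t\in\Phi_{\text{norm}}}|\hat\cL(t)-\cL(t)|$ by at most $\cO(\prod_{j}B_j/n)$, since the sup-norm of any $t\in\Phi_{\text{norm}}$ is controlled by $\prod_{j}B_j$ (again using Lipschitzness of $f^\dagger$ on the relevant range). This yields the second additive term $\prod_{j=1}^{L+1}B_j\cdot n^{-1/2}\sqrt{\log(1/\varepsilon)}$ at confidence $1-\varepsilon$. The main technical obstacle I anticipate is the spectral-norm covering argument: one has to handle the bias-shift units $\sigma_{v_j}(\cdot)$ within the layer-peeling induction in such a way that the per-layer contribution of $A_j$ enters only through $\gamma_2$ (inside a logarithm) rather than through $\gamma_1$ (where it would multiply the dominant $n^{-1/2}$ term); tracking this carefully through the recursion is what produces the exact shape of $\gamma_1$ and $\gamma_2$ asserted in \eqref{eq:gamma_def}.
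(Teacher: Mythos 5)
Your high-level architecture matches the paper's: reduce $|\hat D_f - D_f|$ to a uniform deviation $\sup_{t\in\Phi_{\text{norm}}}|\hat\cL(t)-\cL(t)|$ using realizability and the optimality of $\hat t$ (the paper gets this with constant $1$ by a two-sided sandwich; your factor $2$ is an innocuous loss), then bound the deviation by Rademacher complexity and concentrate via a bounded-differences inequality (which the paper packages into Lemma \ref{lemma:rdmk_thm}). Where you genuinely diverge is in the control of the Rademacher complexity. You propose a Bartlett--Foster--Telgarsky-style spectral covering argument, giving $\log N_2(\delta,\Phi_{\text{norm}})\lesssim 1/\delta^2$ up to spectral-norm products. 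The paper instead uses a \emph{parametric} covering: Lemma \ref{lemma:lip_w} shows $\varphi(x;W,v)$ is Lipschitz in the flattened parameter vector $(W,v)\in\RR^b$, and then covers the Euclidean parameter ball, yielding $\log N[\cL(\Phi_{\text{norm}}),\delta]\lesssim b\log(KL_w/\delta)$ with $b\leq\sum_j(k_j+1)^2$. These are structurally different covering bounds (yours decays as $1/\delta^2$, the paper's as $b\log(1/\delta)$), and after Dudley integration both give the $n^{-1/2}$ rate, but they produce \emph{different constants}. In particular, the explicit $\gamma_1,\gamma_2$ of \eqref{eq:gamma_def} — in which the width enters as $\sqrt{\sum_j k_j^2}$ and the biases $A_j$ enter $\gamma_2$ inside a logarithm — fall out of the parametric covering; a BFT-style argument would instead involve $\ell_{2,1}$-norm-weighted sums and would \emph{not} reproduce these exact expressions. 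So your route is sound and yields a comparable-rate bound, but your claim that it recovers exactly the $\gamma_1,\gamma_2$ of \eqref{eq:gamma_def} is not substantiated; to match the paper's constants one should instead use the Lipschitz-in-parameters covering (as in the cited \cite{li2018tighter}). The concrete benefit of the paper's route is that it gives the stated $\gamma_1,\gamma_2$ directly; the benefit of the BFT route you sketch is that it would yield a dimension-free (width-implicit) bound if one wanted it, at the price of different (potentially tighter in overparameterized regimes) constants.
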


We defer the proof to Section \S \ref{subsection:proof_norm_control_theorem}.

The next theorem characterizes the rate of convergence of $\hat q = \argmin_{q\in \cQ} \hat D_f(q\| p)$, where $\hat D_f(q\| p)$ is proposed in \eqref{eq:problem_norm_control}. 

\begin{theorem}\label{theorem:outer_prob_norm}
For any $0<\varepsilon < 1$, with probability at least $1-\varepsilon$, we have
\$
D_f(\hat q\|p) \lesssim b_2(n,\gamma_1, \gamma_2) + \prod_{j = 1}^{L+1}B_j\cdot n^{-1/2}\cdot \sqrt{ \log ( N_2 [b_2(n,\gamma_1, \gamma_2), \cQ ]/\varepsilon  )} +  \min_{\tilde q\in\cQ}D_f(\tilde q\|p),
\$
where $b_2(n,\gamma_1, \gamma_2) = \gamma_1 n^{-1/2}\log (\gamma_2 n)$, and $N_2(\delta, \cQ)$ is the covering number of $\cQ$. 
\end{theorem}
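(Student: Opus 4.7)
The plan is to mirror the strategy used for Theorem \ref{theorem:outer_prob_sparsity1}: convert the pointwise (in $q$) error bound of Theorem \ref{theorem:norm_control_theorem} into a uniform bound over $\cQ$ via a covering argument, and then exploit the optimality of $\hat q$ in the empirical problem to trade the generalization error against the approximation error $\min_{\tilde q\in\cQ}D_f(\tilde q\|p)$.

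First, I would fix a radius $\delta = b_2(n,\gamma_1,\gamma_2) = \gamma_1 n^{-1/2}\log(\gamma_2 n)$ and take a minimal $\delta$-cover $\{q_1,\ldots,q_N\}$ of $\cQ$ in the $L_2$ metric, where $N = N_2(\delta,\cQ)$. For each fixed $q_i$, Theorem \ref{theorem:norm_control_theorem} yields, with failure probability $\varepsilon/N$,
\$
|\hat D_f(q_i\|p) - D_f(q_i\|p)|\lesssim \gamma_1 n^{-1/2}\log(\gamma_2 n) + \prod_{j=1}^{L+1}B_j\cdot n^{-1/2}\sqrt{\log(N/\varepsilon)}.
\$
Taking a union bound over the cover, the displayed inequality holds simultaneously for all $q_i$ with probability at least $1-\varepsilon$.

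Next I would promote the finite-cover bound to a uniform bound over $\cQ$. For any $q\in\cQ$, pick $q_i$ in the cover with $\|q-q_i\|_{L_2}\leq \delta$, and control $|\hat D_f(q\|p)-\hat D_f(q_i\|p)|$ and $|D_f(q\|p)-D_f(q_i\|p)|$ by $\delta$ using the Lipschitz continuity of the $f$-divergence (and its empirical counterpart) in the second argument; this continuity follows from the regularity of $f$ and $f^\dagger$ in Assumption \ref{assumption:reg} together with the bounded density ratio in Assumption \ref{assum:ratio}. Because $\delta$ has exactly the order of the per-cover-point bound, the discretization error is absorbed, giving
\$
\sup_{q\in\cQ}|\hat D_f(q\|p) - D_f(q\|p)| \lesssim b_2(n,\gamma_1,\gamma_2) + \prod_{j=1}^{L+1}B_j\cdot n^{-1/2}\sqrt{\log(N_2[b_2(n,\gamma_1,\gamma_2),\cQ]/\varepsilon)}
\$
with probability at least $1-\varepsilon$. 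Call the right-hand side $R(n,\varepsilon)$.

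Finally, for any $\tilde q\in\cQ$, I would chain
\$
D_f(\hat q\|p) &\leq \hat D_f(\hat q\|p) + |\hat D_f(\hat q\|p) - D_f(\hat q\|p)|\\
&\leq \hat D_f(\tilde q\|p) + R(n,\varepsilon)\\
&\leq D_f(\tilde q\|p) + 2R(n,\varepsilon),
\$
where the second inequality uses the optimality of $\hat q$ for $\hat D_f(\cdot\|p)$ on $\cQ$, and the first and third use the uniform deviation bound. Taking the infimum over $\tilde q\in\cQ$ yields the claim, up to the constant hidden in $\lesssim$.

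The main obstacle is the $L_2$-continuity step used to pass from the finite cover to all of $\cQ$: I need a quantitative Lipschitz estimate of $q\mapsto \hat D_f(q\|p)$ and $q\mapsto D_f(q\|p)$ that is uniform in $n$, so that the discretization error scales like $\delta$ rather than introducing extra factors that would inflate the rate. This requires combining Assumptions \ref{assum:ratio} and \ref{assumption:reg} (bounded density ratio plus Lipschitz $f^\dagger$) with the uniform norm bound $\|t\|_\infty \leq \prod_{j=1}^{L+1} B_j$ implied by the definition of $\Phi_{\text{norm}}$ in \eqref{eq:nn_norm}. Once this continuity is in hand, the union bound and the optimality argument are routine.
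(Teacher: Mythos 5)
Your proposal follows essentially the same route as the paper: a $\delta$-cover of $\cQ$ at radius $b_2(n,\gamma_1,\gamma_2)$ combined with a union bound that applies the pointwise bound of Theorem \ref{theorem:norm_control_theorem} with failure probability $\varepsilon/N_2$ per cover point (this is exactly the paper's Lemma \ref{lemma:for_all_hold_norm}), followed by the same triangle-inequality/optimality chain $D_f(\hat q\|p)\leq |D_f(\hat q\|p)-\hat D_f(\hat q\|p)| + \hat D_f(\hat q\|p) \leq \sup_{q\in\cQ}|D_f(q\|p)-\hat D_f(q\|p)| + \hat D_f(\tilde q'\|p) \leq \cdots$ with $\tilde q' = \argmin_{\tilde q\in\cQ}D_f(\tilde q\|p)$. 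The only difference is that you explicitly flag the discretization step (a quantitative Lipschitz estimate for $q\mapsto D_f(q\|p)$ and $q\mapsto \hat D_f(q\|p)$ needed to pass from the finite cover to all of $\cQ$), which the paper's lemma proof leaves implicit when it bounds the supremum over $\cQ$ by the maximum over cover points, so your treatment is, if anything, slightly more careful on that point.
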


We defer the proof to Section \S \ref{subsection:proof_outer_prob_norm}.

\section{Exemplary $\hat{\tran}$ and $f^{\dag}$}

As for experiments on MNIST and CIFAR-10, we choose to skip Step 1 in Algorithm \ref{f1} and \ref{f2} and instead adopt $\hat{\tran}$ and $f^{\dag}$ as suggested by \citep{nowozin2016f}. Exemplary $\hat{\tran}$ and $f^{\dag}$ are specified in Table 2.
\begin{table}[H]
\small
\caption{Exemplary $\hat{\tran}$, $f^{\dag}$.}
\begin{center}
\begin{tabular}{ l l l l l} 
 \hline
 Name & $D_f(\PP||\QQ)$ & $\hat{\tran}(v)$ & $\text{dom}_{f^{\dag}}$ & $f^{\dag}(u)$ \\ 
 \hline
 Total Variation & $\int \dfrac{1}{2}|p(z)-q(z)|dz$ & $\dfrac{1}{2}\tanh(v)$  & $u\in [-\dfrac{1}{2}, \dfrac{1}{2}]$ & $u$\\
 Jenson-Shannon & $\int p(x)\log{\dfrac{p(z)}{q(z)}}$ & $\log{\dfrac{2}{1+e^{-v}}}$  & $u<\log{2}$  & $-\log{(2-e^{u})}$\\
 Squared Hellinger  &$\int (\sqrt{p(z)}-\sqrt{q(z)})^2 dz$ &$1-e^{v}$  & $u<1$ & $\dfrac{u}{1-u}$\\
 Pearson $\mathbf{x}$  & $\int \dfrac{(q(z)-p(z))^2}{p(z)} dz$ & $v$ &  $\mathbb{R}$ & $\dfrac{1}{4}u^2+u$ \\
 Neyman $\mathbf{x}$  & $\int \dfrac{(p(z)-q(z))^2}{p(z)} dz$ &$1-e^{v}$ & $u<1$ & $2-2\sqrt{1-u}$\\
 KL  & $\int p(z) \log{\dfrac{p(z)}{q(z)}} dx$ & $v$  &$\mathbb{R}$ & $e^{u-1}$\\
 Reverse KL  & $\int q(z) \log{\dfrac{q(z)}{p(z)}} dz$ & $-e^{v}$ & $\mathbb{R}_-$  &$-1-\log{(-u)}$ \\
 Jeffrey  & $\int (q(z)-p(z)) \log{\dfrac{p(z)}{q(z)}} dz$ &$v$ &  $\mathbb{R}$ & $W(e^{1-u})+\dfrac{1}{W(e^{1-u})}+u-2$\\
\hline
\end{tabular}
\end{center}
\label{Tab:f_div}
\end{table}

\section{Proofs of Theorems}

\subsection{Proof of Theorem \ref{thm:multi}}\label{proof:thm:multi}

If the player truthfully reports, she will receive the following expected payment per sample $i$: with probability at least $1-\delta(n)$,
\begin{align*}
\E[S(r_i, \cdot )] &:=a - b ( \E_{x \sim \QQ_n} [\hat{\tran}(x)] - \E_{x_i \sim \PP_n}[f^{\dag} (\hat{\tran}(x_i))] )\\
&=a - b \cdot \hat{D}_f(q\|p) \\
&\geq a - b\cdot (D_f(q \| p)+\epsilon(n)) ~~\text{(sample complexity guarantee)}\\
&\geq a - b\cdot (D_f(p \| p)+\epsilon(n)) ~~\text{(agent believes $p = q$)}\\
&= a - b\epsilon(n)
\end{align*}
Similarly, any misreporting according to a distribution $\tilde{p}$ with distribution $\tilde{\PP}$ will lead to the following derivation with probability at least $1-\delta$
\begin{align*}
\E[S(r_i, \cdot )] &:=a - b ( \E_{x \sim \QQ_n} [\hat{\tran}(x)] - \E_{x_i \sim \tilde{\PP}_n}[f^{\dag} (\hat{\tran}(x_i))] )\\
&=a - b \cdot \hat{D}_f(q\|\tilde{p}) \\
&\leq a - b\cdot (D_f(p\|\tilde{p})-\epsilon(n)) \\
&\leq a + b\epsilon(n) ~~ (\text{non-negativity of $D_f$})
\end{align*}
Combining above, and using union bound, leads to $(2\delta(n), 2b \epsilon(n) )$-properness.

\subsection{Proof of Theorem \ref{thm:single}}\label{proof:thm:single}
Consider an arbitrary agent $i$. Suppose every other agent truthfully reports. 
\begin{align*}
\E [S(r_i, \{r_j\}_{j \neq i}) ] &=a + b ( \E_{\xb \sim \PP_n \oplus \QQ_n|r_i} [\hat{\tran}(\xb)] - \E_{\xb \sim \PP_n \times \QQ_n|r_i}\{ f^{\dag} (\hat{\tran}(\xb))\} ) \\\
&= a + b \E [ \E_{\xb \sim \PP_n \oplus \QQ_n|r_i} [\hat{\tran}(x)] - \E_{\xb \sim \PP_n \times \QQ_n|r_i}\{ f^{\dag} (\hat{\tran}(\xb))\} ]
\end{align*}
Consider the divergence term $\E [\E_{\xb \sim \PP_n \oplus \QQ_n|r_i} [\hat{\tran}(x)] - \E_{\xb \sim \PP_n \times \QQ_n|r_i}\{ f^{\dag} (\hat{\tran}(\xb))\} ]$. Reporting a $r_i \sim \tilde{\PP} \neq \PP$ (denote its distribution as $\tilde{p}$) leads to the following score
\begin{align*}
&~~~~\E_{r_i \sim \tilde{\PP}_n}[\E_{\xb \sim \tilde{\PP}_n \oplus \QQ_n|r_i} [\hat{\tran}(\xb)] - \E_{\xb \sim \tilde{\PP}_n \times \QQ_n|r_i}\{ f^{\dag} (\hat{\tran}(\xb))\}] \\
&=  \E_{\xb \sim \tilde{\PP}_n \oplus \QQ_n} [\hat{\tran}(\xb)] - \E_{\xb \sim \tilde{\PP}_n \times \QQ_n}\{ f^{\dag} (\hat{\tran}(\xb))\}~~\text{(tower property)}\\
&\leq \max_{\tran}\E_{\xb \sim \tilde{\PP}_n\oplus \QQ_n} [\tran(\xb)] - \E_{\xb \sim \tilde{\PP}_n \times \QQ_n}\{ f^{\dag} (\tran(\xb))\}~~\text{(max)}
\\
&=\hat{D}_f(\tilde{p}\oplus q \|\tilde{p}\times  q)
\\
&\leq D_f(\tilde{p}\oplus q \|\tilde{p}\times  q)+ \epsilon(n)\\
&=I_f(\tilde{p};q)+ \epsilon(n) ~~\text{(definition)}\\
&\leq I_f(p;q) + \epsilon(n)~~\text{(data processing inequality \citep{kong2019information})}\\
\end{align*}
with probability at least $1-\delta(n)$ (the other $\delta(n)$ probability with maximum score $\bar{S}$).

Now we prove that truthful reporting leads at least
\[
 I_f(p;q)- \epsilon(n)
\]
of the divergence term:
\begin{align*}
&~~~~\E_{x_i \sim \PP_n}[\E_{\xb \sim \PP_n \oplus \QQ_n|x_i} [\hat{\tran}(\xb)] - \E_{\xb \sim \PP_n \times \QQ_n|x_i}\{ f^{\dag} (\hat{\tran}(\xb))\}] \\
&=   \E_{\xb \sim \PP_n \oplus \QQ_n} [\hat{\tran}(\xb)] - \E_{\xb \sim \PP_n \times \QQ_n}\{ f^{\dag} (\hat{\tran}(\xb))\} ~~\text{(tower property)}\\
&=\hat{D}_f(p\oplus q \|p\times  q)\\
&\geq D_f(p\oplus q \|p \times  q) - \epsilon(n) \\
&=I_f(p;q)  - \epsilon(n) ~~\text{(definition)}
\end{align*}
with probability at least $1-\delta(n)$ (the other $\delta(n)$ probability with score at least 0). Therefore the expected divergence terms differ at most by $2\epsilon(n)$ with probability at least $1-2\delta(n)$ (via union bound). The above combines to establish a $(2\delta(n), 2b\epsilon(n))$-BNE.

\subsection{Proof of Theorem \ref{thm:gen}} \label{proof:thm:gen}
We first show the convergence of $t^\natural$, and then the convergence of $D^\natural_f(q\| p)$.   For any real-valued function $\varrho$, we write $\EE_{\PP}(\varrho) = \EE_{x\sim \PP}[\varrho(x)]$, $\EE_{\QQ}(\varrho) = \EE_{x\sim \QQ}[\varrho(x)]$, $\EE_{\PP_n}(\varrho) = \EE_{x\sim \PP_n}[\varrho(x)]$, and $\EE_{\QQ_n}(\varrho) = \EE_{x\sim \QQ_n}[\varrho(x)]$ for notational convenience. 

For any $\tilde t \in \Phi$,  we establish the following lemma.  

\begin{lemma}\label{lemma:ineq1}
Under the assumptions stated in Theorem \ref{thm:gen}, it holds that
\$
{1}/(4L_0)\cdot \| t^\natural - t^*\|_{L_2(\PP)}^2  & \leq    \{\EE_{\QQ_n} [(t^\natural - t^*)/2 ] - \EE_{\QQ} [(t^\natural - t^*)/2 ]  \} \\
&\qquad  -  \{\EE_{\PP_n} [f^\dagger ((t^\natural + t^*)/2 ) - f^\dagger(t^*)  ] - \EE_{\PP} [f^\dagger ((t^\natural + t^*)/2 ) - f^\dagger(t^*)  ] \}. 
\$
Here  $\mu_0$ and $L_0$ are specified in Assumption \ref{assumption:reg}. 
\end{lemma}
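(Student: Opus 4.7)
The plan is to combine the optimality of $t^\natural$ with strong convexity of the population loss, evaluated at the midpoint $\bar t := (t^\natural + t^*)/2$ rather than at $t^\natural$ itself. Let $G(t) := \EE_{\PP}[f^\dagger(t)] - \EE_{\QQ}[t]$ and $\hat G(t) := \EE_{\PP_n}[f^\dagger(t)] - \EE_{\QQ_n}[t]$ denote the population and empirical losses, so that $t^* = \arg\min G$ (in particular, $\nabla G(t^*)=0$) and $t^\natural = \arg\min_{t\in\Phi}\hat G$, with $t^*\in\Phi$ by the standing assumption.

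First, I would establish the empirical-side inequality $\hat G(\bar t)\leq \hat G(t^*)$. Because $f^\dagger$ is convex, $\hat G$ is convex in $t$, so $\hat G(\bar t)\leq \tfrac12[\hat G(t^\natural)+\hat G(t^*)]$; the optimality of $t^\natural$ over $\Phi\ni t^*$ gives $\hat G(t^\natural)\leq \hat G(t^*)$, and combining the two yields $\hat G(\bar t)-\hat G(t^*)\leq 0$. This is where the midpoint is crucial: working with $t^\natural$ alone would not produce this form.

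Second, I would lower-bound $G(\bar t)-G(t^*)$ using strong convexity. By Assumption~\ref{assumption:reg} and the Fenchel duality between $f$ and $f^\dagger$, the conjugate $f^\dagger$ is $(1/L_0)$-strongly convex on the range of $t^*$ (whose image lies in a compact set by Assumption~\ref{assum:ratio}). Using the identity $(f^\dagger)'(t^*(x)) = q(x)/p(x)$ (which follows from $t^*=f'(q/p)$ and $(f^\dagger)'=(f')^{-1}$), one gets $\EE_\PP[(f^\dagger)'(t^*)(\bar t-t^*)] = \EE_\QQ[\bar t - t^*]$, so
\$
G(\bar t) - G(t^*) = \EE_\PP\bigl[f^\dagger(\bar t) - f^\dagger(t^*) - (f^\dagger)'(t^*)(\bar t - t^*)\bigr] \geq \tfrac{1}{2L_0}\|\bar t - t^*\|_{L_2(\PP)}^2 = \tfrac{1}{8L_0}\|t^\natural - t^*\|_{L_2(\PP)}^2,
\$
i.e.\ a lower bound of the claimed form (up to the strong-convexity prefactor).

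Finally, I would glue the two bounds via the identity
\$
G(\bar t) - G(t^*) = \underbrace{[\hat G(\bar t) - \hat G(t^*)]}_{\leq 0} + \bigl[(G-\hat G)(\bar t) - (G-\hat G)(t^*)\bigr],
\$
which, after substituting $\bar t - t^* = (t^\natural - t^*)/2$ and expanding $(G-\hat G)$ in terms of $\EE_\PP,\EE_{\PP_n},\EE_\QQ,\EE_{\QQ_n}$, reproduces exactly the right-hand side of the lemma, namely $\{\EE_{\QQ_n}[(t^\natural-t^*)/2] - \EE_{\QQ}[(t^\natural-t^*)/2]\} - \{\EE_{\PP_n}[f^\dagger(\bar t)-f^\dagger(t^*)] - \EE_{\PP}[f^\dagger(\bar t)-f^\dagger(t^*)]\}$.

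The argument is a standard ``basic inequality plus strong convexity'' template, so there is no single hard step; the one subtlety worth flagging is the choice to localize around the midpoint $\bar t$, which simultaneously lets convexity of $\hat G$ kill the empirical slack (Step 1) and lets strong convexity of $G$ at its minimizer $t^*$ supply the quadratic in $\|t^\natural-t^*\|_{L_2(\PP)}^2$ (Step 2). Everything else is algebraic bookkeeping.
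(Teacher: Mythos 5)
Your proposal is correct and follows essentially the same route as the paper's own proof: optimality of $t^\natural$ plus convexity of the empirical objective applied at the midpoint $\bar t = (t^\natural + t^*)/2$, then a population-side strong-convexity lower bound (using the change-of-measure identity $(f^\dagger)'(t^*) = q/p$ so that the Bregman gap appears under $\EE_\PP$), and finally the empirical/population decomposition to produce the right-hand side. The only discrepancy is the prefactor: with the standard convention $\mu$-strong convexity gives a $\mu/2$ quadratic term, your $1/(8L_0)$ is the literal constant, whereas the paper writes the strong-convexity inequality for $f^\dagger$ without the $1/2$ and hence states $1/(4L_0)$; this is a constant-level bookkeeping difference, not a gap in the argument.
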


We defer the proof to Section \S \ref{subsection:proof_ineq1}. 

Note that by Lemma \ref{lemma:ineq1} and the fact that $f^\dagger$ is Lipschitz continuous, we have
\#\label{eq:bound-add11}
\|t^\natural-t^*\|_{L_2(\PP)}^2  & \lesssim  \{\EE_{\QQ_n} [(t^\natural - t^*)/2 ] - \EE_{\QQ} [(t^\natural - t^*)/2 ]  \} \notag \\
&\qquad-  \{\EE_{\PP_n} [f^\dagger ((t^\natural + t^*)/2 ) - f^\dagger(t^*)  ] - \EE_{\PP} [f^\dagger ((t^\natural + t^*)/2 ) - f^\dagger(t^*)  ] \}. 
\#
Further, to upper bound the RHS of \eqref{eq:bound-add1}, we establish the following lemma. 

\begin{lemma}\label{lemma:fake_vdg2}
We assume that the function $\psi:\RR\to\RR$ is Lipschitz continuous and bounded such that $|\psi(x)|\leq M_0$ for any $|x|\leq M$. Then under the assumptions stated in Theorem \ref{theorem:oracle_ineq}, we have 
\$
& \PP \biggl\{\sup_{t\colon \psi(t)\in \Psi}  \frac{ |\EE_{\PP_n} [\psi(t) - \psi(t^*) ]   - \EE_{\PP} [\psi(t) -  \psi(t^*) ]     |}{  n^{-2/(\gamma_\Phi + 2)} } \geq c_2  \biggr\} \leq c_1 \exp(-n^{\gamma_\Phi / (2 + \gamma_\Phi)} / c_1^2),
\$
where $c_1$ and $c_2$ are positive absolute constants. 
\end{lemma}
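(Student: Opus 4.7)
}
The plan is to bound the centered empirical process
$\sup_{t\colon \psi(t)\in\Psi}|(\PP_n-\PP)(\psi(t)-\psi(t^*))|$
via standard empirical process machinery: transfer the covering number condition on $\Phi$ to $\Psi=\psi(\Phi)$, run a chaining/Dudley argument to control Rademacher complexities on shells, and then invoke Talagrand's concentration inequality to pass from expectation to a high probability statement. The exponent $2/(\gamma_\Phi+2)$ in the rate will emerge as the fixed point of the localized Rademacher complexity, which is exactly the ``optimal nonparametric'' scale associated with entropy exponent $\gamma_\Phi$.

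First I would transfer Assumption \ref{assum:phi-cn} to the class $\Psi-\psi(t^*)=\{\psi(t)-\psi(t^*)\colon t\in\Phi\}$. Because $\psi$ is Lipschitz and bounded by $M_0$, the $\epsilon$-cover of $\Phi$ with respect to $\|\cdot\|_{L_2(\PP)}$ induces (up to a multiplicative Lipschitz constant) an $\epsilon$-cover of $\Psi$, and hence $\log N_2(\epsilon,\Psi)\lesssim \epsilon^{-\gamma_\Phi}$. Second, I would perform a peeling argument over the $L_2(\PP)$-radius $r$, decomposing $\{\psi(t)-\psi(t^*)\}$ into dyadic shells $\{h\colon 2^{-k-1}\le \|h\|_{L_2(\PP)}\le 2^{-k}\}$; on each shell a symmetrization inequality bounds the expected supremum of the empirical process by a Rademacher average, to which Dudley's entropy integral applies and yields a bound of order $r^{1-\gamma_\Phi/2}/\sqrt{n}$ since $\gamma_\Phi<2$ makes the integral convergent. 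Setting this expected deviation equal to the variance term, $r^{1-\gamma_\Phi/2}/\sqrt{n}\asymp r^2$, identifies the critical radius $r_n\asymp n^{-1/(\gamma_\Phi+2)}$, and the corresponding expected supremum on the critical shell is of order $r_n^2\asymp n^{-2/(\gamma_\Phi+2)}$.

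Third, I would apply Talagrand's (Bousquet/Massart form) concentration inequality to the supremum of the empirical process on each shell. With envelope $2M_0$ and variance proxy $\sigma^2\lesssim r^2$, Talagrand gives
\$
\PP\bigl(\,\|\PP_n-\PP\|_{\cF_r}\geq c\,\EE\|\PP_n-\PP\|_{\cF_r}+u\bigr)\leq \exp\!\bigl(-c'\min\{n u^2/\sigma^2,\,nu/M_0\}\bigr),
\$
so choosing $u$ a constant multiple of the expectation on the critical shell and summing the geometric series over $k$ via union bound produces a tail probability of order $\exp(-n r_n^2)=\exp(-n^{\gamma_\Phi/(\gamma_\Phi+2)})$, matching the target. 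Finally, on the complementary regime where $\|\psi(t)-\psi(t^*)\|_{L_2(\PP)}\le r_n$ the variance term itself already enforces $|(\PP_n-\PP)(\psi(t)-\psi(t^*))|\lesssim n^{-2/(\gamma_\Phi+2)}$, so combining the two regimes closes the bound up to absolute constants $c_1,c_2$.

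The main obstacle is the balance in the peeling/chaining step: verifying that $\gamma_\Phi<2$ makes the Dudley integral converge, and that the Bernstein term in Talagrand's bound (rather than the sub-Gaussian term) does not dominate given the envelope $2M_0$. A careful accounting is needed to ensure $nu/M_0\gtrsim n u^2/\sigma^2$ at the chosen scale, i.e.\ that $u\ge M_0\sigma^2/\ldots$ is consistent with $u\asymp n^{-2/(\gamma_\Phi+2)}$; this is the place where the boundedness hypothesis $|\psi|\le M_0$ and the entropy exponent interact, and it is what determines the exact constants $c_1,c_2$ in the stated bound.
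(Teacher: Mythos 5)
Your proposal mirrors the paper's proof architecture closely: peel over dyadic $L_2(\PP)$-shells, control the localized process via an entropy integral, apply a Bernstein-type concentration inequality on each shell, and union-bound over shells, identifying the critical radius $r_n\asymp n^{-1/(\gamma_\Phi+2)}$ by the fixed-point balance. What differs is only the toolkit. The paper works with the Bernstein difference $\rho_{K,\PP}$, bracketing entropy $\cH_{K,B}$, and van de Geer's deviation inequality (Theorem \ref{thm:vandegeer}); you propose symmetrization to Rademacher averages, Dudley's covering-entropy integral, and Talagrand's inequality in Bousquet/Massart form. These are standard, essentially interchangeable routes. A mild advantage of your route is that Dudley consumes covering numbers directly, which is the quantity Assumption \ref{assum:phi-cn} actually bounds; the paper's path through $\cH_{K,B}$ and Lemma \ref{lemma:cover_number} converts bracketing entropy to $L_\infty$ covering, a stronger quantity than the $L_2$ covering number the assumption controls, so the transfer step in the paper is arguably less clean than yours.

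There is, however, a gap shared by your closing sentence ("combining the two regimes closes the bound") and the paper's own write-up, and it is worth flagging. On a far shell with $\|\psi(t)-\psi(t^*)\|_{L_2(\PP)}=r\gg r_n$, what the peeling-plus-concentration argument actually controls is the ratio $|(\PP_n-\PP)(\psi(t)-\psi(t^*))|/r^2$, not the numerator at scale $r_n^2$. Rescaling gives $|(\PP_n-\PP)(\psi(t)-\psi(t^*))|\lesssim r^2$ there, which for $r\gg r_n$ is much larger than $n^{-2/(\gamma_\Phi+2)}$. In fact the global Dudley bound for a class with entropy exponent $\gamma_\Phi<2$ is of order $n^{-1/2}$, which dominates $n^{-2/(\gamma_\Phi+2)}$ precisely because $\gamma_\Phi<2$; a uniform constant bound at rate $n^{-2/(\gamma_\Phi+2)}$ over all of $\Psi$ therefore cannot hold. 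What the argument does establish, and what suffices when the lemma is invoked at the data-dependent point $t^\natural$ in the proof of Theorem \ref{thm:gen}, is the localized form
\$
\bigl|(\PP_n-\PP)\bigl(\psi(t)-\psi(t^*)\bigr)\bigr|\lesssim \max\Bigl\{\|\psi(t)-\psi(t^*)\|_{L_2(\PP)}\cdot n^{-1/(\gamma_\Phi+2)},\; n^{-2/(\gamma_\Phi+2)}\Bigr\},
\$
which is the shape of the companion Lemma \ref{lemma:fake_vdg}. You should state your conclusion in this form. Plugging it into the basic inequality \eqref{eq:bound-add11} still gives $\|t^\natural-t^*\|_{L_2(\PP)}\lesssim n^{-1/(\gamma_\Phi+2)}$ in either branch of the max, so nothing downstream is affected; but the uniform statement as written is not what the chaining argument delivers.
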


We defer the proof to Section \S \ref{subsection:proof_fake_vdg2}.

Note that the results in Lemma \ref{lemma:fake_vdg2} also apply to the distribution $\QQ$, and by using the fact that the true density ratio $\theta^*(x; p, q) = q(x)/p(x)$ is bounded below and above, we know that $L_2(\QQ)$ is indeed equivalent to $L_2(\PP)$. We thus focus on $L_2(\PP)$ here. 
By \eqref{eq:bound-add11},  Lemma \ref{lemma:fake_vdg2}, and the Lipschitz property of $f^\dagger$ according to Lemma \ref{lemma:strong_convex}, with probability at least $1-c_1 \exp(-n^{\gamma_\Phi / (2 + \gamma_\Phi)} / c_1^2)$, we have 
\#\label{eq:jan1}
&\|  t^\natural - t^*\|_{L_2(\PP)} \lesssim  n^{-1/(\gamma_\Phi + 2)}. 
\#

Note that we have
\#\label{eq:all-bs}
& | D_f^\natural(q\| p) - D_f(q\| p)| \notag\\
& \qquad \leq |  \EE_{\QQ_n}[t^\natural - t^*] - \EE_{\QQ}[t^\natural - t^*]      | + |  \EE_{\PP_n}[f^\dagger(t^\natural) - f^\dagger(t^*)] - \EE_{\PP}[f^\dagger(t^\natural) - f^\dagger(t^*)]      |\notag\\
& \qquad \qquad + | \EE_\QQ[t^\natural - t^*] - \EE_\PP[  f^\dagger(t^\natural) - f^\dagger (t^*) ]  | + |   \EE_{\QQ_n}[t^*] - \EE_\QQ[t^*]    | +  | \EE_{\PP_n}[f^\dagger(t^*)]  - \EE_{\PP}[f^\dagger(t^*)]  |\notag\\
& \qquad = B_1 + B_2 + B_3 + B_4 + B_5. 
\#
We upper bound $B_1$, $B_2$, $B_3$, $B_4$, and $B_5$ in the sequel.   First, by Lemma \ref{lemma:fake_vdg2}, with probability at least $1 - c_1 \exp(-n^{\gamma_\Phi / (2 + \gamma_\Phi)} / c_1^2)$, we have
\#\label{eq:b1-bound}
B_1 \lesssim n^{-2 / (\gamma_\Phi + 2)}. 
\#
Similar upper bound also holds for $B_2$.  Also, following from \eqref{eq:jan1}, with probability at least $1-c_1 \exp(-n^{\gamma_\Phi / (2 + \gamma_\Phi)} / c_1^2)$, we have
\#\label{eq:b3-bound}
B_3 \lesssim n^{-1/(\gamma_\Phi + 2)}. 
\# 
Meanwhile, by Hoeffding’s inequality, with probability at least $1-c_1 \exp(-n^{\gamma_\Phi / (2 + \gamma_\Phi)} / c_1^2)$, we have
\#\label{eq:b4-bound}
B_4 \lesssim n^{-1/(\gamma_\Phi + 2)}. 
\# 
Similar upper bound also holds for $B_5$.  Now, combining \eqref{eq:all-bs}, \eqref{eq:b1-bound}, \eqref{eq:b3-bound}, and \eqref{eq:b4-bound}, with probability at least $1-c_1 \exp(-n^{\gamma_\Phi / (2 + \gamma_\Phi)} / c_1^2)$, we have
\$
| D_f^\natural(q\| p) - D_f(q\| p)| \lesssim n^{-1/(\gamma_\Phi + 2)}. 
\$
We conclude the proof of Theorem \ref{thm:gen}.

\subsection{Proof of Theorem \ref{thm:main}}\label{proof:thm:main}
\textbf{Step 1.}  We upper bound  $\|t^* - \hat t\|_{L_2(\PP)}$ in the sequel. Note that $t^* \in \Omega\subset[a,b]^d$.  
To invoke Theorem \ref{theorem:approx}, we denote by $t'(y) = t^*((b-a)y+a\mathbf{1}_d)$, where $\mathbf{1}_d = (1, 1, \ldots, 1)^\top\in\RR^d$. Then the support of $t'$ lies in the unit cube $[0,1]^d$. We choose $L' = \cO(\log n), s' = \cO(N \log n), k' = (d, \cO(dN), \cO(dN), \ldots, \cO(dN), 1)$, and $m' = \log n$, we then utilize Theorem \ref{theorem:approx} to construct some $ \tilde t'\in \Phi_M(L', k', s')$ such that
\$
\|  \tilde t'  - t'\|_{L_\infty ([0,1]^d)}\lesssim N^{-\beta/d}.
\$
We further define $\tilde t(\cdot) = \tilde t'\circ \ell(\cdot)$, where $\ell(\cdot)$ is a linear mapping taking the following form, 
\$
\ell (x) = \frac{x}{b-a} - \frac{a}{b-a}\cdot \mathbf{1}_d.
\$
To this end, we know that $\tilde t\in\Phi_M(L, k, s)$, with parameters $L$, $k$, and $s$ given in the statement of Theorem \ref{thm:main}. 
We fix this $\tilde t$ and invoke Theorem \ref{theorem:oracle_ineq}, then with probability at least $1-\varepsilon\cdot\exp(-\gamma^2)$, we have
\#\label{eq:fff}
\| \hat t - t^*\|_{L_2(\PP)} & \lesssim  \|\tilde t-t^*\|_{L_2(\PP)} +   \gamma n^{-1/2}\log n  +  n^{-1/2} [\sqrt{\log(1/\varepsilon)} + \gamma^{-1}\log(1/\varepsilon) ] \notag\\ 
& \lesssim  N^{-\beta/d} + \gamma n^{-1/2}\log n  +  n^{-1/2} [\sqrt{\log(1/\varepsilon)} + \gamma^{-1}\log(1/\varepsilon) ]. 
\#
Note that $\gamma$ takes the form $\gamma = s^{1/2}\log(V^2 L)$, where $V = \cO(d^L\cdot N^L)$ and $L, s$ given in the statement of Theorem \ref{thm:main}, it holds that $\gamma = \cO(N^{1/2}\log^{5/2} n)$. Moreover, by the choice $N = n^{d/(2\beta + d)}$, combining \eqref{eq:fff} and taking $\varepsilon =  1/n$, we know that 
\#\label{eq:bound-t111}
\| \hat t - t^*\|_{L_2(\PP)}  \lesssim n^{-{\beta}/{(2\beta + d)}} \log^{7/2} n
\# 
with probability at least $1- \exp\{-n^{d/(2\beta+d)}\log^5 n\}$. 

\textbf{Step 2.} Note that we have
\#\label{eq:all-bs2}
& | \hat D_f(q\| p) - D_f(q\| p)| \notag\\
& \qquad \leq |  \EE_{\QQ_n}[\hat t - t^*] - \EE_{\QQ}[\hat t - t^*]      | + |  \EE_{\PP_n}[f^\dagger(\hat t) - f^\dagger(t^*)] - \EE_{\PP}[f^\dagger(\hat t) - f^\dagger(t^*)]      |\notag\\
& \qquad \qquad + | \EE_\QQ[\hat t - t^*] - \EE_\PP[  f^\dagger(\hat t) - f^\dagger (t^*) ]  | + |   \EE_{\QQ_n}[t^*] - \EE_\QQ[t^*]    | +  | \EE_{\PP_n}[f^\dagger(t^*)]  - \EE_{\PP}[f^\dagger(t^*)]  |\notag\\
& \qquad = B_1 + B_2 + B_3 + B_4 + B_5. 
\#
We upper bound $B_1$, $B_2$, $B_3$, $B_4$, and $B_5$ in the sequel.   First, by Lemma \ref{lemma:fake_vdg}, with probability at least $1-\exp\{-n^{d/(2\beta+d)}\log^5 n\}$, we have
\#\label{eq:b1-bound2}
B_1 \lesssim n^{-{2\beta}/{(2\beta + d)}} \log^{7/2} n. 
\#
Similar upper bound also holds for $B_2$.  Also, following from \eqref{eq:bound-t111}, with probability at least $1- \exp\{-n^{d/(2\beta+d)}\log^5 n\}$, we have
\#\label{eq:b3-bound2}
B_3 \lesssim n^{-{\beta}/{(2\beta + d)}} \log^{7/2} n. 
\# 
Meanwhile, by Hoeffding’s inequality, with probability at least $1 - \exp(-n^{ d / (2\beta + d)})$, we have
\#\label{eq:b4-bound2}
B_4 \lesssim n^{-{\beta}/{(2\beta + d)}}. 
\# 
Similar upper bound also holds for $B_5$.  Now, combining \eqref{eq:all-bs2}, \eqref{eq:b1-bound2}, \eqref{eq:b3-bound2}, and \eqref{eq:b4-bound2}, with probability at least $1- \exp\{-n^{d/(2\beta+d)}\log^5 n\}$, we have
\$
| \hat D_f(q\| p) - D_f(q\| p)| \lesssim n^{-{\beta}/{(2\beta + d)}} \log^{7/2} n. 
\$
We conclude the proof of Theorem \ref{thm:main}.

\subsection{Proof of Theorem \ref{theorem:outer_prob_sparsity1}}\label{subsection:proof_outer_prob_sparsity1}
We first need to bound the max deviation of the estimated $f$-divergence $D_f^\natural (q\|p)$ among all $q\in\cQ$. The following lemma provides such a bound.  

\begin{lemma}\label{lemma:for_all_hold_sparsity}
Under the assumptions stated in Theorem \ref{theorem:outer_prob_sparsity}, for any fixed density $p$, if the sample size $n$ is sufficiently large, it holds that 
\$
\sup_{q\in \cQ} |D_f(q\|p) -  D_f^\natural (q\|p) | \lesssim  n^{-{1}/{(\gamma_\Phi+2)}} \cdot \log n  
\$
with probability at least $1- 1/n$.
\end{lemma}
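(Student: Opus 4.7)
The plan is to use a standard covering-net argument to lift the pointwise bound of Theorem \ref{thm:gen} to a uniform bound over $\cQ$. Theorem \ref{thm:gen} controls $|D_f^\natural(q\| p) - D_f(q\| p)|$ at any fixed $q$ by $n^{-1/(\gamma_\Phi+2)}$ with failure probability that decays like $\exp(-n^{\gamma_\Phi/(\gamma_\Phi+2)})$, while Assumption \ref{assum:cov_num1} quantifies the metric entropy of $\cQ$. The proof is about balancing these two ingredients through an appropriate choice of covering radius.

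First I would fix $\delta = c\cdot n^{-1/(\gamma_\Phi + 2)}$ with a constant $c$ to be determined later, and extract a $\delta$-covering $\{q_1,\ldots, q_N\}$ of $\cQ$ with $N\leq \exp\{c^{-\gamma_\Phi}\cdot n^{\gamma_\Phi/(\gamma_\Phi+2)}\}$ via Assumption \ref{assum:cov_num1}. Invoking Theorem \ref{thm:gen} at each cover point and applying a union bound, the total failure probability is at most a constant multiple of $\exp\{(c^{-\gamma_\Phi}-1/c_1^2)\cdot n^{\gamma_\Phi/(\gamma_\Phi+2)}\}$ for the constant $c_1$ coming from the tail bound in Theorem \ref{thm:gen}. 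Taking $c$ large enough so that $c^{-\gamma_\Phi}< 1/c_1^2$ makes this probability stretched-exponentially small, and in particular bounded by $1/n$ for all $n$ sufficiently large, yielding $|D_f^\natural(q_j\| p)-D_f(q_j\| p)|\lesssim n^{-1/(\gamma_\Phi+2)}$ simultaneously for every $j\in[N]$.

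Next, for any $q\in\cQ$ I would pick its closest cover element $q_j$ and use the triangle-inequality decomposition
\$
|D_f^\natural(q\|p)-D_f(q\|p)|\leq |D_f(q\|p)-D_f(q_j\|p)|+|D_f^\natural(q_j\|p)-D_f(q_j\|p)|+|D_f^\natural(q\|p)-D_f^\natural(q_j\|p)|.
\$
The middle term is already controlled by the previous step. For the population term $|D_f(q\| p)-D_f(q_j\| p)|$ I would combine the compactness of the density ratio interval $[\theta_0,\theta_1]$ from Assumption \ref{assum:ratio} with the $L_0$-Lipschitz gradient property from Assumption \ref{assumption:reg}, so that $f$ is itself Lipschitz on the relevant interval and $|D_f(q\| p)-D_f(q_j\| p)|\lesssim \|q-q_j\|_{L_1}\lesssim \delta$. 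For the empirical term I would rely on the variational representation, the uniform boundedness of the candidate functions in $\Phi$, and the Lipschitz continuity of $f^\dagger$ on the image of $\Phi$ to reduce the perturbation to the $L_1$ distance between the empirical measures attached to $q$ and $q_j$.

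The main obstacle is precisely this last step: controlling $|D_f^\natural(q\|p)-D_f^\natural(q_j\|p)|$ is delicate because the empirical measure $\QQ_n$ itself depends on $q$, so one cannot simply reuse the pointwise estimate. I plan to handle this by coupling the $n$ samples drawn from $q$ and from $q_j$ through a common source of randomness (for example via the inverse-CDF transform) so that the $L_1$ distance between the two empirical distributions inherits the $L_1$ distance between the underlying densities, and then to pass the perturbation through the supremum in the variational form using uniform boundedness of $\Phi$. Combining the three terms in the decomposition and absorbing the mild polylogarithmic slack needed to convert the stretched-exponential tail into the $1/n$ probability level yields the stated rate $n^{-1/(\gamma_\Phi+2)}\cdot \log n$.
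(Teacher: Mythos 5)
Your core argument is the same as the paper's: discretize $\cQ$ at scale of order $n^{-1/(\gamma_\Phi+2)}$, invoke Theorem \ref{thm:gen} at each element of the net, union bound against the covering number from Assumption \ref{assum:cov_num1}, and observe that the resulting failure probability drops below $1/n$ once $n$ is large. In fact the paper's proof consists of exactly these steps and nothing more: it passes directly from the supremum over $\cQ$ to the maximum over the net inside the probability, never addressing the discretization error, and it puts the slack into a $\log n$ inflation of the deviation threshold rather than into an enlarged constant in the covering radius as you do (at the level of rates either bookkeeping choice is fine). Where you go beyond the paper is the triangle-inequality decomposition, and in particular the term $|D_f^\natural(q\|p)-D_f^\natural(q_j\|p)|$. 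You correctly identify this as the real obstacle---the empirical measure $\QQ_n$ changes with the candidate $q$---but your fix is only sketched: an inverse-CDF coupling is not canonical for $d>1$, the coupling must be built into the definition of the estimator (the paper never specifies that the samples used for different candidate densities share randomness), and making the resulting control of the empirical perturbation uniform over the uncountably many pairs $(q,q_j)$ would require a further concentration or chaining layer that you have not supplied. Note also that your Lipschitz bound on the population term $|D_f(q\|p)-D_f(q_j\|p)|$ implicitly requires the ratio $q/p$ to stay in $[\theta_0,\theta_1]$ for every $q\in\cQ$, an assumption the paper itself uses tacitly when it applies Theorem \ref{thm:gen} at every net point. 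In short, your plan reproduces every step the paper actually carries out and is more honest about the step it omits; the only genuine gap in your write-up is that the coupling argument for the empirical discretization term remains a plan rather than a proof.
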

We defer the proof to Section \S \ref{subsection:proof_for_all_hold_sparsity}. 

Now we turn to the proof of the theorem.  We denote by $\tilde q' = \argmin_{\tilde q\in \cQ} D_f(\tilde q\| p)$, then with probability at least $1- 1/n$, we have
\#\label{eq:pp21}
D_f( q^\natural\|p) & \leq  |D_f( q^\natural\|p) -  D_f^\natural(q^\natural\|p) | + D_f^\natural(q^\natural\|p)\notag\\
&\leq \sup_{q\in \cQ} |D_f(q\|p) - D_f^\natural(q\|p) | + D_f^\natural (\tilde q'\|p) \notag\\
& \leq \sup_{q\in \cQ} |D_f(q\|p) - D_f^\natural(q\|p) | + |D_f^\natural (\tilde q'\|p) - D_f(\tilde q'\|p) | + D_f(\tilde q'\|p)\notag\\
& \lesssim n^{-{1}/{(\gamma_\Phi+2)}} \cdot \log n    + D_f(\tilde q'\|p). 
\#
Here in the second inequality we use the optimality of $ q^\natural$ over $\tilde q'\in \cQ$ to the problem \eqref{eq:q1-est}, while the last inequality uses Lemma \ref{lemma:for_all_hold_sparsity} and Theorem \ref{thm:gen}. Moreover, note that  $D_f(\tilde q'\|p) = \min_{\tilde q\in\cQ}D_f(\tilde q\|p)$, combining \eqref{eq:pp21}, it holds that with probability at least $1-1/n$, 
\$
D_f( q^\natural \|p)  \lesssim n^{-{1}/{(\gamma_\Phi+2)}} \cdot \log n    + \min_{\tilde q\in\cQ}D_f(\tilde q\|p).
\$
This concludes the proof of the theorem.

\subsection{Proof of Theorem \ref{theorem:outer_prob_sparsity}}\label{subsection:proof_outer_prob_sparsity2}
We first need to bound the max deviation of the estimated $f$-divergence $\hat D_f(q\|p)$ among all $q\in\cQ$. The following lemma provides such a bound.  

\begin{lemma}\label{lemma:for_all_hold_sparsity2}
Under the assumptions stated in Theorem \ref{theorem:outer_prob_sparsity}, for any fixed density $p$, if the sample size $n$ is sufficiently large, it holds that 
\$
\sup_{q\in \cQ} |D_f(q\|p) - \hat D_f(q\|p) | \lesssim  n^{-{\beta}/{(d+2\beta)}} \cdot \log^7 n
\$
with probability at least $1- 1/n$.
\end{lemma}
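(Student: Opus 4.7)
The plan is to mirror the proof sketched for Lemma \ref{lemma:for_all_hold_sparsity}, but with the deep-neural-network pointwise bound from Theorem \ref{thm:main} replacing the generic one from Theorem \ref{thm:gen}. The three ingredients are: (i) pointwise concentration of $\hat D_f(q\|p)$ around $D_f(q\|p)$ for each fixed $q$ at rate $n^{-\beta/(2\beta+d)}\log^{7/2} n$ from Theorem \ref{thm:main}; (ii) a covering-number argument over $\cQ$ using Assumption \ref{assum:cov_num}, which gives $\log N_2(\delta,\cQ) \lesssim \delta^{-d/\beta}$; and (iii) a Lipschitz-type estimate that lets us pass from the elements of a finite $\delta$-net back to an arbitrary $q\in\cQ$.

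First I would pick $\delta \asymp n^{-\beta/(2\beta+d)}$, so that $\log N_2(\delta,\cQ) \lesssim n^{d/(2\beta+d)}$ matches (up to the $\log^5 n$ factor in the exponent) the confidence level of Theorem \ref{thm:main}. Let $\{q_1,\dots,q_N\}$ be such a $\delta$-cover of $\cQ$. For each $i\in[N]$, Theorem \ref{thm:main} yields $|\hat D_f(q_i\|p)-D_f(q_i\|p)|\lesssim n^{-\beta/(2\beta+d)}\log^{7/2}n$ with probability at least $1-\exp\{-n^{d/(2\beta+d)}\log^5 n\}$. A union bound over the cover retains probability at least $1-N\cdot\exp\{-n^{d/(2\beta+d)}\log^5 n\}\geq 1-1/(2n)$ for sufficiently large $n$, since $\log N \lesssim n^{d/(2\beta+d)}$ is absorbed by the $\log^5 n$ surplus in the exponent.

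Next, for an arbitrary $q\in\cQ$, choose $q_i$ in the cover with $\|q-q_i\|_{L_2}\leq\delta$, and control $|D_f(q\|p)-D_f(q_i\|p)|$ and $|\hat D_f(q\|p)-\hat D_f(q_i\|p)|$ by $\cO(\delta)$. The population piece is standard: Assumption \ref{assum:ratio} ensures the relevant density ratios remain inside the interval $[\theta_0,\theta_1]$ on which $f$ has Lipschitz gradient (Assumption \ref{assumption:reg}), so $q\mapsto D_f(q\|p)$ is $L_2(\mu)$-Lipschitz with an absolute constant. For the empirical piece I would use the variational characterization: writing $\hat D_f(q\|p)=\EE_{\QQ_n}[\hat t(\cdot;p,q)]-\EE_{\PP_n}[f^{\dag}(\hat t(\cdot;p,q))]$, combine the boundedness of $\hat t\in\Phi_M(L,k,s)$ with the Lipschitz continuity of $f^\dag$ on the bounded range to obtain a similar Lipschitz bound in $q$, possibly after a sensitivity argument on the optimizer (or, equivalently, after enlarging the covering to jointly cover $\Phi_M\times\cQ$, whose log-entropy still has the same order $N\log n$ used in Theorem \ref{thm:main}). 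A final triangle inequality then gives, with probability at least $1-1/n$,
\[
\sup_{q\in\cQ}\bigl|\hat D_f(q\|p)-D_f(q\|p)\bigr|\lesssim n^{-\beta/(2\beta+d)}\log^{7/2}n + \delta \lesssim n^{-\beta/(2\beta+d)}\log^7 n,
\]
where the extra $\log^{7/2}n$ in the stated bound absorbs the slack between the pointwise rate and the continuity contribution.

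The main obstacle will be the Lipschitz-in-$q$ estimate for $\hat D_f(q\|p)$: because $\hat t(\cdot;p,q)$ is defined implicitly as a constrained empirical minimizer over a sparse neural network class, perturbing $q$ perturbs both the empirical measure $\QQ_n$ and the optimum $\hat t$, and a naive sensitivity argument is not obviously tight enough. The cleanest remedy is probably the joint-covering approach: replace the two-stage bound by one uniform concentration inequality over the product class $\{(t,q):t\in\Phi_M(L,k,s),\;q\in\cQ\}$, whose $L_2$-covering number is the product of the two individual covering numbers and still admits the same logarithmic entropy needed in Theorem \ref{thm:main}. This yields uniform control on the variational objective in $(t,q)$ simultaneously, and $\sup_{q\in\cQ}|\hat D_f(q\|p)-D_f(q\|p)|$ follows by taking the maximum over $t$ on each side, with the stated probability and rate.
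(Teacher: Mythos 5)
Your proposal is correct and follows essentially the same route as the paper: a $\delta_n$-net of $\cQ$ with $\delta_n \asymp n^{-\beta/(2\beta+d)}$, a union bound over the net using the pointwise rate and confidence level of Theorem \ref{thm:main}, and absorption of $\log N_2(\delta_n,\cQ) \lesssim n^{d/(2\beta+d)}$ (Assumption \ref{assum:cov_num}) into the $\log^5 n$ surplus in the exponent. The only difference is that you explicitly address the passage from the net points back to an arbitrary $q\in\cQ$ (via a Lipschitz-in-$q$ or joint-covering argument), a discretization step which the paper's own proof silently elides when it bounds the supremum event by the union over cover points.
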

We defer the proof to Section \S \ref{subsection:proof_for_all_hold_sparsity2}. 

Now we turn to the proof of the theorem.  We denote by $\tilde q' = \argmin_{\tilde q\in \cQ} D_f(\tilde q\| p)$, then with probability at least $1- 1/n$, we have
\#\label{eq:pp2}
D_f(\hat q\|p) & \leq  |D_f(\hat q\|p) - \hat D_f(\hat q\|p) | + \hat D_f(\hat q\|p)\notag\\
&\leq \sup_{q\in \cQ} |D_f(q\|p) - \hat D_f(q\|p) | + \hat D_f(\tilde q'\|p) \notag\\
& \leq \sup_{q\in \cQ} |D_f(q\|p) - \hat D_f(q\|p) | + |\hat D_f(\tilde q'\|p) -D_f(\tilde q'\|p) | + D_f(\tilde q'\|p)  \notag\\ 
& \lesssim n^{-{\beta}/{(d+2\beta)}} \cdot \log^7 n    + D_f(\tilde q'\|p). 
\#
Here in the second inequality we use the optimality of $\hat q$ over $\tilde q'\in \cQ$ to the problem \eqref{eq:q_outer_sample}, while the last inequality uses Lemma \ref{lemma:for_all_hold_sparsity2} and Theorem \ref{thm:main}. Moreover, note that  $D_f(\tilde q'\|p) = \min_{\tilde q\in\cQ}D_f(\tilde q\|p)$, combining \eqref{eq:pp2}, it holds that with probability at least $1-1/n$, 
\$
D_f(\hat q\|p)  \lesssim  n^{-{\beta}/{(d+2\beta)}} \cdot \log^7 n    + \min_{\tilde q\in\cQ}D_f(\tilde q\|p).
\$
This concludes the proof of the theorem.

\subsection{Proof of Theorem \ref{theorem:oracle_ineq}}\label{section:proof_oracle_ineq}
For any real-valued function $\varrho$, we write $\EE_{\PP}(\varrho) = \EE_{x\sim \PP}[\varrho(x)]$, $\EE_{\QQ}(\varrho) = \EE_{x\sim \QQ}[\varrho(x)]$, $\EE_{\PP_n}(\varrho) = \EE_{x\sim \PP_n}[\varrho(x)]$, and $\EE_{\QQ_n}(\varrho) = \EE_{x\sim \QQ_n}[\varrho(x)]$ for notational convenience. 

For any $\tilde t \in \Phi_M(L, k, s)$,  we establish the following lemma.  

\begin{lemma}\label{lemma:ineq2}
Under the assumptions stated in Theorem \ref{theorem:oracle_ineq}, it holds that
\$
{1}/(4L_0)\cdot \|\hat t-\tilde t\|_{L_2(\PP)}^2  & \leq  {1}/{\mu_0}\cdot  \|\hat t - \tilde t\|_{L_2(\PP)}\cdot \|\tilde t - t^*\|_{L_2(\PP)} +   \{\EE_{\QQ_n} [(\hat t - \tilde t)/2 ] - \EE_{\QQ} [(\hat t - \tilde t)/2 ]  \} \\
&\qquad  -  \{\EE_{\PP_n} [f^\dagger ((\hat t + \tilde t)/2 ) - f^\dagger(\tilde t)  ] - \EE_{\PP} [f^\dagger ((\hat t + \tilde t)/2 ) - f^\dagger(\tilde t)  ] \}
\$
Here  $\mu_0$ and $L_0$ are specified in Assumption \ref{assumption:reg}. 
\end{lemma}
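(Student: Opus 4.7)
}

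The plan is to combine the midpoint strong-convexity identities for both the empirical loss $\hat\cL(t) = \EE_{\PP_n}[f^\dagger(t)] - \EE_{\QQ_n}[t]$ and the population loss $\cL(t) = \EE_{\PP}[f^\dagger(t)] - \EE_{\QQ}[t]$, evaluated at the midpoint $\bar t = (\hat t + \tilde t)/2$. The approximation error $\|\tilde t - t^*\|_{L_2(\PP)}$ enters only through the fact that $\tilde t$ is not the minimizer of $\cL$, so $\nabla\cL(\tilde t)\neq 0$, and this gradient is controlled via the Fenchel dual of Assumption \ref{assumption:reg}: $\mu_0$-strong convexity of $f$ yields $(f^\dagger)'$ being $(1/\mu_0)$-Lipschitz (Lemma \ref{lemma:strong_convex}), while $L_0$-Lipschitz $\nabla f$ yields $(1/L_0)$-strong convexity of $f^\dagger$.

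\emph{Step 1 (empirical side).} Apply the pointwise $(1/L_0)$-strong convexity of $f^\dagger$ to $\hat t(x),\tilde t(x)$ around $\bar t(x)$, integrate against $\PP_n$, and use that $\EE_{\QQ_n}[\hat t + \tilde t - 2\bar t]=0$ because $\hat t + \tilde t = 2\bar t$ exactly. This gives
\$
\hat\cL(\hat t) + \hat\cL(\tilde t) - 2\hat\cL(\bar t) \;\geq\; \tfrac{1}{4L_0}\|\hat t - \tilde t\|_{L_2(\PP_n)}^2.
\$
Now invoke the optimality $\hat\cL(\hat t) \leq \hat\cL(\tilde t)$ (valid because $\tilde t\in\Phi_M(L,k,s)$) to replace $\hat\cL(\hat t)$ on the left by $\hat\cL(\tilde t)$, yielding $-[\hat\cL(\bar t) - \hat\cL(\tilde t)] \gtrsim \|\hat t - \tilde t\|_{L_2(\PP_n)}^2$. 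Note this is exactly where the non-convexity of $\Phi_M$ matters: we can only exploit optimality against $\tilde t\in\Phi_M$, not against $\bar t$, which is why we route through the midpoint identity instead of a direct first-order condition.

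\emph{Step 2 (population side).} Apply $(1/L_0)$-strong convexity of $\cL$ around $\tilde t$ to $\bar t$:
\$
\cL(\bar t) - \cL(\tilde t) \;\geq\; \langle \nabla\cL(\tilde t),\, \bar t - \tilde t\rangle_{L_2(\PP)} + \tfrac{1}{8L_0}\|\hat t - \tilde t\|_{L_2(\PP)}^2,
\$
using $\|\bar t - \tilde t\|^2 = \|\hat t - \tilde t\|^2/4$. Since $t^*$ minimizes $\cL$ we have $(f^\dagger)'(t^*) = q/p$, so $\nabla \cL(\tilde t) = (f^\dagger)'(\tilde t) - (f^\dagger)'(t^*)$. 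The $(1/\mu_0)$-Lipschitz bound on $(f^\dagger)'$ and Cauchy--Schwarz then give
\$
\bigl|\langle \nabla\cL(\tilde t),\, \bar t - \tilde t\rangle_{L_2(\PP)}\bigr| \;\leq\; \tfrac{1}{2\mu_0}\,\|\tilde t - t^*\|_{L_2(\PP)}\,\|\hat t - \tilde t\|_{L_2(\PP)}.
\$

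\emph{Step 3 (combine).} Adding the two inequalities, the empirical $L_2(\PP_n)$ quadratic term is nonnegative and can be dropped, leaving
\$
[\cL(\bar t) - \cL(\tilde t)] - [\hat\cL(\bar t) - \hat\cL(\tilde t)] \;\geq\; \tfrac{1}{8L_0}\|\hat t - \tilde t\|_{L_2(\PP)}^2 - \tfrac{1}{2\mu_0}\|\tilde t - t^*\|\,\|\hat t - \tilde t\|.
\$
Transposing the cross term to the right-hand side and absorbing constants produces the claimed bound. The main obstacle, as foreshadowed, is the norm mismatch between $L_2(\PP)$ and $L_2(\PP_n)$: dropping the $L_2(\PP_n)$ contribution is sufficient for the stated form because the subsequent statistical analysis only needs the population norm on the LHS, but it does cost a constant factor that one must track carefully against the announced coefficient $1/(4L_0)$.
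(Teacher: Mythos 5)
Your proof is correct and takes essentially the same route as the paper's: empirical optimality of $\hat t$ against $\tilde t$ evaluated at the midpoint $(\hat t+\tilde t)/2$, strong convexity of $f^\dagger$ for the quadratic term in $L_2(\PP)$, and the $1/\mu_0$-Lipschitz gradient of $f^\dagger$ together with $(f^\dagger)'(t^*)=q/p$ and Cauchy--Schwarz for the cross term, with the empirical-minus-population fluctuations emerging from $[\cL(\bar t)-\cL(\tilde t)]-[\hat\cL(\bar t)-\hat\cL(\tilde t)]$; the only cosmetic difference is that the paper gets $\hat\cL(\bar t)\le\hat\cL(\tilde t)$ from plain convexity of the empirical loss, whereas you use the midpoint strong-convexity identity and then discard the resulting $L_2(\PP_n)$ quadratic. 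The constant you end with, $1/(8L_0)$, falls short of the stated $1/(4L_0)$ only because the paper's own derivation states strong convexity of $f^\dagger$ without the usual factor $1/2$, so the discrepancy is a bookkeeping artifact and immaterial to how the lemma is used downstream.
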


The proof of Lemma \ref{lemma:ineq2} is deferred to Section \S \ref{subsection:proof_ineq2}.

Note that by Lemma \ref{lemma:ineq2} and the fact that $f^\dagger$ is Lipschitz continuous, we have
\#\label{eq:bound-add1}
\|\hat t-\tilde t\|_{L_2(\PP)}^2  & \lesssim  \|\hat t - \tilde t\|_{L_2(\PP)}\cdot \|\tilde t - t^*\|_{L_2(\PP)} +   \{\EE_{\QQ_n} [(\hat t - \tilde t)/2 ] - \EE_{\QQ} [(\hat t - \tilde t)/2 ]  \} \notag \\
&\qquad-  \{\EE_{\PP_n} [f^\dagger ((\hat t + \tilde t)/2 ) - f^\dagger(\tilde t)  ] - \EE_{\PP} [f^\dagger ((\hat t + \tilde t)/2 ) - f^\dagger(\tilde t)  ] \}. 
\#
Furthermore, to bound the RHS of the above inequality, we establish the following lemma. 

\begin{lemma}\label{lemma:fake_vdg}
We assume that the function $\psi:\RR\to\RR$ is Lipschitz continuous and bounded such that $|\psi(x)|\leq M_0$ for any $|x|\leq M$. Then under the assumptions stated in Theorem \ref{theorem:oracle_ineq}, for any fixed $\tilde t(x)\in \Phi_M$, $n\gtrsim [\gamma + \gamma^{-1}\log(1/\varepsilon)]^2$ and $0<\varepsilon<1$, we have the follows
\$
\PP \biggl\{\sup_{t(\cdot)\in\Phi_M(L, k, s)}\frac{ |\EE_{\PP_n} [\psi(t) - \psi(\tilde t) ]   - \EE_{\PP} [\psi(t) -  \psi(\tilde t) ]   |}{\max\{\eta(n,\gamma,\varepsilon) \cdot  \|\psi(t)-\psi(\tilde t) \|_{L_2(\PP)} ,  \lambda(n,\gamma,\varepsilon)  \} }\leq 16M_0  \biggr\}\geq 1-\varepsilon\cdot\exp(-\gamma^2),
\$
where $\eta(n,\gamma,\varepsilon) = n^{-1/2}[ \gamma\log n + \gamma^{-1}\log(1/\varepsilon)   ]$ and $\lambda(n,\gamma,\varepsilon) = n^{-1} [ \gamma^2 + \log(1/\varepsilon)]$.  Here $\gamma$ takes the form $\gamma = s^{1/2}\log(V^2L)$, where $V = \prod_{j = 0}^{L+1}(k_j+1)$. 
\end{lemma}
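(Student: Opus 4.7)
The plan is to prove Lemma \ref{lemma:fake_vdg} by a standard empirical-process peeling argument in the spirit of van de Geer's treatment of weighted sup-norm deviations over function classes whose covering numbers are controlled. First I would transfer the problem from $\Phi_M$ to the centered class $\cG := \{g_t := \psi(t) - \psi(\tilde t)\colon t \in \Phi_M(L,k,s)\}$. Since $\psi$ is Lipschitz and $|\psi|\leq M_0$ on the relevant range, each $g\in\cG$ is bounded by $2M_0$ pointwise, and covering numbers transfer with only a Lipschitz-constant factor: $N_\infty(\delta,\cG)\leq N_\infty(\delta/\mathrm{Lip}(\psi),\Phi_M)$. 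This lets me work with a uniformly bounded class while preserving the $\|\psi(t)-\psi(\tilde t)\|_{L_2(\PP)}$ normalization that appears in the denominator of the statement.

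Next I would invoke the standard covering-number bound for sparse ReLU networks (as used in Schmidt-Hieber or Bartlett--Harvey--Liaw--Mehrabian), which gives $\log N_\infty(\delta,\Phi_M(L,k,s))\lesssim s\log(V L/\delta)$ with $V=\prod_{j=0}^{L+1}(k_j+1)$. Recognizing $\gamma=s^{1/2}\log(V^2L)$ as precisely the parameter the statement uses, the Dudley chaining integral over $\cG$ truncated at scale $1/n$ contributes an expected supremum of order $n^{-1/2}\gamma\log n$ times the radius of the class being chained.

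Then I would carry out the peeling step. Partition $\cG$ into dyadic shells $\cG_j := \{g\in\cG\colon 2^{j-1}\sigma_0\leq \|g\|_{L_2(\PP)}\leq 2^j\sigma_0\}$ for $j=0,1,\ldots,J$ with $\sigma_0\asymp\sqrt{\lambda(n,\gamma,\varepsilon)}$ and $J=\cO(\log n)$. On each shell, I apply Talagrand's concentration inequality (or equivalently Bernstein combined with a Dudley bound on the expected supremum) to
\begin{equation*}
\sup_{g\in\cG_j}\bigl|\EE_{\PP_n}g-\EE_\PP g\bigr|,
\end{equation*}
where the chaining contribution is of order $n^{-1/2}\gamma(\log n)\cdot 2^j\sigma_0$, the variance contribution is $\sqrt{(2^j\sigma_0)^2\log(1/\delta_j)/n}$, and the bounded-range contribution is $M_0\log(1/\delta_j)/n$. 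Choosing $\delta_j=\varepsilon\exp(-\gamma^2)/(J+1)$ and combining via a union bound across shells then gives
\begin{equation*}
\sup_{g\in\cG_j}\bigl|\EE_{\PP_n}g-\EE_\PP g\bigr|\leq 16 M_0\cdot\max\bigl\{\eta(n,\gamma,\varepsilon)\cdot 2^j\sigma_0,\ \lambda(n,\gamma,\varepsilon)\bigr\}
\end{equation*}
simultaneously for all $j$, which is the desired ratio bound after re-expressing $2^j\sigma_0$ in terms of $\|g\|_{L_2(\PP)}$.

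The main obstacle will be the bookkeeping in the peeling step: the numerator carries both a chaining term scaling with $\gamma\log n$ and a deviation slack scaling with $\gamma^{-1}\log(1/\varepsilon)$, and these must be combined so that the final bound matches the stated $\eta(n,\gamma,\varepsilon)=n^{-1/2}[\gamma\log n+\gamma^{-1}\log(1/\varepsilon)]$ and $\lambda(n,\gamma,\varepsilon)=n^{-1}[\gamma^2+\log(1/\varepsilon)]$ with constant $16M_0$. In particular, the hypothesis $n\gtrsim[\gamma+\gamma^{-1}\log(1/\varepsilon)]^2$ is exactly what is needed so that the smallest shell $\sigma_0$ is compatible with the Bernstein variance-range trade-off and so that the peeling does not lose additional logarithmic factors; verifying this trade-off cleanly, and absorbing the $\cO(\log n)$-fold union bound over shells into the $\log n$ already present in $\eta$, is the delicate part of the argument.
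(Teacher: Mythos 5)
Your proposal is correct and essentially follows the same route as the paper's own proof: transfer to the centered class $\{\psi(t)-\psi(\tilde t)\}$ using the Lipschitz property of $\psi$, bound the sup-norm covering number of the sparse ReLU class (Lemma~\ref{lemma:cover_number_nn}), apply a Bernstein-type concentration inequality combined with the entropy integral at a fixed scale to get the $\lambda(n,\gamma,\varepsilon)$ floor, and then upgrade to the ratio bound by peeling over dyadic $L_2(\PP)$-shells with a union bound over $\cO(\log n)$ scales. The only cosmetic difference is that the paper phrases the concentration step via van de Geer's entropy-with-bracketing theorem (Theorem~\ref{thm:vandegeer}) together with the Bernstein difference $\rho_{K,\PP}$ and Lemma~\ref{lemma:bound_rho}, whereas you propose Talagrand's inequality plus a Dudley bound on the expected supremum; in this bounded, uniformly-entropy-controlled setting these tools are interchangeable and yield the same rates.
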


We defer the proof to Section \S \ref{subsection:proof_fake_vdg}. 

Note that the results in Lemma \ref{lemma:fake_vdg} also apply to the distribution $\QQ$, and by using the fact that the true density ratio $\theta^*(x; p, q) = q(x)/p(x)$ is bounded below and above, we know that $L_2(\QQ)$ is indeed equivalent to $L_2(\PP)$. We thus focus on $L_2(\PP)$ here. 
By \eqref{eq:bound-add1},  Lemma \ref{lemma:fake_vdg}, and the Lipschitz property of $f^\dagger$ according to Lemma \ref{lemma:strong_convex}, with probability at least $1-\varepsilon\cdot\exp(-\gamma^2)$, we have the following bound
\#\label{eq:variance}
&\| \hat t - \tilde t\|^2_{L_2(\PP)} \lesssim  \| \hat t - \tilde t\|_{L_2(\PP)}\cdot \| \tilde t - t^*\|_{L_2(\PP)}\notag \\
&\qquad + \cO (n^{-1/2} [ \gamma\log n + \gamma^{-1}\log(1/\varepsilon)    ] \cdot \|\hat t-\tilde t\|_{L_2(\PP)}    \vee  n^{-1}[ \gamma^2 + \log(1/\varepsilon)]   ),
\#
where we recall that the notation $\gamma = s^{1/2}\log(V^2L)$ is a parameter related with the family of neural networks $\Phi_M$. We proceed to analyze the dominant part on the RHS of \eqref{eq:variance}.

\vskip5pt
\noindent\textbf{Case 1.} If the term $\| \hat t - \tilde t\|_{L_2(\PP)}\cdot \| \tilde t - t^*\|_{L_2(\PP)}$ dominates, then with probability at least $1-\varepsilon\cdot\exp(-\gamma^2)$
\$
\| \hat t - \tilde t\|_{L_2(\PP)}\lesssim \| \tilde t - t^*\|_{L_2(\PP)}.
\$

\vskip5pt
\noindent\textbf{Case 2.} If the term $\cO(n^{-1/2} [ \gamma\log n + \gamma^{-1}\log(1/\varepsilon)    ] \cdot \|\hat t-\tilde t\|_{L_2(\PP)})$ dominates, then with probability at least $1-\varepsilon\cdot\exp(-\gamma^2)$
\$
\| \hat t - \tilde t\|_{L_2(\PP)}\lesssim n^{-1/2} [ \gamma\log n + \gamma^{-1}\log(1/\varepsilon)    ].
\$

\vskip5pt
\noindent\textbf{Case 3.} If the term $\cO (n^{-1}[ \gamma^2 + \log(1/\varepsilon)] )$ dominates, then with probability at least $1-\varepsilon\cdot\exp(-\gamma^2)$
\$
\| \hat t - \tilde t\|_{L_2(\PP)}\lesssim n^{-1/2} [\gamma + \sqrt{\log(1/\varepsilon)} ].
\$

Therefore, by combining the above three cases, we have 
\$
\| \hat t -\tilde t\|_{L_2(\PP)}  \lesssim  \|\tilde t-t^*\|_{L_2(\PP)} + \gamma n^{-1/2}\log n  +  n^{-1/2} [\sqrt{\log(1/\varepsilon)} + \gamma^{-1}\log(1/\varepsilon) ]. 
\$
Further combining the triangle inequality, we have
\#\label{eq:at1}
\| \hat t - t^*\|_{L_2(\PP)} \lesssim  \|\tilde t-t^*\|_{L_2(\PP)} + \gamma n^{-1/2}\log n  +  n^{-1/2} [\sqrt{\log(1/\varepsilon)} + \gamma^{-1}\log(1/\varepsilon) ]
\#
with probability at least $1-\varepsilon\cdot\exp(-\gamma^2)$. Note that \eqref{eq:at1} holds for any $\tilde t\in\Phi_M(L,k,s)$, especially for the choice $\tilde t$ which  minimizes $\|\tilde t - t^*\|_{L_2(\PP)}$. Therefore, we have
\$
\| \hat t - t^*\|_{L_2(\PP)} \lesssim  \min_{\tilde t\in\Phi_M(L,k,s)}  \|\tilde t-t^*\|_{L_2(\PP)} +    \gamma n^{-1/2}\log n  +  n^{-1/2} [\sqrt{\log(1/\varepsilon)} + \gamma^{-1}\log(1/\varepsilon) ]
\$
with probability at least $1-\varepsilon\cdot\exp(-\gamma^2)$. This concludes the proof of the theorem.

\subsection{Proof of Theorem \ref{theorem:norm_control_theorem}}\label{subsection:proof_norm_control_theorem}

We follow the proof in \cite{li2018tighter}. 
We denote by the loss function in \eqref{eq:problem_norm_control} as $\cL[t(x)] = f^\dagger(t(x^{\text{I}})) - t(x^{\text{II}})$, where $x^{\text{I}}$ follows the distribution $\PP$ and $x^{\text{II}}$ follows $\QQ$. To prove the theorem, we first link the generalization error in our theorem to the empirical Rademacher complexity (ERC). Given the data $\{x_i\}_{i = 1}^n$, the ERC related with the class $\cL(\Phi_{\text{norm}})$ is defined as
\#\label{eq:erc_def}
\mathfrak{R}_n [\cL(\Phi_{\text{norm}}) ] = \EE_\varepsilon \biggl[  \sup_{\varphi\in\Phi_{\text{norm}}}      \Bigl|  \frac{1}{n}\sum_{i = 1}^n \varepsilon_i \cdot \cL [\varphi(x_i; W, v) ]    \Bigr| \bigggiven \{x_i\}_{i = 1}^n  \biggr],
\#
where $\varepsilon_i$'s are i.i.d. Rademacher random variables, i.e., $\PP(\varepsilon_i = 1) = \PP(\varepsilon_i = -1) = 1/2$.  Here the expectation $\EE_\varepsilon(\cdot)$ is taken over the Rademacher random variables $\{\varepsilon_i\}_{i\in[n]}$.

We introduce the following lemma, which links the ERC to the generalization error bound.  

\begin{lemma}[\citep{mohri2018foundations}]\label{lemma:rdmk_thm}
Assume that $\sup_{\varphi\in\Phi_{\text{norm}}}|\cL(\varphi)|\leq M_1$, then for any $\varepsilon > 0$, with probability at least $1-\varepsilon$, we have
\$
\sup_{\varphi\in\Phi_{\text{norm}}}  \biggl\{\EE_x \{\cL [\varphi(x; W, v) ] \} - \frac{1}{n}\sum_{i = 1}^n   \cL [\varphi(x_i; W, v) ] \biggr\} \lesssim  \mathfrak{R}_n [\cL(\Phi_{\text{norm}}) ] + M_1 \cdot  n^{-1/2}\sqrt{\log(1/\varepsilon)},
\$
\end{lemma}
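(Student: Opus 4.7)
The plan is to prove this lemma via the standard two-ingredient recipe from empirical process theory: McDiarmid's bounded-difference inequality for concentration, combined with a symmetrization argument to bound the expected deviation by the Rademacher complexity, and a final bounded-difference step to pass from the population to the empirical Rademacher complexity.

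First I would introduce the uniform-deviation functional
\[
\Psi(S) := \sup_{\varphi \in \Phi_{\text{norm}}} \Bigl\{ \EE_x[\cL(\varphi(x))] - \tfrac{1}{n}\sum_{i=1}^n \cL(\varphi(x_i)) \Bigr\},
\]
viewed as a function of the sample $S = (x_1,\ldots,x_n)$. Since $\sup_\varphi |\cL(\varphi)| \leq M_1$, replacing any single coordinate $x_i$ by $x_i'$ perturbs $\Psi(S)$ by at most $2M_1/n$. McDiarmid's inequality then yields, with probability at least $1-\varepsilon/2$,
\[
\Psi(S) \;\leq\; \EE_S[\Psi(S)] + M_1\sqrt{2\log(2/\varepsilon)/n}.
\]

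Next I would bound $\EE_S[\Psi(S)]$ by the classical symmetrization trick. Draw an independent ghost sample $S'=(x_1',\ldots,x_n')$, rewrite $\EE_x[\cL(\varphi(x))]$ as an expectation of its empirical average over $S'$, and push the supremum inside using Jensen's inequality. Because each pair $(x_i,x_i')$ is exchangeable, introducing i.i.d.\ Rademacher variables $\varepsilon_i$ preserves the joint distribution of the symmetric differences, giving
\[
\EE_S[\Psi(S)] \;\leq\; 2\,\EE_S\EE_\varepsilon \sup_{\varphi\in\Phi_{\text{norm}}} \Bigl|\tfrac{1}{n}\sum_{i=1}^n \varepsilon_i\, \cL(\varphi(x_i))\Bigr| \;=\; 2\,\EE_S\bigl[\mathfrak{R}_n[\cL(\Phi_{\text{norm}})]\bigr].
\]

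Finally, to replace the expected Rademacher complexity by the empirical one that actually appears in the statement, I would apply McDiarmid once more: $\mathfrak{R}_n[\cL(\Phi_{\text{norm}})]$ is itself a $2M_1/n$-bounded-difference functional of $S$ (since $|\cL|\leq M_1$ and the Rademacher signs are absolute values), so with probability at least $1-\varepsilon/2$ it lies within $M_1\sqrt{2\log(2/\varepsilon)/n}$ of its mean. A union bound over the two high-probability events and absorbing numerical constants into the $\lesssim$ notation yields the claimed inequality. No step presents a genuine obstacle --- this is textbook Rademacher-complexity analysis in the vein of \citep{mohri2018foundations}; the only mildly delicate point is tracking the factor of $2$ in symmetrization by careful use of Jensen and exchangeability, and making sure the bounded-difference constant $2M_1/n$ is applied correctly in both McDiarmid steps.
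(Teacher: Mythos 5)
Your proposal is correct and matches the standard argument behind this lemma, which the paper does not prove itself but simply cites from \citep{mohri2018foundations} (it is the classical Rademacher-complexity generalization bound, Theorem~3.3 there, proved exactly via McDiarmid, symmetrization with a ghost sample, and a second McDiarmid step to pass to the empirical Rademacher complexity). Since the $\lesssim$ in the statement absorbs the factor of $2$ and the numerical constants you track, there is nothing further to reconcile.
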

where the expectation $\EE_x\{\cdot\}$ is taken over $x^{\text{I}}\sim \PP$ and $x^{\text{II}}\sim \QQ$.

Equipped with Lemma \ref{lemma:rdmk_thm}, we only need to bound the ERC defined in \eqref{eq:erc_def}.

\begin{lemma}\label{lemma:tuo_zhao_thm1}
Let $\cL$ be a Lipschitz continuous loss function and $\Phi_{\text{norm}}$ be the family of networks defined in \eqref{eq:nn_norm}. We assume that the input $x\in\RR^d$ is bounded such that $\|x\|_2\leq B$. Then it holds that
\$
\mathfrak{R}_n [\cL(\Phi_{\text{norm}}) ] \lesssim \gamma_1\cdot n^{-1/2}\log(\gamma_2 n),
\$
where $\gamma_1$ and $\gamma_2$ are given in \eqref{eq:gamma_def}. 
\end{lemma}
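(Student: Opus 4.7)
The plan is to follow the standard covering-number route to Rademacher complexity bounds for deep ReLU networks, in the spirit of Bartlett--Foster--Telgarsky and as adapted by \cite{li2018tighter}. The proof naturally decomposes into four steps.

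\textbf{Step 1 (Contraction).} The loss $\cL[\varphi] = f^\dagger(\varphi(x^{\mathrm{I}})) - \varphi(x^{\mathrm{II}})$ is Lipschitz: by Lemma \ref{lemma:strong_convex} together with Assumption \ref{assumption:reg}, $f^\dagger$ has $1/\mu_0$-Lipschitz gradient on the bounded output range, hence $f^\dagger$ itself is Lipschitz on the range of $\Phi_{\mathrm{norm}}$. Talagrand's contraction principle for Rademacher averages then yields $\mathfrak{R}_n[\cL(\Phi_{\mathrm{norm}})] \lesssim \mathfrak{R}_n[\Phi_{\mathrm{norm}}]$, reducing the task to controlling the Rademacher complexity of the network class itself.

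\textbf{Step 2 (Dudley's entropy integral).} For the reduced quantity I would invoke the standard chaining inequality
\[
\mathfrak{R}_n[\Phi_{\mathrm{norm}}] \lesssim \inf_{\alpha > 0}\Bigl\{\alpha + \frac{1}{\sqrt{n}}\int_{\alpha}^{D}\sqrt{\log N_2(\epsilon,\,\Phi_{\mathrm{norm}},\,\|\cdot\|_{L_2(\PP_n)})}\,\mathrm{d}\epsilon\Bigr\},
\]
where the diameter $D$ of the class is controlled by $B\prod_{j=1}^{L+1}B_j$ plus a bias contribution proportional to $\sum_{j=1}^{L}A_j$, using $\|x\|_2\le B$ and that each $\sigma_{v_j}$ is $1$-Lipschitz.

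\textbf{Step 3 (Layerwise perturbation and matrix covering).} The crux is to bound $\log N_2(\epsilon, \Phi_{\mathrm{norm}})$ by propagating parameter perturbations through the network. I would show, using $1$-Lipschitzness of $\sigma_v(\cdot)$ in both its argument and the shift $v$, that for any two configurations $(W,v),(\tilde W,\tilde v)$ with $\|W_j-\tilde W_j\|_2\le \epsilon_j$ and $\|v_j-\tilde v_j\|_2\le \delta_j$,
\[
\sup_{\|x\|_2\le B}\bigl\|\varphi(x;W,v)-\varphi(x;\tilde W,\tilde v)\bigr\|_2 \lesssim \sum_{j=1}^{L+1}\epsilon_j\, B\!\!\prod_{k\in[L+1]\setminus\{j\}}\!\!B_k \,+\, \sum_{j=1}^{L}\delta_j\prod_{k>j}B_k.
\]
Combining this telescoping estimate with the standard $\epsilon_j$-covering of a spectral-norm ball in $\RR^{k_j\times k_{j-1}}$ (of size $(3B_j/\epsilon_j)^{k_j k_{j-1}}$) and an $\ell_2$-covering of the $A_j$-ball for the biases, then optimally balancing the $\epsilon_j,\delta_j$ across layers, yields a Maurey-style bound
\[
\log N_2(\epsilon, \Phi_{\mathrm{norm}}) \;\lesssim\; \frac{\gamma_1^2}{\epsilon^2}\,\log(\gamma_2 n),
\]
with $\gamma_1,\gamma_2$ exactly the quantities defined in \eqref{eq:gamma_def}: $\gamma_1$ captures the overall Lipschitz factor $B\prod_j B_j$ times the effective parameter dimension $\sqrt{\sum_j k_j^2}$, while $\gamma_2$ collects the scale parameters needed to normalize the diameter.

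\textbf{Step 4 (Integration).} Plugging this covering estimate into Dudley's integral and choosing $\alpha \asymp n^{-1/2}$ yields the claimed bound $\mathfrak{R}_n[\Phi_{\mathrm{norm}}] \lesssim \gamma_1 n^{-1/2}\log(\gamma_2 n)$. The main obstacle is Step 3: because the network uses shifted ReLUs $\sigma_{v_j}(x)=\max\{0,x-v_j\}$ rather than plain ReLUs, perturbations of both $W_j$ and $v_j$ must be tracked simultaneously, and a careful telescoping/peeling of the Lipschitz factors across depth is what ultimately produces the specific combination of spectral norms $B_j$ and bias radii $A_j$ that appears in $\gamma_1,\gamma_2$.
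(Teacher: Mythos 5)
Your overall roadmap --- Lipschitz loss, Dudley's entropy integral, and a covering number obtained by propagating parameter perturbations layer by layer --- matches the paper's strategy (which uses Lemma~\ref{lemma:rdmk_thm}, Lemma~\ref{lemma:lip_x}, Lemma~\ref{lemma:lip_w}, and a Dudley integral). Your telescoping perturbation estimate in Step~3 is essentially the content of Lemma~\ref{lemma:lip_w}. The one structural difference is that you contract to $\mathfrak{R}_n[\Phi_{\mathrm{norm}}]$ first, whereas the paper directly covers the composed class $\cL(\Phi_{\mathrm{norm}})$ by treating the network as Lipschitz in its parameters; both are fine.

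However, Step~3 contains a genuine internal inconsistency. You describe covering the spectral-norm ball for each $W_j$ with an $\epsilon_j$-net of size $(3B_j/\epsilon_j)^{k_jk_{j-1}}$ (and likewise the bias balls), then propagating via the Lipschitz telescope. That argument gives a \emph{parameter-counting} covering bound of the form
\[
\log N_2(\epsilon,\Phi_{\mathrm{norm}}) \lesssim b\,\log\!\bigl(K L_w/\epsilon\bigr), \qquad b = \sum_{j=1}^{L+1} k_j k_{j-1} + \sum_{j=1}^{L} k_j,
\]
which is \emph{logarithmic} in $1/\epsilon$. This is exactly the form the paper derives and then plugs into Dudley's integral, yielding $\frac{\vartheta}{\sqrt n}\sqrt{b\log(KL_w\sqrt n/(\vartheta\sqrt b))}\lesssim \gamma_1 n^{-1/2}\log(\gamma_2 n)$ once one identifies $\vartheta\sqrt b\asymp\gamma_1$. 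But you then assert instead a \emph{Maurey-style} bound $\log N_2(\epsilon) \lesssim \gamma_1^2\epsilon^{-2}\log(\gamma_2 n)$, which is \emph{polynomial} in $1/\epsilon$; that form would arise from a Bartlett--Foster--Telgarsky-type argument based on bounded $\|W_j\|_{2,1}$ norms and the Maurey sparsification lemma, not from the finite-dimensional $\epsilon$-nets you describe. These are not the same bound, and the dimension factor $\sqrt{\sum_j k_j^2}$ appearing in $\gamma_1$ from \eqref{eq:gamma_def} is itself a signal that the intended route is parameter counting rather than a norm-only Maurey bound. If you carry out the covering you actually describe, you will recover the paper's calculation; if you insist on the $\gamma_1^2/\epsilon^2$ form as written, the Dudley integral produces an extra $\sqrt{\log}$ factor (roughly $\gamma_1 n^{-1/2}\log^{3/2}(\gamma_2 n)$ after choosing $\alpha\asymp n^{-1/2}$), which overshoots the stated bound. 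You should replace the claimed covering bound with the dimension-dependent one your own Step~3 construction actually yields.
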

We defer the proof to Section \S \ref{section:proof_thm_tuozhao}.

Now we proceed to prove the theorem. Recall that we assume that  $t^*\in\Phi_{\text{norm}}$. For notational convenience, we denote by 
\$
& \hat H(t) = \EE_{x\sim\PP_n} [ f^\dagger (t(x) )] - \EE_{x\sim\QQ_n} [t(x) ],\qquad H(t) = \EE_{x\sim\PP} [ f^\dagger (t(x) )  ] - \EE_{x\sim\QQ} [t(x) ].
\$
Then $\EE[\hat H(t)] = H(t)$. We proceed to bound $|\hat D_f(q\|p) - D_f(q\|p)| = |\hat H(\hat t) - H(t^*)|$. Note that if $\hat H(\hat t) \geq H(t^*)$, then we have
\#\label{eq:morton1}
0 \leq \hat H(\hat t) - H(t^*)\leq \hat H(t^*) - H(t^*),
\#
where the second inequality follows from the fact that $\hat t$ is the minimizer of $\hat H(\cdot)$. On the other hand, if $\hat H(\hat t) \leq H(t^*)$,  we have
\#\label{eq:morton2}
0 \geq  \hat H(\hat t) - H(t^*) \geq \hat H(\hat t) - H(\hat t),
\#
where the second inequality follows that fact that $t^*$ is the minimizer of $H(\cdot)$. Therefore, by \eqref{eq:morton1}, \eqref{eq:morton2}, and the fact that $\cL(\varphi)\lesssim \prod_{j = 1}^{L+1}B_j$ for any $\varphi\in\Phi_{\text{norm}}$, we deduce that
\#\label{eq:bound_norm1}
 |\hat H(\hat t) - H(t^*) |\leq \sup_{t\in\Phi_{\text{norm}}}  |\hat H(t) - H(t) | \lesssim \mathfrak{R}_n [\cL(\Phi_{\text{norm}}) ] + \prod_{j = 1}^{L+1}B_j\cdot n^{-1/2}\sqrt{\log(1/\varepsilon)}
\#
with probability at least $1-\varepsilon$. Here the second inequality follows from Lemma \ref{lemma:rdmk_thm}.  By plugging the result from Lemma \ref{lemma:tuo_zhao_thm1} into \eqref{eq:bound_norm1}, we deduce that with probability at least $1-\varepsilon$, it holds that  
\$
&|\hat D_f(q\|p) - D_f(q\|p) | = |\hat H(\hat t) - H(t^*) |  \lesssim \gamma_1\cdot n^{-1/2}\log(\gamma_2 n)+ \prod_{j = 1}^{L+1}B_j\cdot n^{-1/2}\sqrt{\log(1/\varepsilon)}. 
\$
This concludes the proof of the theorem.

\subsection{Proof of Theorem \ref{theorem:outer_prob_norm}}\label{subsection:proof_outer_prob_norm}
We first need to bound the max deviation of the estimated $f$-divergence $\hat D_f(q\|p)$ among all $q\in\cQ$. We utilize the following lemma to provide such a bound.  

\begin{lemma}\label{lemma:for_all_hold_norm}
Assume that the distribution $q$ is in the set $\cQ$, and we denote its $L_2$ covering number as $N_2(\delta, \cQ)$. Then for any target distribution $p$, we have
\$
\max_{q\in \cQ} |D_f(q\|p) - \hat D_f(q\|p) | \lesssim b_2(n,\gamma_1, \gamma_2) + \prod_{j = 1}^{L+1}B_j\cdot n^{-1/2}\cdot \sqrt{ \log ( N_2 [b_2(n,\gamma_1, \gamma_2), \cQ ]/\varepsilon  )}
\$
with probability at least $1- \varepsilon$. Here $b_2(n,\gamma_1, \gamma_2) = \gamma_1 n^{-1/2}\log (\gamma_2 n)$ and $c$ is a positive absolute constant.
\end{lemma}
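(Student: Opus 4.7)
The plan is to upgrade the pointwise bound of Theorem \ref{theorem:norm_control_theorem} to a uniform bound over $\cQ$ via a standard covering-plus-union-bound argument, supplemented by a Lipschitz continuity step to handle the discretization error. The cover resolution will be chosen to match the pointwise rate $b_2(n,\gamma_1,\gamma_2) = \gamma_1 n^{-1/2}\log(\gamma_2 n)$ so that the discretization cost balances the statistical cost.

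First, I would fix $\delta = b_2(n,\gamma_1,\gamma_2)$ and let $\{q_1,\ldots,q_N\}$ be an $L_2$ $\delta$-cover of $\cQ$ of size $N = N_2(\delta,\cQ)$. Applying Theorem \ref{theorem:norm_control_theorem} at each cover point $q_j$ with failure probability $\varepsilon/N$ and union-bounding yields, with probability at least $1-\varepsilon$, simultaneously for every $j\in[N]$,
$$|\hat D_f(q_j\|p) - D_f(q_j\|p)| \;\lesssim\; \gamma_1 n^{-1/2}\log(\gamma_2 n) + \prod_{\ell=1}^{L+1} B_\ell\cdot n^{-1/2}\sqrt{\log(N/\varepsilon)}.$$
For arbitrary $q\in\cQ$ I would then choose the closest cover point $q_{j(q)}$ with $\|q-q_{j(q)}\|_{L_2}\leq \delta$ and apply the triangle inequality to reduce the problem to controlling the population gap $|D_f(q\|p) - D_f(q_{j(q)}\|p)|$ and the empirical gap $|\hat D_f(q\|p) - \hat D_f(q_{j(q)}\|p)|$, which together with the middle term above give the desired bound.

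For the continuity step I would use the variational characterization \eqref{eq:hahadf}: both $D_f(q\|p)$ and $\hat D_f(q\|p)$ are suprema over $t\in\Phi_{\text{norm}}$ of functionals whose $q$-dependence is linear, while the $f^\dagger$ part depends only on $p$. Since every $t\in\Phi_{\text{norm}}$ is uniformly $L_\infty$-bounded through the spectral-norm constraints on the weight matrices, the sup-of-difference inequality gives
$$|D_f(q\|p) - D_f(q_{j(q)}\|p)| \leq \sup_{t\in\Phi_{\text{norm}}} |\EE_q[t] - \EE_{q_{j(q)}}[t]| \lesssim \|q - q_{j(q)}\|_{L_1} \lesssim \delta,$$
where $L_1$ is controlled by $L_2$ because of the bounded support. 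The empirical counterpart is handled by exactly the same sup-of-difference argument applied to sample averages. Absorbing the resulting $O(\delta)$ terms into $b_2(n,\gamma_1,\gamma_2)$ produces the claimed bound.

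The main obstacle I anticipate is the empirical continuity step, since formally $\hat D_f(q\|p)$ depends on samples drawn from $q$, and changing $q$ changes those samples. I expect this to be resolved by treating $\hat D_f(\cdot\|p)$ as a deterministic functional of the candidate density $q$ via its variational form, evaluated at empirical measures that can be coupled across cover points (for instance, by a shared noise source in a GAN-style parameterization of $q$). Apart from this subtlety, the remainder is a routine chaining/net argument mirroring the structure of the companion Lemma \ref{lemma:for_all_hold_sparsity2}.
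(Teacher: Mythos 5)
Your proposal follows essentially the same route as the paper: fix a $b_2(n,\gamma_1,\gamma_2)$-cover of $\cQ$, apply Theorem \ref{theorem:norm_control_theorem} at each cover point with failure probability $\varepsilon/N_2$, and take a union bound. The paper's own proof in fact stops there, silently absorbing the discretization error that you treat explicitly via the Lipschitz-in-$q$ step (and the coupling subtlety you flag for the empirical divergence at nearby densities is likewise left implicit in the paper), so your argument is, if anything, a more careful rendering of the same approach.
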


We defer the proof to Section \S \ref{subsection:proof_for_all_hold_norm}. 

Now we turn to the proof of the theorem. We denote by $\tilde q' = \argmin_{\tilde q\in\cQ} D_f(\tilde q\| p)$. Then with probability at least $1- \varepsilon$, we have
\$ 
D_f(\hat q\|p) & \leq  |D_f(\hat q\|p) - \hat D_f(\hat q\|p) | + \hat D_f(\hat q\|p)\notag\\
&\leq \max_{q\in \cQ} |D_f(q\|p) - \hat D_f(q\|p) | + \hat D_f(\tilde q' \|p)\notag\\
&\lesssim b_2(n,\gamma_1, \gamma_2) + \prod_{j = 1}^{L+1}B_j\cdot n^{-1/2}\cdot  \sqrt{\log ( N_2 [b_2(n,\gamma_1, \gamma_2), \cQ ]/\varepsilon  )} + D_f(\tilde q' \|p),
\$ 
where we use the optimality of $\hat q$ among all $\tilde q\in \cQ$ to the problem \eqref{eq:q_outer_sample} in the second inequality, and we uses Lemma \ref{lemma:for_all_hold_norm} and Theorem \ref{thm:main} in the last line. Moreover, note that $D_f(\tilde q' \|p) = \min_{\tilde q\in\cQ} D_f(\tilde q\| p)$, we obtain that 
\$
D_f(\hat q\|p) \lesssim b_2(n,\gamma_1, \gamma_2) + \prod_{j = 1}^{L+1}B_j\cdot n^{-1/2} \sqrt{ \log ( N_2 [b_2(n,\gamma_1, \gamma_2), \cQ ]/\varepsilon  )} +  \min_{\tilde q\in\cQ}D_f(\tilde q\|p). 
\$
This concludes the proof of the theorem.

\section{Lemmas and Proofs}

\subsection{Proof of Lemma \ref{lemma:ineq1}}\label{subsection:proof_ineq1}
For any real-valued function $\varrho$, we write $\EE_{\PP}(\varrho) = \EE_{x\sim \PP}[\varrho(x)]$, $\EE_{\QQ}(\varrho) = \EE_{x\sim \QQ}[\varrho(x)]$, $\EE_{\PP_n}(\varrho) = \EE_{x\sim \PP_n}[\varrho(x)]$, and $\EE_{\QQ_n}(\varrho) = \EE_{x\sim \QQ_n}[\varrho(x)]$ for notational convenience. 

By the definition of $t^\natural$ in \eqref{eq:gen-form}, we have
\$
\EE_{\PP_n} [f^\dagger(t^\natural)  ] -\EE_{\QQ_n} (t^\natural) \leq \EE_{\PP_n} [ f^\dagger(t^*) ] - \EE_{\QQ_n}(t^*).
\$
Note that the functional $G(t) = \EE_{\PP_n} [f^\dagger(t)] - \EE_{\QQ_n}(t)$ is convex in $t$ since $f^\dagger$ is convex, we then have
\$
G (\frac{t^\natural + t^*}{2} ) - G(t^*)\leq \frac{G(t^\natural) - G(t^*)}{2}\leq 0.
\$
By re-arranging terms, we have
\#\label{eq:basic11}
& \{\EE_{\PP_n} [f^\dagger ((t^\natural + t^*)/2 ) - f^\dagger(t^*)  ] - \EE_{\PP} [f^\dagger ((t^\natural + t^*)/2 ) - f^\dagger(t^*)  ] \}  -  \{\EE_{\QQ_n} [(t^\natural - t^*)/2 ] - \EE_{\QQ} [(t^\natural - t^*)/2 ]  \} \notag\\
&\qquad \leq    \EE_{\QQ} [(t^\natural - t^*)/2 ] -   \EE_{\PP} [f^\dagger ((t^\natural + t^*)/2 ) - f^\dagger(t^*)  ]. 
\#
We denote by
\#\label{eq:df11}
B_f(t^*, t) =   \EE_\PP [f^\dagger(t) - f^\dagger(t^*)  ]  - \EE_\QQ (t-t^*). 
\#
then the RHS of \eqref{eq:basic11} is exactly $-B_f(t^*, (t^\natural + t^*)/2)$. We proceed to establish the lower bound of $B_f(t^*, t)$ using $L_2(\PP)$ norm. From $t^*(x; p, q) = f'(q(x)/p(x))$ and $(f^\dagger)'\circ (f')(x) = x$, we know that $q/p= {\partial f^\dagger}(t^*)/{\partial t}$. Then by substituting the second term on the RHS of \eqref{eq:df11} using the above relationship, we have
\$
B_f(t^*, t)& = \EE_\PP  \biggl[  f^\dagger(t) - f^\dagger(t^*) - \frac{\partial f^\dagger}{\partial t}(t^*)\cdot (t-t^*)   \biggr]
\$
Note that by Assumption \ref{assumption:reg} and Lemma \ref{lemma:strong_convex}, we know that the Fenchel duality $f^\dagger$ is strongly convex with parameter $1/L_0$. This gives that
\$
f^\dagger (t(x)) - f^\dagger (t^*(x) ) - \frac{\partial f^\dagger}{\partial t} (t^*(x) )\cdot  [t(x)-t^*(x) ] \geq  {1}/{L_0}\cdot   (t(x) - t^*(x) )^ 2
\$
for any $x$. Consequently, it holds that
\#\label{eq:bound-on-a12}
B_f(t^*, t) \geq 1/L_0\cdot \|t-t^*\|_{L_2(\PP)}^2. 
\#
By \eqref{eq:bound-on-a12}, we conclude that
\$
{1}/(4L_0)\cdot \|t^\natural-t^*\|_{L_2(\PP)}^2  & \leq  \{\EE_{\QQ_n} [(\hat t - t^*)/2 ] - \EE_{\QQ} [(\hat t - t^*)/2 ]  \} \\
&\qquad-  \{\EE_{\PP_n} [f^\dagger ((\hat t + t^*)/2 ) - f^\dagger(t^*)  ] - \EE_{\PP} [f^\dagger ((\hat t + t^*)/2 ) - f^\dagger(t^*)  ] \}. 
\$
This concludes the proof of the lemma.

\subsection{Proof of Lemma \ref{lemma:fake_vdg2}}\label{subsection:proof_fake_vdg2}
For any real-valued function $\varrho$, we write $\EE_{\PP}(\varrho) = \EE_{x\sim \PP}[\varrho(x)]$, $\EE_{\QQ}(\varrho) = \EE_{x\sim \QQ}[\varrho(x)]$, $\EE_{\PP_n}(\varrho) = \EE_{x\sim \PP_n}[\varrho(x)]$, and $\EE_{\QQ_n}(\varrho) = \EE_{x\sim \QQ_n}[\varrho(x)]$ for notational convenience. 

We first introduce the following concepts.  
For any $K > 0$, the Bernstein difference $\rho_{K, \PP}^2(t)$ of $t(\cdot)$ with respect to the distribution $\PP$ is defined to be
\$
\rho_{K, \PP}^2(t) = 2K^2\cdot \EE_{\PP}  [ \exp (|t|/K ) - 1 - |t|/K  ].
\$
Correspondingly, we denote by $\cH_{K, B}$ the generalized entropy with bracketing induced by the Bernstein difference $\rho_{K,\PP}$. We denote by $H_{s,B}$ the entropy with bracketing induced by $L_s$ norm, $H_{s}$ the entropy induced by $L_s$ norm, $H_{L_s(\PP),B}$ the entropy with bracketing induced by $L_s(\PP)$ norm, and $H_{L_s(\PP)}$ the regular entropy induced by $L_s(\PP)$ norm.

We consider the space
\$
\Psi = \psi(\Phi) =  \{\psi(t): t(x)\in \Phi \}. 
\$
For any $\delta > 0$, we denote the following space
\$
&\Psi(\delta) =  \{\psi(t)\in\Psi:  \|\psi(t)-\psi(t^*) \|_{L_2(\PP)}\leq \delta \},\\
&\Psi'(\delta) =  \{\Delta \psi(t) = \psi(t)-\psi(t^*): \psi(t)\in \Psi(\delta) \}.
\$
Note that $\sup_{\Delta \psi(t)\in\Psi'(\delta)}\| \Delta \psi(t)\|_\infty\leq 2M_0$ and $\sup_{ \Delta \psi(t)\in\Psi'(\delta)}\|\Delta \psi(t)\|_\infty\leq \delta$, by Lemma \ref{lemma:bound_rho} we have
\$ 
\sup_{\Delta \psi(t)\in \Psi'(\delta)} \rho_{8M_0, \PP} [\Delta \psi(t) ]\leq \sqrt{2}\delta.
\$
To invoke Theorem \ref{thm:vandegeer} for $\cG = \Psi'(\delta)$, we pick $K = 8M_0$. By the fact that $\sup_{\Delta \psi(t)\in\Psi'(\delta)}\| \Delta \psi(t)\|_\infty\leq 2M_0$, Lemma \ref{lemma:cover_number}, Assumption \ref{assum:phi-cn}, and the fact that $\psi$ is Lipschitz continuous, we have
\$
\cH_{8M_0, B} (u, \Psi'(\delta), \PP )\leq H_{2, B} (\sqrt{2} u, \Psi'(\delta), \PP )\leq u^{-\gamma_\Phi}
\$
for any $u>0$. Then, by algebra, we have the follows
\$
\int_0^R\cH_{8M_0, B}^{1/2} (u, \Psi'(\delta), \PP  )\ud u\leq  \frac{2}{2-\gamma_\Phi} R^{-\gamma_\Phi / 2 + 1} .
\$
We take $C = 1$, and $a, C_1$ and $C_0$ in Theorem \ref{thm:vandegeer} to be
\$
&a = C_1\sqrt{n} R^2 / K,\qquad C_0 = 2C^2C_2 \vee 2C,\qquad C_1 = C_0 C_2, 
\$
where $C_2$ is a sufficiently large constant. 
Then it is straightforward to check that our choice above satisfies the conditions in Theorem \ref{thm:vandegeer} for any $\delta$ such that $\delta \geq n^{-1/(\gamma_\Phi + 2)}$, when $n$ is sufficiently large. With $\delta_n = n^{-1/(\gamma_\Phi + 2)}$, we have
\$
& \PP \biggl\{\sup_{t\colon \psi(t)\in \Psi, \psi(t)\notin \Psi(\delta_n)}  \frac{ |\EE_{\PP_n} [\psi(t) - \psi(t^*) ]   - \EE_{\PP} [\psi(t) -  \psi(t^*) ]     |}{  n^{-2/(\gamma_\Phi + 2)} } \geq C_1/K  \biggr\}\notag\\
&\qquad \leq  \PP \biggl\{\sup_{t\colon \psi(t)\in \Psi, \psi(t)\notin \Psi(\delta_n)}  \frac{ |\EE_{\PP_n} [\psi(t) - \psi(t^*) ]   - \EE_{\PP} [\psi(t) -  \psi(t^*) ]     |}{ \|\psi(t)-\psi(t^*) \|_{L_2(\PP)}^2 } \geq C_1/K  \biggr\}\notag\\
& \qquad \leq \sum_{s = 0}^S  \PP \biggl\{\sup_{t\colon \psi(t)\in \Psi, \psi(t)\in \Psi(2^{s+1}\delta_n)}  { |\EE_{\PP_n} [\psi(t) - \psi(t^*) ]   - \EE_{\PP} [\psi(t) -  \psi(t^*) ]     |}  \geq C_1/K\cdot  (2^{s} \delta_n)^2  \biggr\}\notag\\
& \qquad \leq \sum_{s = 0}^S C \exp\biggl(  -\frac{C_1^2/K^2\cdot 2^{2s}\cdot n^{\gamma_\Phi / (2 + \gamma_\Phi)}}{C^2(C_1 + 1)}\biggr) \leq  c_1 \exp(-n^{\gamma_\Phi / (2 + \gamma_\Phi)} / c_1^2),
\$
for some constant $c_1 > 0$.  Here in the last line, we invoke Theorem \ref{thm:vandegeer} with $R = 2^s \delta_n$.   Therefore, we have 
\$
& \PP \biggl\{\sup_{t\colon \psi(t)\in \Psi}  \frac{ |\EE_{\PP_n} [\psi(t) - \psi(t^*) ]   - \EE_{\PP} [\psi(t) -  \psi(t^*) ]     |}{  n^{-2/(\gamma_\Phi + 2)} } \geq C_1/K  \biggr\} \leq c_1 \exp(-n^{\gamma_\Phi / (2 + \gamma_\Phi)} / c_1^2). 
\$
We conclude the proof of Lemma \ref{lemma:fake_vdg2}.

\subsection{Proof of Lemma \ref{lemma:for_all_hold_sparsity}}\label{subsection:proof_for_all_hold_sparsity}
Recall that the covering number of $\cQ$ is $N_2(\delta, \cQ)$, we thus assume that there exists $q_{1}, \ldots, q_{N_2(\delta, \cQ)}\in\cQ$ such that for any $q\in \cQ$, there exists some $q_{k}$, where $1\leq k\leq N_2(\delta, \cQ)$, so that $\|q-q_{k}\|_2\leq \delta$. Moreover, by taking $\delta = \delta_n =  n^{-{1}/{(\gamma_\Phi+2)}}$ and union bound, we have
\$
& \PP [  \sup_{q\in\cQ}  | D_f(q\|p) -  D_f^\natural(q\|p)   |  \geq c_1\cdot   n^{-{1}/{(\gamma_\Phi+2)}} \cdot \log n  ] \\
& \qquad \leq \sum_{k = 1}^{N_2(\delta_n, \cQ)} \PP [   | D_f(q_k\|p) -  D_f^\natural(q_k\|p)   |  \geq c_1\cdot   n^{-{1}/{(\gamma_\Phi+2)}} \cdot \log n  ]\\
&\qquad \leq N_2(\delta_n, \cQ)\cdot \exp(-n^{{\gamma_\Phi}/{(\gamma_\Phi + 2)}}\cdot \log n),
\$
where the last line comes from Theorem \ref{thm:gen}.  Combining Assumption \ref{assum:cov_num1}, when $n$ is sufficiently large, it holds that 
\$
\PP [  \sup_{q\in\cQ}  | D_f(q\|p) - D_f^\natural(q\|p)   |  \geq c_1\cdot  n^{-{1}/{(\gamma_\Phi+2)}} \cdot \log n  ]  \leq 1/n,
\$
which concludes the proof of the lemma.

\subsection{Proof of Lemma \ref{lemma:for_all_hold_sparsity2}}\label{subsection:proof_for_all_hold_sparsity2}
Recall that the covering number of $\cQ$ is $N_2(\delta, \cQ)$, we thus assume that there exists $q_{1}, \ldots, q_{N_2(\delta, \cQ)}\in\cQ$ such that for any $q\in \cQ$, there exists some $q_{k}$, where $1\leq k\leq N_2(\delta, \cQ)$, so that $\|q-q_{k}\|_2\leq \delta$. Moreover, by taking $\delta = \delta_n =  n^{-{\beta}/{(d+2\beta)}}$ and union bound, we have
\$
& \PP [  \sup_{q\in\cQ}  | D_f(q\|p) - \hat D_f(q\|p)   |  \geq c_1\cdot   n^{-{\beta}/{(d+2\beta)}} \cdot \log^7 n  ] \\
& \qquad \leq \sum_{k = 1}^{N_2(\delta_n, \cQ)} \PP [   | D_f(q_k\|p) - \hat D_f(q_k\|p)   |  \geq c_1\cdot   n^{-{\beta}/{(d+2\beta)}} \cdot \log^7 n  ]\\
&\qquad \leq N_2(\delta_n, \cQ)\cdot \exp(-n^{-{d}/{(d+2\beta)}}\cdot \log n),
\$
where the last line comes from Theorem \ref{thm:main}.  Combining Assumption \ref{assum:cov_num}, when $n$ is sufficiently large, it holds that 
\$
\PP [  \sup_{q\in\cQ}  | D_f(q\|p) - \hat D_f(q\|p)   |  \geq c_1\cdot  n^{-{\beta}/{(d+2\beta)}} \cdot \log^7 n  ]  \leq 1/n,
\$
which concludes the proof of the lemma.

\subsection{Proof of Lemma \ref{lemma:ineq2}}\label{subsection:proof_ineq2}
For any real-valued function $\varrho$, we write $\EE_{\PP}(\varrho) = \EE_{x\sim \PP}[\varrho(x)]$, $\EE_{\QQ}(\varrho) = \EE_{x\sim \QQ}[\varrho(x)]$, $\EE_{\PP_n}(\varrho) = \EE_{x\sim \PP_n}[\varrho(x)]$, and $\EE_{\QQ_n}(\varrho) = \EE_{x\sim \QQ_n}[\varrho(x)]$ for notational convenience. 

By the definition of $\hat t$ in \eqref{problem_formulation}, we have
\$
\EE_{\PP_n} [f^\dagger(\hat t)  ] -\EE_{\QQ_n} (\hat t) \leq \EE_{\PP_n} [ f^\dagger(\tilde t) ] - \EE_{\QQ_n}(\tilde t).
\$
Note that the functional $G(t) = \EE_{\PP_n} [f^\dagger(t)] - \EE_{\QQ_n}(t)$ is convex in $t$ since $f^\dagger$ is convex, we then have
\$
G (\frac{\hat t + \tilde t}{2} ) - G(\tilde t)\leq \frac{G(\hat t) - G(\tilde t)}{2}\leq 0.
\$
By re-arranging terms, we have
\#\label{eq:basic1}
& \{\EE_{\PP_n} [f^\dagger ((\hat t + \tilde t)/2 ) - f^\dagger(\tilde t)  ] - \EE_{\PP} [f^\dagger ((\hat t + \tilde t)/2 ) - f^\dagger(\tilde t)  ] \}  -  \{\EE_{\QQ_n} [(\hat t - \tilde t)/2 ] - \EE_{\QQ} [(\hat t - \tilde t)/2 ]  \} \notag\\
&\qquad \leq    \EE_{\QQ} [(\hat t - \tilde t)/2 ] -   \EE_{\PP} [f^\dagger ((\hat t + \tilde t)/2 ) - f^\dagger(\tilde t)  ]. 
\#
We denote by
\#\label{eq:df1}
B_f(\tilde t, t) =   \EE_\PP [f^\dagger(t) - f^\dagger(\tilde t)  ]  - \EE_\QQ (t-\tilde t). 
\#
then the RHS of \eqref{eq:basic1} is exactly $-B_f(\tilde t, (\hat t + \tilde t)/2)$. We proceed to establish the lower bound of $B_f(\tilde t, t)$ using $L_2(\PP)$ norm. From $t^*(x; p, q) = f'(q(x)/p(x))$ and $(f^\dagger)'\circ (f')(x) = x$, we know that $q/p= {\partial f^\dagger}(t^*)/{\partial t}$. Then by substituting the second term on the RHS of \eqref{eq:df1} using the above relationship, we have
\#\label{eq:df2}
B_f(\tilde t, t)& = \EE_\PP  \biggl[  f^\dagger(t) - f^\dagger(\tilde t) - \frac{\partial f^\dagger}{\partial t}(t^*)\cdot (t-\tilde t)   \biggr]\notag\\
& = \EE_\PP  \biggl[  f^\dagger(t) - f^\dagger(\tilde t) - \frac{\partial f^\dagger}{\partial t}(\tilde t)\cdot (t-\tilde t)   \biggr]  + \EE_\PP \biggl\{  \biggl[\frac{\partial f^\dagger}{\partial t}(\tilde t) - \frac{\partial f^\dagger}{\partial t}( t^*)  \biggr]\cdot (t-\tilde t) \biggr\} \notag\\
& = A_1 + A_2.
\#
We lower bound $A_1$ and $A_2$ in the sequel. 

\vskip5pt
\noindent \textbf{Bound on $A_1$. }
Note that by Assumption \ref{assumption:reg} and Lemma \ref{lemma:strong_convex}, we know that the Fenchel duality $f^\dagger$ is strongly convex with parameter $1/L_0$. This gives that
\$
f^\dagger (t(x)) - f^\dagger (\tilde t(x) ) - \frac{\partial f^\dagger}{\partial t} (\tilde t(x) )\cdot  [t(x)-\tilde t(x) ] \geq  {1}/{L_0}\cdot   (t(x) - \tilde t(x) )^ 2
\$
for any $x$. Consequently, it holds that
\#\label{eq:bound-on-a1}
A_1\geq 1/L_0\cdot \|t-\tilde t\|_{L_2(\PP)}^2. 
\#

\vskip5pt
\noindent \textbf{Bound on $A_2$. }
By Cauchy-Schwarz inequality, it holds that
\$
A_2\geq -\sqrt{\EE_\PP \biggl\{  \biggl[\frac{\partial f^\dagger}{\partial t}(\tilde t) - \frac{\partial f^\dagger}{\partial t}( t^*)\biggr  ]^2 \biggr\}} \cdot \sqrt{\EE_\PP [ (t -\tilde t)^2 ]}.
\$
Again, by Assumption \ref{assumption:reg} and Lemma \ref{lemma:strong_convex}, we know that the Fenchel duality $f^\dagger$ has $1/\mu_0$-Lipschitz gradient, which gives that
\$
\biggl |\frac{\partial f^\dagger}{\partial t} (\tilde t(x) ) - \frac{\partial f^\dagger}{\partial t} ( t^*(x) )  \biggr| \leq 1/\mu_0 \cdot  |\tilde t(x)-t^*(x) |
\$
for any $x$. By this, the term $A_2$ is lower bounded:
\#\label{eq:bound-on-a2}
A_2\geq -1/\mu_0\cdot \|\tilde t - t^*\|_{L_2(\PP)}\cdot \|t - \tilde t\|_{L_2(\PP)}. 
\#

Plugging \eqref{eq:bound-on-a1} and \eqref{eq:bound-on-a2} into \eqref{eq:df2}, we have
\$
B_f(\tilde t, t) \geq 1/L_0\cdot \|t-\tilde t\|_{L_2(\PP)}^2 -1/\mu_0\cdot \|\tilde t - t^*\|_{L_2(\PP)}\cdot \|t - \tilde t\|_{L_2(\PP)}. 
\$
By this, together with \eqref{eq:basic1}, we conclude that
\$
{1}/(4L_0)\cdot \|\hat t-\tilde t\|_{L_2(\PP)}^2  & \leq  {1}/{\mu_0}\cdot  \|\hat t - \tilde t\|_{L_2(\PP)}\cdot \|\tilde t - t^*\|_{L_2(\PP)} +   \{\EE_{\QQ_n} [(\hat t - \tilde t)/2 ] - \EE_{\QQ} [(\hat t - \tilde t)/2 ]  \} \\
&\qquad-  \{\EE_{\PP_n} [f^\dagger ((\hat t + \tilde t)/2 ) - f^\dagger(\tilde t)  ] - \EE_{\PP} [f^\dagger ((\hat t + \tilde t)/2 ) - f^\dagger(\tilde t)  ] \}. 
\$
This concludes the proof of the lemma.

\subsection{Proof of Lemma \ref{lemma:fake_vdg}}\label{subsection:proof_fake_vdg}
For any real-valued function $\varrho$, we write $\EE_{\PP}(\varrho) = \EE_{x\sim \PP}[\varrho(x)]$, $\EE_{\QQ}(\varrho) = \EE_{x\sim \QQ}[\varrho(x)]$, $\EE_{\PP_n}(\varrho) = \EE_{x\sim \PP_n}[\varrho(x)]$, and $\EE_{\QQ_n}(\varrho) = \EE_{x\sim \QQ_n}[\varrho(x)]$ for notational convenience. 

We first introduce the following concepts.  
For any $K > 0$, the Bernstein difference $\rho_{K, \PP}^2(t)$ of $t(\cdot)$ with respect to the distribution $\PP$ is defined to be
\$
\rho_{K, \PP}^2(t) = 2K^2\cdot \EE_{\PP}  [ \exp (|t|/K ) - 1 - |t|/K  ].
\$
Correspondingly, we denote by $\cH_{K, B}$ the generalized entropy with bracketing induced by the Bernstein difference $\rho_{K,\PP}$. We denote by $H_{s,B}$ the entropy with bracketing induced by $L_s$ norm, $H_{s}$ the entropy induced by $L_s$ norm, $H_{L_s(\PP),B}$ the entropy with bracketing induced by $L_s(\PP)$ norm, and $H_{L_s(\PP)}$ the regular entropy induced by $L_s(\PP)$ norm.

Since we focus on fixed $L$, $k$, and $s$, we denote by $\Phi_M = \Phi_M(L,k,s)$ for notational convenience.  
We consider the space
\$
\Psi_M = \psi(\Phi_M) =  \{\psi(t): t(x)\in \Phi_M \}. 
\$
For any $\delta > 0$, we denote the following space
\$
&\Psi_M(\delta) =  \{\psi(t)\in\Psi_M:  \|\psi(t)-\psi(\tilde t) \|_{L_2(\PP)}\leq \delta \},\\
&\Psi'_M(\delta) =  \{\Delta \psi(t) = \psi(t)-\psi(\tilde t): \psi(t)\in \Psi_M(\delta) \}.
\$
Note that $\sup_{\Delta \psi(t)\in\Psi'_M(\delta)}\| \Delta \psi(t)\|_\infty\leq 2M_0$ and $\sup_{ \Delta \psi(t)\in\Psi'_M(\delta)}\|\Delta \psi(t)\|_\infty\leq \delta$, by Lemma \ref{lemma:bound_rho} we have
\$ 
\sup_{\Delta \psi(t)\in \Psi'_M(\delta)} \rho_{8M_0, \PP} [\Delta \psi(t) ]\leq \sqrt{2}\delta.
\$
To invoke Theorem \ref{thm:vandegeer} for $\cG = \Psi'_M(\delta)$, we pick $K = 8M_0$ and $R = \sqrt{2}\delta$. Note that from the fact that $\sup_{\Delta \psi(t)\in\Psi'_M(\delta)}\| \Delta \psi(t)\|_\infty\leq 2M_0$, by Lemma \ref{lemma:cover_number}, Lemma \ref{lemma:cover_number_nn}, and the fact that $\psi$ is Lipschitz continuous, we have
\$
\cH_{8M_0, B} (u, \Psi'_M(\delta), \PP )\leq H_{\infty} (u/(2\sqrt{2}), \Psi'_M(\delta) )\leq 2(s+1)\log (4\sqrt{2}u^{-1}(L+1)V^2 )
\$
for any $u>0$. Then, by algebra, we have the follows
\$
\int_0^R\cH_{8M_0, B}^{1/2} (u, \Psi'_M(\delta), \PP  )\ud u\leq 3  s^{1/2} \delta \cdot \log({8 V^2L}/{\delta}).
\$
For any $0<\varepsilon<1$, we take $C = 1$, and $a, C_1$ and $C_0$ in Theorem \ref{thm:vandegeer} to be
\$
&a = 8M_0\log (\exp(\gamma^2)/\varepsilon )\gamma^{-1}\cdot \delta,\notag\\
&C_0 = 6M_0\gamma^{-1}\sqrt{\log (\exp(\gamma^2)/\varepsilon )},\notag\\
&C_1 = 33 M_0^2 \gamma^{-2}\log (\exp(\gamma^2)/\varepsilon ).
\$
Here $\gamma = s^{1/2}\log(V^2L)$. Then it is straightforward to check that our choice above satisfies the conditions in Theorem \ref{thm:vandegeer} for any $\delta$ such that $\delta \geq \gamma n^{-1/2}$, when $n$ is sufficiently large such that $n\gtrsim [\gamma + \gamma^{-1}\log(1/\varepsilon)]^2$. Consequently, by Theorem \ref{thm:vandegeer}, for $\delta \geq \gamma n^{-1/2}$, we have
\$ 
&\PP \{\sup_{t(x)\in\Phi_M(\delta)} |\EE_{\PP_n} [\psi(t) - \psi(\tilde t) ]   - \EE_{\PP} [\psi(t) -  \psi(\tilde t) ]  |\geq 8M_0 \log(\exp(\gamma^2)/\varepsilon)\gamma^{-1} \cdot \delta\cdot n^{-1/2}  \}\notag\\
&\qquad = \PP \{\sup_{\Delta \psi(t)\in\Psi_M'(\delta)} |\EE_{\PP_n} [\Delta\psi(t) ]   - \EE_{\PP} [\Delta\psi(t) ] |\geq 8M_0 \log(\exp(\gamma^2)/\varepsilon) \gamma^{-1}\cdot \delta\cdot n^{-1/2}  \} \notag\\
&\qquad\leq \varepsilon\cdot\exp(-\gamma^2).
\$
By taking $\delta = \delta_n =  \gamma n^{-1/2}$, we have
\#\label{eq:ret_p1}
\PP \biggl\{\sup_{t(x)\in\Phi_M(\delta)}\frac{ |\EE_{\PP_n} [\psi(t) - \psi(\tilde t) ]   - \EE_{\PP} [\psi(t) -  \psi(\tilde t) ]  |}{n^{-1} [ \gamma^2 + \log(1/\varepsilon)]}\leq 8M_0   \biggr\}\geq 1-\varepsilon\cdot\exp(-\gamma^2). 
\#
On the other hand, we denote that $S = \min\{s>1: 2^{-s}(2M_0) < \delta_n\} = \cO(\log(\gamma^{-1} n^{1/2}))$. For notational convenience, we denote the set 
\#\label{eq:A_s}
A_s =  \{\psi(t)\in\Psi_M: \psi(t)\in \Psi_M(2^{-s+2}M_0), \psi(t)\notin \Psi_M(2^{-s+1}M_0) \}. 
\#  
Then by the peeling device, we have the following
\$
&\PP \biggl\{\sup_{\psi(t)\in \Psi_M, \psi(t)\notin \Psi_M(\delta_n)}  \frac{ |\EE_{\PP_n} [\psi(t) - \psi(\tilde t) ]   - \EE_{\PP} [\psi(t) -  \psi(\tilde t) ]     |}{ \|\psi(t)-\psi(\tilde t) \|_{L_2(\PP)}\cdot T(n,\gamma,\varepsilon)} \geq 16M_0  \biggr\}\\
& \qquad \leq \sum_{s = 1}^S\PP \biggl\{\sup_{\psi(t)\in A_s}  \frac{ |\EE_{\PP_n} [\psi(t) - \psi(\tilde t) ]   - \EE_{\PP} [\psi(t) -  \psi(\tilde t) ]  |}{ 2^{-s+1}M_0}  \geq 16M_0 \cdot  T(n,\gamma, \varepsilon) \biggr \}\\
& \qquad \leq \sum_{s = 1}^S\PP \{\sup_{\psi(t)\in A_s}   |\EE_{\PP_n} [\psi(t) - \psi(\tilde t) ]   - \EE_{\PP} [\psi(t) -  \psi(\tilde t) ]  | \geq 8 M_0   \cdot(2^{-s+2}M_0) \cdot T(n,\gamma, \varepsilon)  \}\\
& \qquad \leq \sum_{s = 1}^S\PP \{\sup_{\psi(t)\in \Psi_M(2^{-s+2}M_0)}   |\EE_{\PP_n} [\psi(t) - \psi(\tilde t) ]   - \EE_{\PP} [\psi(t) -  \psi(\tilde t) ]  |\geq 8 M_0    \cdot (2^{-s+2}M_0) \cdot T(n,\gamma, \varepsilon)  \}\\
& \qquad \leq S\cdot \varepsilon\cdot\exp(-\gamma^2)/\log(\gamma^{-1} n^{1/2}) = c\cdot\varepsilon\cdot\exp(-\gamma^2),
\$
where $c$ is a positive absolute constant, and for notational convenience we denote by $T(n,\gamma, \varepsilon) = \gamma^{-1}\cdot  n^{-1/2} \log (\log(\gamma^{-1} n^{1/2})\exp(\gamma^2)/\varepsilon)$. Here in the second line, we use the fact that for any $\psi(t)\in A_s$, we have $\|\psi(t) - \psi(\tilde t)\|_{L_2(\QQ)}\geq 2^{-s+1}M_0$ by the definition of $A_s$ in \eqref{eq:A_s}; in the forth line, we use the argument that since $A_s \subseteq \Psi_M(2^{-s+2}M_0)$, the probability of supremum taken over $\Psi_M(2^{-s+2}M_0)$ is larger than the one over $A_s$; in the last line we invoke Theorem \ref{thm:vandegeer}. Consequently, this gives us
\#\label{eq:ret_p2}
&\PP \biggl\{\sup_{\substack{\psi(t)\in \Psi_M\\ \psi(t)\notin \Psi_M(\delta_n)}}  \frac{ |\EE_{\PP_n} [\psi(t) - \psi(\tilde t) ]   - \EE_{\PP} [\psi(t) -  \psi(\tilde t) ]  |}{ \|\psi(t)-\psi(\tilde t) \|_{L_2(\PP)}\cdot n^{-1/2} [ \gamma\log n + \gamma^{-1}\log(1/\varepsilon)    ]} \leq 16M_0  \biggr\} \geq 1- \varepsilon\cdot\exp(-\gamma^2).
\#
Combining \eqref{eq:ret_p1} and \eqref{eq:ret_p2}, we finish the proof of the lemma.

\subsection{Proof of Lemma \ref{lemma:tuo_zhao_thm1}}\label{section:proof_thm_tuozhao}

The proof of the theorem utilizes following two lemmas. The first lemma characterizes the Lipschitz property of $\varphi(x; W, v)$ in the input $x$. 
\begin{lemma}\label{lemma:lip_x}
Given $W$ and $v$, then for any $\varphi(\cdot; W, v)\in \Phi_{\text{norm}}$ and $x_1, x_2\in \RR^{d}$, we have
\$
 \|\varphi(x_1; W, v) - \varphi(x_2; W, v) \|_2\leq \|x_1 - x_2\|_2\cdot \prod_{j = 1}^{L+1}B_j. 
\$
\end{lemma}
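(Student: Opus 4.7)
The plan is to prove the Lipschitz bound by induction on layer depth, exploiting the fact that both the affine maps $W_j$ and the shifted ReLU activations $\sigma_{v_j}$ are Lipschitz in the $\ell_2$ norm, with constants $B_j$ and $1$ respectively. Unraveling the definition in Definition \ref{def:nn_set}, the network $\varphi(\cdot; W, v)$ is the composition
$W_{L+1} \circ \sigma_{v_L} \circ W_L \circ \sigma_{v_{L-1}} \circ \cdots \circ W_2 \circ \sigma_{v_1} \circ W_1$,
so it suffices to control the Lipschitz constant of each factor and multiply.

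First I would verify the two elementary Lipschitz facts. For the affine factor, $\|W_j x_1 - W_j x_2\|_2 = \|W_j (x_1 - x_2)\|_2 \leq \|W_j\|_2 \cdot \|x_1 - x_2\|_2 \leq B_j \|x_1 - x_2\|_2$ using the spectral-norm bound built into the definition of $\Phi_{\text{norm}}$ in \eqref{eq:nn_norm}. For the ReLU shift $\sigma_v(x) = \max\{0, x - v\}$ applied componentwise, the scalar map $u \mapsto \max\{0, u - v^{(i)}\}$ is $1$-Lipschitz, so $|\sigma_v(x_1)^{(i)} - \sigma_v(x_2)^{(i)}| \leq |x_1^{(i)} - x_2^{(i)}|$ for each coordinate $i$, and summing the squares yields $\|\sigma_v(x_1) - \sigma_v(x_2)\|_2 \leq \|x_1 - x_2\|_2$.

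Next I would set up the induction. Let $h_0(x) = x$ and for $j \geq 1$ define $h_j(x) = \sigma_{v_j}(W_j h_{j-1}(x))$ for $j \leq L$ and $h_{L+1}(x) = W_{L+1} h_L(x) = \varphi(x; W, v)$. The inductive hypothesis is $\|h_j(x_1) - h_j(x_2)\|_2 \leq \|x_1 - x_2\|_2 \cdot \prod_{i=1}^{j} B_i$ for $j \leq L$. The base case $j = 0$ is trivial. For the inductive step, combining the two Lipschitz facts above gives
$\|h_j(x_1) - h_j(x_2)\|_2 \leq \|W_j h_{j-1}(x_1) - W_j h_{j-1}(x_2)\|_2 \leq B_j \|h_{j-1}(x_1) - h_{j-1}(x_2)\|_2$,
and plugging in the hypothesis closes the induction. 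Applying $W_{L+1}$ once more contributes the final factor $B_{L+1}$, yielding the claimed bound.

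There is no real obstacle here; the argument is essentially book-keeping, with the only mild subtlety being the componentwise Lipschitz property of the shifted ReLU, which is immediate since both $\max\{0, \cdot\}$ and the shift by $v^{(i)}$ are $1$-Lipschitz on $\RR$ and $\ell_2$ distance is preserved under componentwise $1$-Lipschitz maps.
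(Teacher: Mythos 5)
Your proof is correct and takes essentially the same approach as the paper's: both peel the composition one layer at a time, bounding each affine map by the spectral norm $B_j$ and each shifted ReLU by its $1$-Lipschitz constant, then multiplying. Your version is slightly more carefully written (the paper's display contains a typo that writes $\sigma_{v_L}(a-b)$ where it means $\sigma_{v_L}(a)-\sigma_{v_L}(b)$, though the intended inequality is the same), but the underlying argument is identical.
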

We defer the proof to Section \S \ref{subsection:proof_lip_x}.

The following lemma characterizes the Lipschitz property of $\varphi(x; W, v)$ in the network parameter pair $(W,v)$. 
\begin{lemma}\label{lemma:lip_w}
Given any bounded $x\in\RR^{d}$ such that $\|x\|_2\leq B$, then for any weights $W^1 = \{W_j^1\}_{j = 1}^{L+1}, W^2 = \{W_j^2\}_{j = 1}^{L+1}, v^1 = \{v^1_j\}_{j = 1}^L, v^2 = \{v^2_j\}_{j = 1}^L$, and functions $\varphi(\cdot, W^1, v^1), \varphi(\cdot, W^2, v^2)\in\Phi_{\text{norm}}$, we have
\$
&  \|\varphi(x, W^1, v^1) - \varphi(x, W^2, v^2) \|\notag\\
 &\qquad \leq \frac{B\sqrt{2L+1}\cdot  \prod_{j = 1}^{L+1}B_j}{\min_j B_j}\cdot \sum_{j = 1}^L A_j \cdot \sqrt{\sum_{j = 1}^{L+1}\|W_j^1 - W_j^2\|_\F^2 + \sum_{j = 1}^L \|v_j^1 - v_j^2\|_2^2}. 
\$
\end{lemma}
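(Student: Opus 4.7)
The natural approach is a layerwise telescoping (hybrid) argument, bounding the perturbation introduced when we swap the parameters $(W_j^1, v_j^1)$ for $(W_j^2, v_j^2)$ one layer at a time and propagating the error through the remaining layers via the $1$-Lipschitz property of ReLU together with the spectral bounds $\|W_j\|_2 \le B_j$.

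Concretely, define the forward activations $z_0 = x$ and $z_j^i = \sigma_{v_j^i} W_j^i z_{j-1}^i$ for $i\in\{1,2\}$ and $j\in[L]$, and let $\varphi(x;W^i,v^i) = W_{L+1}^i z_L^i$. For each layer $\ell\in[L+1]$, introduce the hybrid network $\varphi^{(\ell)}$ that uses the parameters $(W^1,v^1)$ up through layer $\ell$ and $(W^2,v^2)$ thereafter, so that $\varphi^{(0)} = \varphi(x;W^2,v^2)$ and $\varphi^{(L+1)} = \varphi(x;W^1,v^1)$. Then
\[
\varphi(x;W^1,v^1) - \varphi(x;W^2,v^2) \;=\; \sum_{\ell=1}^{L+1}\bigl[\varphi^{(\ell)} - \varphi^{(\ell-1)}\bigr],
\]
and by the triangle inequality and Cauchy–Schwarz it suffices to bound each summand.

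The key sublemma I would establish is that replacing $(W_\ell^2, v_\ell^2)$ by $(W_\ell^1, v_\ell^1)$ at layer $\ell$ perturbs $z_\ell$ by at most $\|W_\ell^1 - W_\ell^2\|_{\F}\cdot\|z_{\ell-1}^1\|_2 + \|v_\ell^1 - v_\ell^2\|_2$, using $\|Ax\|_2 \le \|A\|_\F \|x\|_2$ and $1$-Lipschitzness of $\sigma_v$ in both arguments. Since $\|z_{j-1}^1\|_2 \le B\prod_{k=1}^{j-1}B_k$ (by Lemma~\ref{lemma:lip_x} applied with $x_2 = 0$, or by an inductive argument using $\|\sigma_{v_j}(y)\|_2 \le \|y\|_2 + \|v_j\|_2$ together with $\|v_j\|_2\le A_j$), this perturbation then propagates through the remaining $L+1-\ell$ layers, each contributing a factor $B_k$ to yield a multiplier of $\prod_{k\ne\ell}B_k = (\prod_{k=1}^{L+1}B_k)/B_\ell \le (\prod_{k=1}^{L+1}B_k)/\min_j B_j$ on top of the factor from $\|z_{\ell-1}^1\|_2$.

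Assembling these bounds and applying Cauchy–Schwarz to split the resulting $\sum_\ell [\|W_\ell^1 - W_\ell^2\|_\F + \|v_\ell^1 - v_\ell^2\|_2]$ into the target $\ell_2$-like norm of the parameter differences produces the factor $\sqrt{2L+1}$ (one term per $W_\ell$ and $v_\ell$, totalling $L+(L+1) = 2L+1$ terms). The bookkeeping of the $A_j$ factors enters because the contribution from $v_\ell$-perturbations must be multiplied by an upper bound on $\|z_{\ell-1}\|_2$ that aggregates the biases $A_j$. The main obstacle will be this bookkeeping: tracking exactly how $B$, the $B_j$'s and the $A_j$'s combine so that the final constant matches $B\sqrt{2L+1}(\prod B_j)(\sum A_j)/\min_j B_j$, and in particular justifying the worst-case appearance of $\sum_{j=1}^L A_j$ via the bound $\|z_j^1\|_2 \le B\prod_{k\le j}B_k + \sum_{k\le j}A_k\prod_{k<m\le j}B_m$ on intermediate activations.
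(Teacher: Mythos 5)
Your proposal follows essentially the same route as the paper's proof: a layer-by-layer telescoping (the paper's recursive triangle-inequality peeling), $1$-Lipschitzness of $\sigma_{v}$ together with the spectral bounds $\|W_j\|_2\le B_j$ to propagate each perturbation, the bound $\|W_\ell^1-W_\ell^2\|_\F\cdot\|z_{\ell-1}\|_2+\|v_\ell^1-v_\ell^2\|_2$ on the per-layer swap, the intermediate-activation bound $\|z_\ell\|_2\le B\prod_{k\le \ell}B_k+\sum_{k\le \ell}A_k\prod_{k<m\le \ell}B_m$, and a final Cauchy--Schwarz producing the $\sqrt{2L+1}$ factor. One minor caution: invoking Lemma \ref{lemma:lip_x} with $x_2=0$ does not yield $\|z_{j-1}\|_2\le B\prod_k B_k$, since $\varphi(0;W,v)\neq 0$ in general (the shifts $v_j$ survive the ReLU), but the inductive alternative you state, using $\|\sigma_{v_j}(y)\|_2\le\|y\|_2+A_j$, is exactly the paper's argument, so this does not affect the proof.
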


We defer the proof to Section \S \ref{subsection:proof_lip_w}.

We now turn to the proof of Lemma \ref{lemma:tuo_zhao_thm1}. Note that by Lemma \ref{lemma:lip_w}, we know that $\varphi(x; W, v)$ is $L_w$-Lipschitz in the parameter $(W, v)\in\RR^{b}$, where the dimension $b$ takes the form
\#\label{eq:def_dim_b}
b = \sum_{j = 1}^{L+1} k_j k_{j-1} + \sum_{j = 1}^L k_j\leq \sum_{j = 0}^{L+1}(k_j+1)^2,
\#
and the Lipschitz constant $L_w$ satisfies
\#\label{eq:def_lip_L_w}
L_w = \frac{B\sqrt{2L+1}\cdot \prod_{j = 1}^{L+1}B_j}{\min_j B_j}\cdot\sum_{j = 1}^L A_j .
\#
In addition, we know that the covering number of $\cW = \{(W,v)\in\RR^b: \sum_{j = 1}^{L+1} \|W_j\|_\F + \sum_{j =1}^L \|v_j\|_2 \leq K\}$, where 
\#\label{eq:def_K}
K = \sqrt{\sum_{j = 1}^{L+1}k_j^2B_j^2} + \sum_{j = 1}^L A_j,
\#
satisfies
\$
N(\cW, \delta) \leq  ( {3K}{\delta^{-1}} )^b.
\$ 
By the above facts, we deduce that the covering number of $\cL(\Phi_{\text{norm}})$ satisfies
\$
N [\cL(\Phi_{\text{norm}}), \delta ] \leq  ( {c_1K L_w}{\delta^{-1}} )^b,
\$
for some positive absolute constant $c_1$. Then by Dudley entropy integral bound on the ERC,  we know that
\#\label{eq:dud1}
\mathfrak{R}_n [\cL(\Phi_{\text{norm}}) ] \leq \inf_{\tau > 0} \tau + \frac{1}{\sqrt{n}}\int_\tau^{\vartheta}\sqrt{\log N [\cL(\Phi_{\text{norm}}), \delta ]} \ud\delta,
\#
where $\vartheta = \sup_{g(\cdot; W, v)\in \cL(\Phi_{\text{norm}}), x\in\RR^d} |g(x; W, v) |$. Moreover, from Lemma \ref{lemma:lip_x} and the fact that the loss function is Lipschitz continuous, we have
\#\label{eq:def_vartheta}
\vartheta \leq c_2\cdot B\cdot \prod_{j = 1}^{L+1}B_j
\#
for some positive absolute constant $c_2$. Therefore, by calculations, we derive from \eqref{eq:dud1} that
\$
\mathfrak{R}_n [\cL(\Phi_{\text{norm}}) ] = \cO \biggl( \frac{\vartheta}{\sqrt{n}}\cdot \sqrt{b\cdot \log\frac{KL_w \sqrt{n}}{\vartheta \sqrt{b}}}  \biggr),
\$
then we conclude the proof of the lemma by plugging in \eqref{eq:def_dim_b}, \eqref{eq:def_lip_L_w}, \eqref{eq:def_K}, and \eqref{eq:def_vartheta}, and using the definition of $\gamma_1$ and $\gamma_2$ in \eqref{eq:gamma_def}.

\subsection{Proof of Lemma \ref{lemma:for_all_hold_norm}}\label{subsection:proof_for_all_hold_norm}
Remember that the covering number of $\cQ$ is $N_2(\delta, \cQ)$, we assume that there exists $q_{1}, \ldots, q_{N_2(\delta, \cQ)}\in\cQ$ such that for any $q\in \cQ$, there exists some $q_{k}$, where $1\leq k\leq N_2(\delta, \cQ)$, so that $\|q-q_{k}\|_2\leq \delta$. Moreover, by taking $\delta =  \gamma_1 n^{-1/2}\log(\gamma_2 n) =  b_2(n,\gamma_1, \gamma_2)$ and $N_2 = N_2[b_2(n,\gamma_1, \gamma_2), \cQ]$, we have
\$
&\PP \{ \max_{q\in \cQ}  |D_f(q\|p) - \hat D_f(q\|p) |  \geq c\cdot  [ b_2(n,\gamma_1, \gamma_2) + \prod_{j = 1}^{L+1}B_j\cdot n^{-1/2}\cdot  \sqrt{ \log (N_2/\varepsilon)}  ]   \}\\
&\qquad \leq  \sum_{k = 1}^{N_2}\PP \{   |D_f(q\|p) - \hat D_f(q\|p) |  \geq c\cdot  [ b_2(n,\gamma_1, \gamma_2) + \prod_{j = 1}^{L+1}B_j\cdot n^{-1/2}\cdot    \sqrt{\log (N_2/\varepsilon)}  ]   \}\\
&\qquad \leq  N_2 \cdot \varepsilon/N_2 =\varepsilon,
\$
where the second line comes from union bound, and the last line comes from Theorem \ref{theorem:norm_control_theorem}. By this, we conclude the proof of the lemma.

\subsection{Proof of Lemma \ref{lemma:lip_x}}\label{subsection:proof_lip_x}
The proof follows by applying the Lipschitz property and bounded spectral norm of $W_j$ recursively:
\$
& \|\varphi(x_1; W, v) - \varphi(x_2; W, v) \|_2 =  \|W_{L+1}  (\sigma_{v_L}\cdots W_2\sigma_{v_1}W_1x_1 - \sigma_{v_L}\cdots W_2\sigma_{v_1}W_1x_2 ) \|_2\\
&\qquad\leq \|W_{L+1}\|_2\cdot  \| \sigma_{v_L} (W_L\cdots W_2\sigma_{v_1}W_1x_1 - W_L\cdots W_2\sigma_{v_1}W_1x_2  ) \|_2\\
&\qquad \leq B_{L+1}\cdot  \| W_L\cdots W_2\sigma_{v_1}W_1x_1 - W_L\cdots W_2\sigma_{v_1}W_1x_2  \|_2\\
&\qquad \leq\cdots\leq \prod_{j = 1}^{L+1}B_j\cdot \|x_1-x_2\|_2. 
\$
Here in the third line we uses the fact that $\|W_j\|_2\leq B_j$ and the $1$-Lipschitz property of $\sigma_{v_j}(\cdot)$, and in the last line we recursively apply the same argument as in the above lines. This concludes the proof of the lemma.

\subsection{Proof of Lemma \ref{lemma:lip_w}}\label{subsection:proof_lip_w}
Recall that $\varphi(x;W,v)$ takes the form 
\$
\varphi(x;W,v) = W_{L+1} \sigma_{v_L}W_L\cdots \sigma_{v_1}W_1x.
\$
For notational convenience, we denote by $\varphi_j^i(x) = \sigma_{v_j^i}(W_j^i x)$ for $i = 1,2$. By this, $\varphi(x;W,v)$ has the form $\varphi(x;W^i,v^i) = W_{L+1}^i\varphi^i_L\circ\cdots\circ\varphi^i_1(x)$. First, note that for any $W^1, W^2, v^1$ and $v^2$, by triangular inequality, we have
\#\label{eq:lip_w1}
&  \|\varphi(x,W^1, v^1) - \varphi(x,W^2, v^2) \|_2 =  \|W_{L+1}^1\varphi^1_L\circ\cdots\circ\varphi^1_1(x) - W_{L+1}^2\varphi^2_L\circ\cdots\circ\varphi^2_1(x) \|_2\notag\\
&\qquad \leq  \|W_{L+1}^1\varphi^1_L\circ\cdots\circ\varphi^1_1(x) - W_{L+1}^2\varphi^1_L\circ\cdots\circ\varphi^1_1(x) \|_2\notag\\
&\qquad\qquad\qquad\qquad +  \|W_{L+1}^2\varphi^1_L\circ\cdots\circ\varphi^1_1(x) - W_{L+1}^2\varphi^2_L\circ\cdots\circ\varphi^2_1(x) \|_2\notag\\
&\qquad \leq \|W_{L+1}^1 - W_{L+1}^2\|_\F \cdot  \|\varphi^1_L\circ\cdots\circ\varphi^1_1(x) \|_2 \notag\\
&\qquad\qquad\qquad\qquad +B_{L+1}\cdot   \|\varphi^1_L\circ\cdots\circ\varphi^1_1(x) - \varphi^2_L\circ\cdots\circ\varphi^2_1(x) \|_2.
\#
Moreover, note that for any $\ell\in[L]$, we have the following bound on $\|\varphi^1_L\circ\cdots\circ\varphi^1_1(x)\|_2$:
\#\label{eq:rev-1}
& \|\varphi^i_\ell\circ\cdots\circ\varphi^i_1(x) \|_2 \leq  \|W_\ell^i\varphi^i_{\ell-1}\circ\cdots\circ\varphi^i_1(x) \|_2 + \|v_\ell^i\|_2 \notag\\
&\qquad \leq B_{\ell} \cdot  \|\varphi^i_{\ell-1}\circ\cdots\circ\varphi^i_1(x) \|_2 + A_\ell\notag\\
&\qquad \leq \|x\|_2\cdot  \prod_{j = 1}^\ell B_j + \sum_{j = 1}^\ell A_j \prod_{i =j+1}^\ell B_i,
\#
where the first inequality comes from the triangle inequality, and the second inequality comes from the bounded spectral norm of $W_j^i$, while the last inequality simply applies the previous arguments recursively. Therefore, combining \eqref{eq:lip_w1}, we have
\#\label{eq:lip_w2}
& \|\varphi(x,W^1, v^1) - \varphi(x,W^2, v^2) \|_2  \leq \biggl( B\cdot \prod_{j = 1}^L B_j + \sum_{j = 1}^L A_j \prod_{i =j+1}^L B_i \biggr) \cdot \|W_{L+1}^1 - W_{L+1}^2\|_F\notag\\
&\qquad\qquad\qquad\qquad\qquad + B_{L+1}\cdot   \|\varphi^1_L\circ\cdots\circ\varphi^1_1(x) - \varphi^2_L\circ\cdots\circ\varphi^2_1(x) \|_2. 
\#
Similarly, by triangular inequality, we have
\#\label{eq:lip_w3}
& \|\varphi^1_L\circ\cdots\circ\varphi^1_1(x) - \varphi^2_L\circ\cdots\circ\varphi^2_1(x) \|_2\notag\\
&\qquad \leq  \|\varphi^1_L\circ\varphi^1_{L-1}\circ\cdots\circ\varphi^1_1(x) - \varphi^2_L\circ\varphi^1_{L-1}\circ\cdots\circ\varphi^1_1(x) \|_2\notag \\
&\qquad\qquad\qquad+  \|\varphi^2_L\circ\varphi^1_{L-1}\circ\cdots\circ\varphi^1_1(x) - \varphi^2_L\circ\varphi^2_{L-1}\circ\cdots\circ\varphi^2_1(x) \|_2\notag\\
&\qquad \leq  \|\varphi^1_L\circ\varphi^1_{L-1}\circ\cdots\circ\varphi^1_1(x) - \varphi^2_L\circ\varphi^1_{L-1}\circ\cdots\circ\varphi^1_1(x) \|_2 \\
&\qquad\qquad\qquad+ B_L\cdot  \|\varphi^1_{L-1}\circ\cdots\circ\varphi^1_1(x) - \varphi^2_{L-1}\circ\cdots\circ\varphi^2_1(x) \|_2,\notag
\#
where the second inequality uses the bounded spectral norm of $W_L$ and $1$-Lipschitz property of $\sigma_{v_L}(\cdot)$. For notational convenience, we further denote $y = \varphi^1_{L-1}\circ\cdots\circ\varphi^1_1(x)$, then 
\$
 \|\varphi_L^1(y) - \varphi_L^2(y) \|_2& =  \| \sigma(W_L^1y-v_L^1) - \sigma(W_L^2y-v_L^2)  \}  \|_2\notag\\
 & \leq \|v_L^1 - v_L^2\|_2 + \| W_L^1 - W_L^2 \|_\F \cdot \|y\|_2,
\$
where the inequality comes from the $1$-Lipschitz property of $\sigma(\cdot)$. 
Moreover, combining \eqref{eq:rev-1}, it holds that
\#\label{eq:lip_w4}
\|\varphi_L^1(y) - \varphi_L^2(y) \|_2 \leq \|v_L^1 - v_L^2\|_2 + \| W_L^1 - W_L^2 \|_\F \cdot \biggl( B \cdot  \prod_{j = 1}^{L-1} B_j + \sum_{j = 1}^{L-1} A_j \prod_{i =j+1}^{L-1} B_i \biggr). 
\#
By \eqref{eq:lip_w3} and \eqref{eq:lip_w4}, we have
\$
& \|\varphi^1_L\circ\cdots\circ\varphi^1_1(x) - \varphi^2_L\circ\cdots\circ\varphi^2_1(x) \|_2\notag\\
&\qquad\leq  \|v_L^1 - v_L^2\|_2 + \| W_L^1 - W_L^2 \|_\F \cdot \biggl( B \cdot  \prod_{j = 1}^{L-1} B_j + \sum_{j = 1}^{L-1} A_j \prod_{i =j+1}^{L-1} B_i \biggr) \notag\\
&\qquad\qquad + B_L\cdot  \|\varphi^1_{L-1}\circ\cdots\circ\varphi^1_1(x) - \varphi^2_{L-1}\circ\cdots\circ\varphi^2_1(x) \|_2\notag\\
&\qquad \leq \sum_{j = 1}^L  \prod_{i = j+1}^L B_i \cdot  \|v_j^1 - v_j^2 \|_2 + \frac{B\cdot \prod_{j = 1}^{L+1}B_j}{\min_j B_j} \cdot \sum_{j = 1}^L A_j \cdot\sum_{j = 1}^L  \|W_j^1 - W_j^2 \|_\F\\
&\qquad\leq \frac{B\cdot \prod_{j = 1}^{L+1}B_j}{\min_j B_j}\cdot \sum_{j = 1}^L A_j \cdot \sum_{j = 1}^L ( \|v_j^1 - v_j^2 \|_2 +   \|W_j^1 - W_j^2 \|_\F ). 
\$
Here in the second inequality we recursively apply the previous arguments. Further combining \eqref{eq:lip_w2}, we obtain that 
\$
& \|\varphi(x,W^1, v^1) - \varphi(x,W^2, v^2) \|_2\\
&\qquad\leq \frac{B\cdot \prod_{j = 1}^{L+1}B_j}{\min_j B_j}\cdot  \sum_{j = 1}^L A_j \cdot  \biggl(\sum_{j = 1}^{L+1}  \|W_j^1 - W_j^2 \|_\F + \sum_{j = 1}^L \|v_j^1 - v_j^2 \|_2  \biggr)\\
&\qquad\leq \frac{B\sqrt{2L+1}\cdot \prod_{j = 1}^{L+1}B_j}{\min_j B_j}\cdot \sum_{j = 1}^L A_j \cdot \sqrt{\sum_{j = 1}^{L+1}\|W_j^1 - W_j^2\|_\F^2 + \sum_{j = 1}^L \|v_j^1 - v_j^2\|_2^2},
\$
where we use Cauchy-Schwarz inequality in the last line. This concludes the proof of the lemma.

\section{Auxiliary Results}

\begin{lemma} \label{lemma:cover_number}
The following statements for entropy hold.
\begin{enumerate}
\item Suppose that $\sup_{g\in\cG} \|g\|_\infty\leq M$, then
\$
\cH_{4M, B}(\sqrt{2}\delta, \cG, \QQ)\leq H_{2, B}(\delta, \cG, \QQ)
\$
for any $\delta > 0$.

\item For $1\leq q<\infty$, and $\QQ$ a distribution, we have
\$
H_{p,B}(\delta, \cG, \QQ)\leq H_\infty(\delta/2, \cG),
\$
for any $\delta > 0$. Here $H_\infty$ is the entropy induced by infinity norm. 

\item Based on the above two statements, suppose that $\sup_{g\in\cG} \|g\|_\infty\leq M$, we have
\$
\cH_{4M, B}(\sqrt{2}\cdot\delta, \cG, \QQ)\leq H_\infty(\delta/2, \cG),
\$
by taking $p = 2$. 
\end{enumerate} 
\end{lemma}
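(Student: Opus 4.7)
The plan is to chain two elementary comparison estimates. Part 1 will follow from the fact that the Bernstein difference $\rho_{4M,\QQ}$ is dominated by $\sqrt{2}$ times the $L_2(\QQ)$ norm on uniformly bounded functions, and part 2 will follow from turning the center of an $L_\infty$ ball into a bracket whose width is a constant function. Part 3 is then immediate by composing parts 1 and 2 with $p=2$, and it is why I would prove the other two in the stated order.

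For part 1, I would fix any $L_2(\QQ)$-bracketing cover $\{[g_i^\ell,g_i^u]\}_{i=1}^N$ of $\cG$ at radius $\delta$, with $N = \exp(H_{2,B}(\delta,\cG,\QQ))$. Using $\sup_{g\in\cG}\|g\|_\infty\leq M$, I would first truncate each endpoint to $[-M,M]$; this does not enlarge any bracket in $L_2(\QQ)$ and does not destroy the covering property, so I may assume $h_i:=g_i^u-g_i^\ell$ satisfies $|h_i|\leq 2M$ pointwise. With the choice $K=4M$ this gives $|h_i|/K\leq 1/2$, on which interval the elementary Taylor estimate $e^t-1-t\leq t^2$ applies (since $\sum_{k\geq 2}(1/2)^{k-2}/k!\leq 1$). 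Therefore
\[
\rho_{4M,\QQ}^2(h_i)=2K^2\,\EE_{\QQ}\!\left[e^{|h_i|/K}-1-|h_i|/K\right]\leq 2K^2\,\EE_{\QQ}\!\left[(|h_i|/K)^2\right]=2\|h_i\|_{L_2(\QQ)}^2\leq 2\delta^2,
\]
so the same brackets form a $\rho_{4M,\QQ}$-bracketing cover of radius $\sqrt{2}\delta$, yielding $\cH_{4M,B}(\sqrt{2}\delta,\cG,\QQ)\leq H_{2,B}(\delta,\cG,\QQ)$.

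For part 2, I would take an $L_\infty$ cover $\{g_1,\ldots,g_N\}$ of $\cG$ at radius $\delta/2$ with $N=\exp(H_\infty(\delta/2,\cG))$ and construct the brackets $[g_k-\delta/2,\,g_k+\delta/2]$. By the definition of an $L_\infty$ cover, every $g\in\cG$ lies in some such bracket, and each bracket width is the constant function $\delta$, whose $L_p(\QQ)$ norm equals $\delta$ for any probability measure $\QQ$ and any $1\leq p<\infty$. This produces an $L_p(\QQ)$-bracketing cover of size $N$, giving $H_{p,B}(\delta,\cG,\QQ)\leq H_\infty(\delta/2,\cG)$. Part 3 then follows by specializing part 2 to $p=2$ and plugging into part 1.

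I do not anticipate a real obstacle; the lemma is a routine packing/covering comparison. The only subtlety worth naming explicitly is the truncation step in part 1, which is what allows the quadratic bound on $e^t-1-t$; without it the pointwise bound $|h_i|\leq 2M$ could fail even when $\sup_{g\in\cG}\|g\|_\infty\leq M$, and the $L_2(\QQ)\!\to\!\rho_{4M,\QQ}$ comparison would require a more delicate argument.
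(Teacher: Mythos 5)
Your proposal is correct, and it is in fact more informative than what the paper provides: the paper gives no argument at all for this lemma, deferring entirely to the citation of van de Geer (2000). Your proof is essentially the standard argument underlying that citation. In particular, your Part 1 is the same computation that the paper itself records separately as Lemma \ref{lemma:bound_rho} (bounding $\rho_{2K,\PP}^2(g)\leq 2R^2$ via $e^t-1-t\leq t^2$ for $0\leq t\leq 1/2$), applied here to the bracket widths with $K=4M$ playing the role of $2\cdot(2M)$; your Part 2 is the usual constant-width bracket construction from an $L_\infty$ cover, and Part 3 is the composition. The one subtlety you flag, truncating the bracket endpoints to $[-M,M]$ so that the widths are pointwise bounded by $2M$, is genuine and is handled correctly: truncation is a monotone $1$-Lipschitz contraction, so it preserves the covering property and does not increase the $L_2(\QQ)$ width. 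No gaps.
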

\begin{proof}
See  \cite{van2000empirical} for a detailed proof. 
\end{proof}

\begin{lemma}\label{lemma:cover_number_nn}
The entropy of the neural network set defined in \eqref{def:nn_set} satisfies
\$
H_\infty [\delta, \Phi_M(L, {p}, s) ]\leq (s+1)\log (2\delta^{-1}(L+1)V^2 ),
\$
where $V=\prod_{l = 0}^{L+1}(p_l+1)$. 
\end{lemma}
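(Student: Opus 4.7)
The plan is to cover $\Phi_M(L,k,s)$ by discretizing its parameter space and then transferring the discretization to function space via a Lipschitz bound of $\varphi(x;W,v)$ in the parameters $(W,v)$. Throughout, recall that every weight entry and bias entry is in $[-1,1]$ and that at most $s$ of them are nonzero.

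\textbf{Step 1 (Lipschitz bound in parameters).} First I will show that if $(W,v)$ and $(W',v')$ agree on support and satisfy $\max_{j}\{\|W_j-W'_j\|_\infty,\|v_j-v'_j\|_\infty\}\le\eta$, with all entries in $[-1,1]$ and inputs bounded in $\ell_\infty$, then
\[
\|\varphi(\cdot;W,v)-\varphi(\cdot;W',v')\|_\infty \;\le\; \eta\,(L+1)\,V.
\]
The proof is the standard recursive telescoping for ReLU nets: at layer $j$, decompose the difference as $W_j(\sigma_{v_{j-1}}(\cdot)-\sigma_{v'_{j-1}}(\cdot))+ (W_j-W'_j)\sigma_{v'_{j-1}}(\cdot) - (v_j-v'_j)$, use $1$-Lipschitzness of ReLU, and bound $\|W_j u\|_\infty\le p_{j-1}\|W_j\|_\infty \|u\|_\infty$. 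Iterating over the $L+1$ layers picks up at most the product $V=\prod_{l=0}^{L+1}(p_l+1)$ and an additive factor $(L+1)$ coming from the sum over layers.

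\textbf{Step 2 (discretization of the parameter space).} The total number of scalar parameters is $T\le \sum_{j}(k_{j-1}+1)(k_j+1)\le V$. For each choice of an $s$-element support $\cS\subset[T]$ (there are at most $\binom{T}{s}\le T^s\le V^s$ such choices) I cover the $s$ active entries, each lying in $[-1,1]$, by a uniform $\eta$-grid of size at most $2/\eta+1$. The Cartesian product yields a discrete set of parameter tuples of cardinality at most $V^s(2/\eta+1)^s$. Every $(W,v)\in \Phi_M(L,k,s)$ is within $\eta$ (in the per-entry $\ell_\infty$ sense) of some element of this set on a common support.

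\textbf{Step 3 (choice of $\eta$ and counting).} Setting $\eta=\delta\big/[(L+1)V]$, Step~1 implies the resulting finite family is a $\delta$-cover of $\Phi_M(L,k,s)$ in $\|\cdot\|_\infty$. Taking logarithms,
\[
H_\infty\bigl[\delta,\Phi_M(L,k,s)\bigr]\;\le\; s\log V + s\log\bigl(2/\eta+1\bigr)\;\le\; s\log\bigl(2\delta^{-1}(L+1)V^2\bigr)+\log\bigl(2\delta^{-1}(L+1)V^2\bigr),
\]
where in the last step I use $V\ge 2$ and absorb the extra $\log V$ into one extra factor to replace $s$ by $s+1$. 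This gives exactly $(s+1)\log\bigl(2\delta^{-1}(L+1)V^2\bigr)$.

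\textbf{Main obstacle.} The delicate point is Step~1: I must track the Lipschitz constant so that only the factor $(L+1)V$ appears, rather than a term exponential in $L$. This hinges on the entrywise bound $\|W_j\|_\infty\le 1$ combined with the width factor $p_{j-1}$ picked up from $\|W_j u\|_\infty\le p_{j-1}\|u\|_\infty$, whose product across layers is controlled by $V$. Once this Lipschitz constant is established, the remainder is bookkeeping of support selection and grid cardinality.
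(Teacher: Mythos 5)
Your argument is correct and is essentially the same proof as the one the paper relies on: the paper establishes this lemma only by citing \cite{schmidt2017nonparametric}, whose covering-entropy lemma follows exactly your route — a Lipschitz bound of order $\eta(L+1)V$ for the network in its parameters, then discretization of the at most $V$ parameter positions and of the $[-1,1]$ values of the $s$ active entries. The only soft spot is the bookkeeping in Step 3 (the ``$+1$'' in the grid cardinality and the absorption into the $(s+1)$ exponent), which is easily tightened, e.g.\ by using a grid of spacing $2\eta$ with at most $2/\eta = 2\delta^{-1}(L+1)V$ points per active entry, giving a cover of size at most $\bigl[2\delta^{-1}(L+1)V^2\bigr]^{s}\le \bigl[2\delta^{-1}(L+1)V^2\bigr]^{s+1}$ directly.
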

\begin{proof}
See \cite{schmidt2017nonparametric} for a detailed proof. 
\end{proof}

\begin{theorem}\label{thm:vandegeer}
Assume that $\sup_{g\in \cG}\rho_K(g)\leq R$. Take $a$, $C$, $C_0$, and $C_1$ satisfying that $a\leq C_1\sqrt{n}R^2/K$, $a\leq 8\sqrt{n}R$, $a\geq C_0\cdot [ \int_0^R H_{K, B}^{1/2}(u, \cG, \PP)du\vee R  ]$, and $C_0^2\geq C^2(C_1 + 1)$. 
It holds that
\$
\PP [\sup_{g\in\cG} |\EE_{\PP_n}(g)   - \EE_{\PP}(g) |\geq a\cdot n^{-1/2} ]\leq C\exp \biggl( -\frac{a^2}{C^2(C_1+1)R^2} \biggr).
\$
\end{theorem}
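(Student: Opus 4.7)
The plan is to prove Theorem \ref{thm:vandegeer} by a generic chaining argument combined with Bernstein's inequality, which is the canonical van de Geer strategy for suprema of empirical processes controlled by bracketing Bernstein entropy. Since the result is essentially Theorem 5.11 of \cite{van2000empirical}, the work is in setting up the chain so the stated constants $C, C_0, C_1$ come out correctly.

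First, I would construct a dyadic sequence of minimal bracketing covers. For $j \geq 0$ set $\delta_j = R \cdot 2^{-j}$ and let $\{[g_k^L, g_k^U]\}$ be a minimal $\delta_j$-bracketing set of $\cG$ in the Bernstein pseudometric $\rho_{K,\PP}$, whose cardinality is at most $\exp H_{K,B}(\delta_j, \cG, \PP)$. For each $g\in\cG$ choose a bracketing approximant $\pi_j g$ at level $j$. Since $\rho_{K,\PP}(g - \pi_j g) \leq \delta_j$ and by a triangle-type inequality for $\rho_{K,\PP}$ one gets $\rho_{K,\PP}(\pi_{j+1}g - \pi_j g) \leq 2\delta_j$. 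Then decompose
\[
(\EE_{\PP_n} - \EE_{\PP})(g) \;=\; \sum_{j=0}^{J-1}(\EE_{\PP_n} - \EE_{\PP})(\pi_{j+1}g - \pi_j g) \;+\; (\EE_{\PP_n} - \EE_{\PP})(g - \pi_J g),
\]
where $J$ is chosen so that the tail term is negligible; for the tail I would replace it by a direct Bernstein estimate using $\rho_{K,\PP}(g - \pi_J g) \leq \delta_J$ together with a union bound over the $\pi_J g$'s.

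Second, to each link of the chain I would apply Bernstein's inequality in the form: if $\rho_{K,\PP}(h) \leq r$ and $\|h\|_\infty$ is handled via the exponential moment built into $\rho_K$, then
\[
\PP\bigl[|(\EE_{\PP_n} - \EE_{\PP})(h)| \geq s n^{-1/2}\bigr] \leq 2\exp\!\Bigl(-\tfrac{s^2}{2(r^2 + sK/\sqrt{n})}\Bigr).
\]
Applied with $h = \pi_{j+1}g - \pi_j g$ and $r = 2\delta_j$, union-bounded over the at most $\exp\{2 H_{K,B}(\delta_{j+1}, \cG, \PP)\}$ pairs at level $j$, this yields an exponential bound per level with exponent $\asymp s_j^2/\delta_j^2 - H_{K,B}(\delta_{j+1})$.

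Third, I would allocate a deviation budget $s_j$ to each level so that $\sum_j s_j \leq a$. The canonical choice is $s_j \propto \delta_j\,H_{K,B}^{1/2}(\delta_{j+1})$, which makes the per-level exponent proportional to $-H_{K,B}(\delta_{j+1})\cdot(\text{gap})$ so the entropy factor is absorbed. Summing these $s_j$ is precisely $\sum_j \delta_j H_{K,B}^{1/2}(\delta_{j+1})$, which by a dyadic Riemann-sum comparison is bounded by the entropy integral $\int_0^R H_{K,B}^{1/2}(u, \cG, \PP)\,du$ appearing in the hypothesis. The assumption $a \geq C_0\bigl[\int_0^R H_{K,B}^{1/2}(u,\cG,\PP)\,du \vee R\bigr]$ then guarantees $\sum_j s_j \leq a$ with room to spare (a factor of $C_0$), and this slack is what yields the final exponent $a^2/[C^2(C_1+1)R^2]$ after summing the per-level probabilities geometrically.

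The main obstacle is bookkeeping the constants so they match exactly. The condition $a \leq C_1 \sqrt n R^2/K$ is what keeps Bernstein in its sub-Gaussian regime at every level (so the denominator in the exponent is dominated by $r^2$ rather than by $sK/\sqrt n$), giving a Gaussian-type tail $\exp(-a^2/(\cdot R^2))$ instead of a sub-exponential tail. The condition $a \leq 8\sqrt n R$ similarly prevents the largest brackets from saturating. Balancing these with $C_0^2 \geq C^2(C_1+1)$ is exactly what lets us absorb both the entropy sum and the $C_1 +1$ factor from the Bernstein denominator into a single constant $C^2(C_1+1)R^2$ in the final exponent. Once this bookkeeping is done and the geometric sum over $j$ of the per-level tail bounds is collapsed, we obtain the claimed inequality.
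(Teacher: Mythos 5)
The paper does not prove this theorem; it is stated as an imported result with the proof deferred entirely to \cite{van2000empirical} (it is Theorem 5.11 there). Your sketch correctly identifies the source and reproduces, in outline, the very chaining-with-bracketing plus Bernstein argument that van de Geer uses, including the role of the sub-Gaussian-regime condition $a\leq C_1\sqrt{n}R^2/K$ and the entropy-integral budget; so you are following the same route as the cited reference, and the remaining work is only the constant bookkeeping you already flag.
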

\begin{proof}
See  \cite{van2000empirical} for a detailed proof.
\end{proof}

\begin{lemma}\label{lemma:bound_rho}
Suppose that $\|g\|_\infty\leq K$, and $\|g\|\leq R$, then $\rho_{2K,\PP}^2(g)\leq 2R^2$. Moreover, for any $K'\geq K$, we have $\rho_{2K',\PP}^2(g)\leq 2R^2$. 
\end{lemma}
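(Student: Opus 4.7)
}

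The plan is to expand the exponential in the definition of $\rho^2_{2K,\PP}$ as a power series and exploit the bound $|g| \le K$ to control higher moments by the second moment. Recall
\[
\rho_{2K,\PP}^2(g) = 2(2K)^2\,\EE_\PP\bigl[e^{|g|/(2K)} - 1 - |g|/(2K)\bigr]
= 2\sum_{k=2}^{\infty} \frac{(2K)^{2-k}}{k!}\,\EE_\PP[|g|^k],
\]
where the equality uses $e^u-1-u = \sum_{k\ge 2} u^k/k!$ and monotone convergence (the summand is non-negative). This rewriting is the main lever.

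Next, for every $k\ge 2$ the pointwise bound $|g|\le K$ together with $\|g\|_{L_2(\PP)}\le R$ yields
\[
\EE_\PP[|g|^k] \le \|g\|_\infty^{\,k-2}\,\EE_\PP[g^2] \le K^{k-2}R^2.
\]
Substituting this into the series gives
\[
\rho_{2K,\PP}^2(g) \le 2R^2 \sum_{k=2}^{\infty} \frac{(2K)^{2-k} K^{k-2}}{k!}
= 2R^2 \sum_{k=2}^{\infty} \frac{2^{2-k}}{k!}
= 8R^2\bigl(e^{1/2}-1-\tfrac{1}{2}\bigr).
\]
Since $8(e^{1/2}-3/2) < 2$ (numerically $\approx 1.19$), we conclude $\rho_{2K,\PP}^2(g)\le 2R^2$, proving the first claim. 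I do not anticipate any real obstacle here; the only check to be careful about is that the series manipulation is valid, which is immediate because every term is non-negative.

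For the ``moreover'' part, I would observe that the same computation shows
\[
\rho_{K',\PP}^2(g) = 2\sum_{k=2}^{\infty} \frac{(K')^{2-k}}{k!}\,\EE_\PP[|g|^k].
\]
Using $\EE_\PP[|g|^k]\le K^{k-2}R^2$ again and the hypothesis $K'\ge 2K$ (which is stronger than $K'\ge K$; indeed the lemma should be read as $K'\ge 2K$ so that $|g|/K'\le 1/2$, otherwise the Taylor tail cannot be controlled by a single factor $R^2$), one obtains
\[
\rho_{K',\PP}^2(g) \le 2R^2 \sum_{k=2}^{\infty} \frac{K^{k-2}}{(K')^{k-2}\,k!}
\le 2R^2 \sum_{k=2}^{\infty} \frac{2^{2-k}}{k!} \le 2R^2.
\]
Equivalently, termwise in the power series $(K')^{2-k}\,K^{k-2}$ is non-increasing in $K'$ for $K'\ge K$, which is the monotonicity that makes the moreover statement immediate. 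No additional machinery is required; the whole proof is a one-page computation with the only delicate point being to state the hypothesis on $K'$ so that $|g|/K'$ remains bounded enough to sum the tail.
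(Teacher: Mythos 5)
Your power-series expansion is correct, and the first claim is verified cleanly: the identity
\[
\rho_{2K,\PP}^2(g) = 2\sum_{k\ge 2}\frac{(2K)^{2-k}}{k!}\,\EE_\PP[|g|^k],
\]
the moment bound $\EE_\PP[|g|^k]\le K^{k-2}R^2$, and the resulting constant $8\bigl(e^{1/2}-\tfrac32\bigr)\approx 1.19<2$ are all right. The paper itself gives no proof for this lemma (it simply cites van de Geer), so your argument fills a gap rather than reproduces one.

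For the ``moreover'' part, you have misread the statement: the lemma concerns $\rho_{2K',\PP}^2(g)$, not $\rho_{K',\PP}^2(g)$. With the subscript $2K'$ and the stated hypothesis $K'\ge K$, the Taylor series is $2\sum_{k\ge 2}\frac{(2K')^{2-k}}{k!}\EE_\PP[|g|^k]\le 2R^2\sum_{k\ge 2}\frac{1}{k!}\bigl(K/(2K')\bigr)^{k-2}$, and $K/(2K')\le 1/2$ follows directly from $K'\ge K$ --- exactly the geometric factor you use. So the lemma as written needs no strengthening of the hypothesis; your worry that it should read $K'\ge 2K$ comes from dropping the factor of $2$ in the subscript. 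In fact even your misread version is still true: the series $\sum_{k\ge 2}(K/K')^{k-2}/k!$ evaluates to at most $e-2\approx 0.718<1$ when $K'\ge K$, so $\rho_{K',\PP}^2(g)\le 2R^2(e-2)<2R^2$ holds as well. Once you correct the notation, your proof is complete and slightly sharper than the claimed bound.
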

\begin{proof}
See \cite{van2000empirical} for a detailed proof.
\end{proof}

\begin{theorem}\label{theorem:approx}
For any function $f$ in the H\"older ball $\cC_d^\beta([0,1]^d, K)$ and any integers $m\geq 1$ and $N\geq (\beta+ 1)^d\vee (K+1)$, there exists a network $\tilde f\in \Phi(L, (d, 12dN, \ldots, 12dN, 1), s)$ with number of layers $L = 8 + (m+5)(1+\lceil \log_2 d\rceil)$ and number of parameters $s\leq 94 d^2(\beta+1)^{2d} N(m+6)(1+\lceil \log_2 d\rceil)$, such that 
\$
\| \tilde f  - f\|_{L^\infty ([0,1]^d)}\leq (2K+1)3^{d+1}N2^{-m} + K2^\beta N^{-\beta/d}. 
\$
\end{theorem}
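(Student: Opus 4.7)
The plan is to follow the standard ``partition $+$ local Taylor $+$ ReLU multiplication gadget'' strategy originating in Yarotsky's and Schmidt-Hieber's work. First I would discretize $[0,1]^d$ into a regular grid of side-length $1/N$, giving $N^d$ cells, and to each cell $C_{\mathbf{k}}$ (indexed by $\mathbf{k}\in\{0,1,\ldots,N-1\}^d$) I would associate the Taylor polynomial $P_{\mathbf{k}}(x)$ of degree $\lfloor\beta\rfloor$ of $f$ expanded at the center of $C_{\mathbf{k}}$. Using the $\beta$-H\"older bound on $f$, the local approximation error inside $C_{\mathbf{k}}$ is at most $K\,2^{\beta}N^{-\beta/d}$, which is exactly the first (Taylor) piece of the final error bound. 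To patch the local polynomials into a single global approximator, I would introduce a partition-of-unity-like family of ``hat'' functions $\phi_{\mathbf{k}}(x)=\prod_{j=1}^{d}\psi(N x_j - k_j)$, where $\psi$ is a univariate triangular bump that can be represented exactly by a shallow ReLU network with $\mathcal{O}(1)$ parameters, and then write $f(x)\approx \sum_{\mathbf{k}} \phi_{\mathbf{k}}(x)\,P_{\mathbf{k}}(x)$.

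Second, the key gadget is a ReLU network that computes approximate multiplication. I would recall Yarotsky's construction of a depth-$m$ ReLU network $\mathrm{Sq}_m\colon[0,1]\to[0,1]$ with $|\mathrm{Sq}_m(u)-u^2|\le 2^{-2m}$ and $\mathcal{O}(m)$ parameters, built by iterating the ``sawtooth'' function. From $\mathrm{Sq}_m$, approximate multiplication $\mathrm{Mult}_m(u,v)\approx uv$ follows from the polarization identity $uv=\frac{1}{4}[(u+v)^2-(u-v)^2]$. Iterating this gadget in a balanced binary tree of depth $1+\lceil\log_2 d\rceil$ yields a network that approximates the product of up to $d$ inputs, explaining the factor $(1+\lceil\log_2 d\rceil)$ appearing in both $L$ and $s$. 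The additive constant $m+5$ in $L$ absorbs the depth of a single $\mathrm{Sq}_m$ plus a few linear layers for affine shifts; the leading $8$ in $L$ accounts for the input/output layers, the hat functions, and assembly layers.

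Third, I would realize each term $\phi_{\mathbf{k}}(x)\,P_{\mathbf{k}}(x)$ as a ReLU subnetwork: the coordinates $x_1,\ldots,x_d$ first feed into $d$ copies of the univariate hat $\psi(Nx_j-k_j)$, whose $d$ outputs are multiplied together via the tree of $\mathrm{Mult}_m$'s to produce $\phi_{\mathbf{k}}$, and in parallel each monomial of $P_{\mathbf{k}}$ (there are at most $(\beta+1)^d$ of them) is formed by another $\mathrm{Mult}_m$-tree applied to the centered coordinates. A final $\mathrm{Mult}_m$ multiplies the partition-of-unity factor against the polynomial value, and a linear combination layer sums over $\mathbf{k}$ with the Taylor coefficients (all of absolute value $\le K$ by the H\"older bound). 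Padding with identity layers so that every branch has the same depth gives the stated $L=8+(m+5)(1+\lceil\log_2 d\rceil)$. Each $\mathrm{Mult}_m$-tree contributes $\mathcal{O}(m(1+\lceil\log_2 d\rceil))$ active parameters, there are $\mathcal{O}((\beta+1)^d)$ Taylor monomials per cell, and only $\mathcal{O}((\beta+1)^d\cdot N)$ cells effectively contribute (by exploiting a smart indexing that reuses the hat-function computation across cells, as in Schmidt-Hieber), producing the claimed sparsity $s\le 94 d^2(\beta+1)^{2d}N(m+6)(1+\lceil\log_2 d\rceil)$. The width $12dN$ is what is needed to carry the hat-function values and Taylor coordinates in parallel across the network.

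Finally, I would track the two sources of error separately: the Taylor residual contributes $K 2^{\beta}N^{-\beta/d}$ as already noted, while each invocation of $\mathrm{Mult}_m$ introduces an error of at most $3\cdot 2^{-m}$; since the total number of multiplications encountered on any input path is bounded by $\mathcal{O}(N\cdot 3^d)$ after summing over cells that can fire simultaneously (at most $3^d$ of them for any $x$, because only hat functions on neighboring cells are nonzero), the accumulated multiplication error is at most $(2K+1)3^{d+1}N\,2^{-m}$. Adding the two bounds gives the stated $L^{\infty}$ error. The main obstacle is the bookkeeping in the parameter count: the factor $94d^2(\beta+1)^{2d}$ is not obvious and requires carefully verifying how many active weights each $\mathrm{Mult}_m$-tree and each hat-function subnetwork contributes, and ensuring the depth-padding does not violate the sparsity budget---this is where one has to invoke the precise construction (essentially Lemmas A.1--A.5 of Schmidt-Hieber's 2020 paper) rather than a back-of-the-envelope count.
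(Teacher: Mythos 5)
Your overall strategy---local Taylor expansions on a grid, a tensor-product hat-function partition of unity, and Yarotsky-type approximate squaring/multiplication trees assembled to the stated depth and sparsity---is exactly the construction behind this theorem; indeed the paper does not prove the statement itself but quotes it from \cite{schmidt2017nonparametric} (``See \cite{schmidt2017nonparametric} for a detailed proof''), which is the very source you invoke for the precise bookkeeping. So the question is not one of approach but of whether your sketch is internally consistent.

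It is not, at one concrete point: the discretization scale. You take a grid of side-length $1/N$, hence $N^d$ cells, and assert that the local Taylor error is $K2^{\beta}N^{-\beta/d}$. With spacing $1/N$ the Taylor remainder is of order $KN^{-\beta}$, not $N^{-\beta/d}$; and, more seriously, with $N^d$ cells the network must encode $N^d$ distinct sets of Taylor coefficients, so no amount of reusing hat-function subnetworks (``smart indexing'') can bring the number of nonzero parameters down to the stated $s\leq 94\,d^2(\beta+1)^{2d}N(m+6)(1+\lceil\log_2 d\rceil)$, which is linear in $N$. The construction the theorem actually rests on takes $M=\lfloor N^{1/d}\rfloor$ grid points per coordinate, i.e.\ roughly $N$ cells in total: then the Taylor term is $KM^{-\beta}\leq K2^{\beta}N^{-\beta/d}$ (the factor $2^{\beta}$ is precisely the price of $\lfloor N^{1/d}\rfloor\geq N^{1/d}/2$, using $N\geq(\beta+1)^d$), and the linear-in-$N$ sparsity follows because each of the $\approx N$ grid points carries only $\mathcal{O}((\beta+1)^d)$ monomials. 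Relatedly, your justification of the $(2K+1)3^{d+1}N2^{-m}$ term is muddled: if only $3^d$ neighboring cells were active at a given $x$, the per-point error would not scale with $N$; the factor $N$ arises because the approximate-multiplication gadgets are not exactly zero off the support of the corresponding hat functions, so $2^{-m}$-size errors from essentially all $\approx N$ grid cells can accumulate in the final linear combination. With the grid rescaled to spacing $N^{-1/d}$ and these two accounting points repaired, your outline coincides with the cited proof.
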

\begin{proof}
See \cite{schmidt2017nonparametric} for a detailed proof. 
\end{proof}

\begin{lemma}\label{lemma:strong_convex}
If the function $f$ is strongly convex with parameter $\mu_0 > 0$ and has Lipschitz continuous gradient with parameter $L_0>0$, then the Fenchel duality $f^\dagger$ of $f$ is $1/L_0$-strongly convex and has $1/\mu_0$-Lipschitz continuous gradient (therefore, $f^\dagger$ itself is Lipschitz continuous). 
\end{lemma}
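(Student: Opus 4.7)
The plan is to prove this classical duality between strong convexity and Lipschitz smoothness by directly analyzing the gradient of the Fenchel conjugate. The starting point is that under the two hypotheses, $f$ is differentiable and strictly convex, so for each $u$ in the relevant domain the supremum in $f^\dagger(u)=\sup_v\{uv-f(v)\}$ is attained at a unique maximizer $v^*(u)$ characterized by the first-order condition $f'(v^*(u))=u$. Since $f$ is $\mu_0$-strongly convex, $f'$ is strictly monotone and in particular injective, so $v^*=(f')^{-1}$ is well defined on the image of $f'$. The envelope theorem (or a direct computation using the optimality condition) then gives $(f^\dagger)'(u)=v^*(u)=(f')^{-1}(u)$.

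Next I would translate the two quantitative hypotheses on $f$ into Lipschitz and monotonicity statements and invert them. Strong convexity of $f$ with parameter $\mu_0$ yields $(f'(a)-f'(b))(a-b)\ge\mu_0(a-b)^2$, while Lipschitz continuity of $f'$ with constant $L_0$ yields $(f'(a)-f'(b))(a-b)\le L_0(a-b)^2$ and also the bound $|f'(a)-f'(b)|\le L_0|a-b|$. Now substitute $a=(f')^{-1}(u)$ and $b=(f')^{-1}(w)$, so that $f'(a)=u$ and $f'(b)=w$. The strong convexity inequality becomes
\[
(u-w)\bigl((f')^{-1}(u)-(f')^{-1}(w)\bigr)\ge\mu_0\bigl((f')^{-1}(u)-(f')^{-1}(w)\bigr)^2,
\]
and dividing by $|(f')^{-1}(u)-(f')^{-1}(w)|$ gives $|(f')^{-1}(u)-(f')^{-1}(w)|\le(1/\mu_0)|u-w|$. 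This is precisely the $(1/\mu_0)$-Lipschitz property of $(f^\dagger)'$, establishing the smoothness half.

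For the strong convexity of $f^\dagger$, apply the Lipschitz bound for $f'$ to the same substitution to obtain $|u-w|\le L_0|(f')^{-1}(u)-(f')^{-1}(w)|$, hence
\[
(u-w)\bigl((f^\dagger)'(u)-(f^\dagger)'(w)\bigr)\ge(1/L_0)(u-w)^2,
\]
where the left-hand side is nonnegative because $(f^\dagger)'$ is monotone. This is the standard strong-monotonicity characterization of $(1/L_0)$-strong convexity of $f^\dagger$, and integrating it (or invoking the equivalence between strong monotonicity of the gradient and strong convexity of the function) finishes the argument. The Lipschitz continuity of $f^\dagger$ itself on the interval under consideration follows immediately because its gradient $(f')^{-1}$ is bounded on any bounded range.

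The main obstacle is being careful about the domain on which these statements hold: Assumption \ref{assumption:reg} only guarantees strong convexity and smoothness of $f$ on the bounded interval $[\theta_0,\theta_1]$, so the conjugate manipulations should be restricted to $u$ in the image $f'([\theta_0,\theta_1])$, and the maximizer $v^*(u)$ must be verified to lie in $[\theta_0,\theta_1]$ so that the local hypotheses on $f$ are actually applicable. Once this domain bookkeeping is handled, the duality chain described above goes through unchanged, and a reference to a standard text such as Hiriart-Urruty and Lemar\'echal can be cited to dispatch it concisely.
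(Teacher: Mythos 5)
Your argument is correct, but note that the paper does not actually prove this lemma at all: its ``proof'' is a one-line citation to \cite{zhou2018fenchel}, so your self-contained derivation is a genuinely different (and more informative) route. Your chain --- unique maximizer $v^*(u)=(f')^{-1}(u)$, the identity $(f^\dagger)'=(f')^{-1}$, and then inverting the two gradient inequalities --- is the standard duality argument, and your attention to the domain issue (Assumption \ref{assumption:reg} only holds on $[\theta_0,\theta_1]$, so one must work with $u\in f'([\theta_0,\theta_1])$, where the maximizer automatically lies in $[\theta_0,\theta_1]$) is exactly the bookkeeping the paper glosses over. One caveat worth flagging: your proof of the $1/L_0$-strong convexity of $f^\dagger$ uses the step ``$|u-w|\le L_0|a-b|$ together with monotonicity implies $(u-w)(a-b)\ge (1/L_0)(u-w)^2$,'' which is valid only because the problem is one-dimensional, where monotonicity forces $u-w$ and $a-b$ to have the same sign so that the product equals the product of absolute values. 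In higher dimensions this implication fails and one needs the co-coercivity inequality $\langle \nabla f(a)-\nabla f(b),a-b\rangle\ge (1/L_0)\|\nabla f(a)-\nabla f(b)\|^2$ (Baillon--Haddad), which requires convexity plus Lipschitz gradient, not merely Lipschitz plus monotone. Since the divergence generator $f$ here is a scalar function on $[\theta_0,\theta_1]$, your argument goes through as written; if you want the proof to be robust to the general statement of the lemma, replace that step with the co-coercivity inequality. The final remark that $f^\dagger$ is Lipschitz because its derivative $(f')^{-1}$ is bounded (by $\theta_1$) on the relevant range is also correct and is precisely how the Lipschitz continuity of $f^\dagger$ is used elsewhere in the paper.
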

\begin{proof}
See \cite{zhou2018fenchel} for a detailed proof. 
\end{proof}

\section{Experiment details}

 To evaluate the performance of our mechanism on the MNIST and CIFAR-10 test dataset, we first observe that for high-dimensional data, the optimization task in step 1 may fail to converge to the global (or a high-quality local) optimum. Adopting a fixed form of $\hat{t}$ can still guarantee incentive properties of our mechanism and also consumes less time. Thus, we skip Step 1 in Algorithms 1, 2, and instead adopt $\hat{t}$ from the existing literature.

\subsection{Evaluation with ground-truth verification}

To estimate distributions w.r.t. images, we borrow a practical trick as implemented in \cite{nowozin2016f}: let's denote a public discriminator as $D$ which has been pre-trained on corresponding training dataset.  Given a batch of clean (ground-truth) images $\{x_i \}_{i=1}^n$, agent $\mathbf{A}$'s corresponding untruthful reports $\{\tilde{x}_i \}_{i=1}^n$, the score of $\mathbf{A}$'s reports is calculated by: 
\[
S(\{\tilde{x}_i \}_{i=1}^n, \{x_i \}_{i=1}^n)=a-\dfrac{b}{n}\cdot \sum_{i=1}^n\Big[\hat{\tran}(D(x_i))-f^{\dag}(\hat{\tran}(D(\tilde{x}_i)))\Big]
\]

\subsection{Evaluation without ground-truth verification}
Suppose we have access to a batch of peer reported images $\{\bar{x}_i \}_{i=1}^n$, agent $\mathbf{A}$'s corresponding untruthful reports $\{\tilde{x}_i \}_{i=1}^n$. For $\PP_n=\{\tilde{x}_i\}_{i=1}^n$, $\QQ_n=\{\bar{x}_i\}_{i=1}^n$, we use $(D(\tilde{x}_i)+ D(\bar{x}_i))/2$ to estimate the distribution $x\thicksim \PP \oplus \QQ$, and  $D(\tilde{x}_i) \cdot D(\bar{x}_i)$ is the estimation of $x\thicksim \PP \times \QQ$. The score of $\mathbf{A}$'s reports is calculated by: 
\[
S(\{\tilde{x}_i \}_{i=1}^n, \{\bar{x}_i \}_{i=1}^n)=a+\dfrac{b}{n}\cdot \sum_{i=1}^n\Big[\hat{\tran}\Big(\dfrac{D(\tilde{x}_i)+ D(\bar{x}_i)}{2} \Big)-f^{\dag}(\hat{\tran}(D(\tilde{x}_i) \cdot D(\bar{x}_i)))\Big]
\]

\subsection{Computing infrastructure}
In our experiments, we use a GPU cluster (8 TITAN V GPUs and 16 GeForce GTX 1080 GPUs) for training and evaluation.

\end{document}